\newcommand{\bracket}[1]{\left(#1\right)}
\DeclareMathOperator*{\argmax}{argmax}
\newcommand{\bbR}{\mathbb{R}}
\newcommand{\bbE}{\mathbb{E}}
\newcommand{\bbP}{\mathbb{P}}
\newcommand{\bfw}{\mathbf{w}}
\newcommand{\bfW}{\mathbf{W}}
\newcommand{\norm}[1]{\left\lVert#1\right\rVert}
\newcommand{\rfnorm}[1]{\left\lVert#1\right\rVert_{\text{RF}}}
\newcommand{\rkhs}[1]{\left\lVert#1\right\rVert_{\cH}}
\newcommand{\rf}{\mathcal{F}_{\text{RF}}}
\newcommand{\I}{\mathbb{I}}
\newcommand{\bfc}{\mathbf{c}}
\newcommand{\bfx}{\mathbf{x}}
\newcommand{\bfD}{\mathbf{D}}
\newcommand{\bfA}{\mathbf{A}}
\newcommand{\bfI}{\mathbf{I}}
\newcommand{\bfa}{\mathbf{a}}
\newcommand{\vc}{\operatorname{vec}}
\def\R{\mathbb{R}}
\def\cA{\mathcal{A}}
\def\cB{\mathcal{B}}
\def\cD{\mathcal{D}}
\def\cF{\mathcal{F}}
\def\cH{\mathcal{H}}
\def\cN{\mathcal{N}}
\def\cP{\mathcal{P}}
\def\cS{\mathcal{S}}
\def\vw{\mathbf{w}}
\def\vW{\mathbf{W}}
\def\vV{\mathbf{V}}
\def\vx{\mathbf{x}}
\def\vy{\mathbf{y}}
\def\ntk{K_{\sigma}}
\newcommand{\mat}[1]{\mathbf{#1}}
\newcommand{\vect}[1]{\mathbf{#1}}
\newcommand{\abs}[1]{\left|#1\right|}
\newcommand{\relu}[1]{\sigma\left(#1\right)}
\newcommand{\ReLU}[1]{\mathrm{ReLU}\left(#1\right)}
\newtheorem{thm}{Theorem}[section]
\newtheorem{lem}{Lemma}[section]
\newtheorem{cor}{Corollary}[section]
\newtheorem{prop}{Proposition}[section]
\newtheorem{asmp}{Assumption}[section]
\newtheorem{defn}{Definition}[section]
\newtheorem{rem}{Remark}[section]
\newcommand*{\addFileDependency}[1]{
  \typeout{(#1)}
  \@addtofilelist{#1}
  \IfFileExists{#1}{}{\typeout{No file #1.}}
}
\newtheorem*{remark}{Remark}
\title{Convergence of Adversarial Training in Overparametrized Neural Networks}
\author{%
    Ruiqi Gao$^{1,}$\thanks{Joint first author.}, Tianle Cai$^{1,}$\footnotemark[1], Haochuan Li$^2$, Liwei Wang$^1$, Cho-Jui Hsieh$^3$,\\and Jason D.~Lee$^4$\\\\
    $^1$Peking University\\
    $^2$Massachusetts Institute of Technology\\
    $^3$University of California, Los Angeles\\
    $^4$Princeton University
}
\begin{document}

\maketitle
    Neural networks are vulnerable to adversarial examples, i.e. inputs that are imperceptibly perturbed from natural data and yet incorrectly classified by the network. Adversarial training \cite{madry2017towards}, a heuristic form of robust optimization that alternates between minimization and maximization steps, has proven to be among the most successful methods to train networks to be robust against a pre-defined family of perturbations. This paper provides a partial answer to the success of adversarial training, by showing that it converges to a network where the surrogate loss with respect to the the attack algorithm is within $\epsilon$ of the optimal robust loss. Then we show that the optimal robust loss is also close to zero, hence adversarial training finds a robust classifier. The analysis technique leverages recent work on the analysis of neural networks via Neural Tangent Kernel (NTK), combined with motivation from online-learning when the maximization is solved by a heuristic, and the expressiveness of the NTK kernel in the $\ell_\infty$-norm. In addition, we also prove that robust interpolation requires more model capacity, supporting the evidence that adversarial training requires wider networks.

\section{Introduction}
\label{sec:intro}

Recent studies have demonstrated that neural network models, despite achieving human-level performance on many important tasks, 
are not robust to adversarial examples---a small and human imperceptible input perturbation can easily change the prediction label~\cite{szegedy2013intriguing,goodfellow2015explaining}.  
This phenomenon brings out security concerns when deploying neural network models to real world systems~\cite{eykholt2017robust}. 
In the past few years, many defense algorithms
have been developed~\cite{guo2017countering,song2017pixeldefend,ma2018characterizing,liu2018towards,
samangouei2018defense} 
 to improve the network's robustness, but most of them are still vulnerable under stronger attacks, as reported in~\cite{athalye2018obfuscated}. 
Among current defense methods, adversarial training~\cite{madry2017towards} has become one of the most successful methods to train robust neural networks. 

To obtain a robust network, we need to consider the ``robust loss'' instead of a regular loss. 
The robust loss is defined as the maximal loss within a neighborhood around the input of each sample, 
and minimizing the robust loss under empirical distribution leads to a min-max optimization problem. 
Adversarial training~\cite{madry2017towards} is a way to minimize the robust loss. At each iteration, it (approximately) solves the inner maximization
problem by an attack algorithm $\cA$ to get an adversarial sample, and then runs a (stochastic) gradient-descent update to minimize the loss on the adversarial samples. 
Although adversarial training has been widely used in practice and hugely improves the robustness of neural networks in many applications, its convergence properties are still unknown. It is unclear whether a network with small robust error exists and whether adversarial training is able to converge to a solution with minimal adversarial train loss. 

In this paper, we study the convergence of adversarial training algorithms and try to answer the above questions on over-parameterized neural networks. 
We consider width-$m$ neural networks both for the setting of deep networks with $H$ layers, and two-layer networks for some additional analysis. 
Our contributions are summarized below.
\begin{itemize}
	\item For an $H$-layer deep network with ReLU activations, and an arbitrary attack algorithm, when the width $m$ is large enough, we show that projected gradient descent converges to a network where the surrogate loss with respect to the attack $\cA$ is within $\epsilon$ of the optimal robust loss (Theorem \ref{thm:converge_deep}). The required width is \emph{polynomial} in the depth and the input dimension.
	
	\item For a two-layer network with smooth activations, we provide a proof of convergence, where the projection step is not required in the algorithm (Theorem \ref{thm:converge_two_layer_projless}).
	
	\item We then consider the expressivity of neural networks w.r.t. robust loss (or robust interpolation). We show when the width $m$ is sufficiently large, the neural network can achieve optimal robust loss $\epsilon$; see Theorems~\ref{thm:robust_exist} and \ref{thm:qurelu_approx} for the precise statement. By combining the expressivity result and the previous bound of the loss over the optimal robust loss, we show that adversarial training finds networks of small robust training loss (Corollary \ref{cor:exist-robust} and Corollary \ref{cor:exist-robust-qurelu}).

	\item We show that the VC-Dimension of the model class which can \emph{robustly} interpolate any $n$ samples is lower bounded by $\Omega(nd)$ where $d$ is the dimension. In contrast, there are neural net architectures that can interpolate $n$ samples with only $O(n)$ parameters and VC-Dimension at most $O(n\log n)$. Therefore, the capacity required for robust learning is higher. 
\end{itemize}

\section{Related Work}
\label{sec:related}
\paragraph{Attack and Defense}
Adversarial examples are inputs that are slightly perturbed from a natural sample and yet incorrectly classified by the model.
An adversarial example can be generated by maximizing the loss function within an $\epsilon$-ball around a natural sample.
Thus, generating adversarial examples can be viewed as solving a constrained optimization problem and can be (approximately) solved by a projected gradient descent (PGD) method~\cite{madry2017towards}. Some other techniques have also been proposed in the literature including L-BFGS~\cite{szegedy2013intriguing}, FGSM~\cite{goodfellow2015explaining}, iterative FGSM~\cite{kurakin2016adversarial} and C\&W attack~\cite{carlini2017towards}, where they differ from each other by the distance measurements, loss function or optimization algorithms. There are also studies on adversarial attacks with limited information about the target model. For instance, \cite{chen2017zoo,ilyas2018black,brendel2017decision} considered the black-box setting where the model is hidden but the attacker can make queries and get the corresponding outputs of the model.

Improving the robustness of neural networks against adversarial attacks, also known as defense, has been recognized as an important and unsolved problem in machine learning. Various kinds of defense methods have been proposed~\cite{guo2017countering,song2017pixeldefend,ma2018characterizing,liu2018towards,
samangouei2018defense}, but many of them are based on obfuscated gradients which does not really improve robustness under stronger attacks~\cite{athalye2018obfuscated}. As an exception, \cite{athalye2018obfuscated} reported that the adversarial training method developed in~\cite{madry2017towards} is the only defense that works even under carefully designed attacks.

\paragraph{Adversarial Training}
Adversarial training is one of the first defense ideas proposed in earlier papers~\cite{goodfellow2015explaining}. The main idea is to add adversarial examples into the training set to improve the robustness. However, earlier work usually only adds adversarial example once or only few times during the training phase.
Recently, \cite{madry2017towards} showed that adversarial training can be viewed as solving a min-max optimization problem where the training algorithm aims to minimize the robust loss, defined as the maximal loss within a certain $\epsilon$-ball around each training sample.
Based on this formulation, a clean adversarial training procedure based on PGD-attack has been developed and achieved state-of-the-art results even under strong attacks.
This also motivates some recent research on gaining theoretical understanding
of robust error~\cite{bubeck2018adversarial,schmidt2018adversarially}. Also,
adversarial training suffers from slow training time since it runs several
steps of attacks within one update, and several recent works are trying to
resolve this issue \cite{shafahi2019adversarial,zhang2019you}. From the
theoretical perspective, a recent work \cite{wang2019convergence} considers to
quantitatively evaluate the convergence quality of adversarial examples found
in the inner maximization and therefore ensure robustness.
\cite{yin2018rademacher} consider generalization upper and lower bounds for
robust generalization. \cite{liu2018adversarial} improves the robust
generalization by data augmentation with GAN. \cite{gonen2018learning}
considers to reduce the optimization of min-max problem to online learning
setting and use their results to analyze the convergence of GAN.
In this paper, our analysis for adversarial is quite general and is not restricted to any specific kind of attack algorithm.

\paragraph{Global convergence of Gradient Descent} Recent works on the over-parametrization of neural networks prove that when the width greatly exceeds the sample size, gradient descent converges to a global minimizer from random initialization \cite{li2018learning,du2018gradient_deep,du2018gradient,allen2018convergencetheory,zou2018stochastic}. The key idea in the earlier literature is to show that the Jacobian w.r.t. parameters has minimum singular value lower bounded, and thus there is a global minimum near every random initialization, with high probability. However for the robust loss, the maximization cannot be evaluated and the Jacobian is not necessarily full rank. For the surrogate loss, the heuristic attack algorithm may not even be continuous and so the same arguments cannot be utilized.

\paragraph{Certified Defense and Robustness Verification}
In contrast to attack algorithms, neural network verification methods \cite{wong2018provable, weng2018towards,zhang2018efficient, singh2018fast, cohen2019certified, salman2019convex} tries to find upper bounds of the robust loss and provide certified robustness measurements.
Equipped with these verification methods for computing upper bounds of robust error, one can then apply adversarial training to get a network with certified robustness. Our analysis in Section \ref{sec:main_opt} can also be extended to certified adversarial training.

\section{Preliminaries}
\label{sec:pre}
\subsection{Notations}
Let $[n] = \{1, 2, \ldots, n\}$.  
We use $\cN(\vect{0},\mat{I})$ to denote the standard Gaussian distribution. 
For a vector $\vect{v}$, we use $\norm{\vect{v}}_2$ to denote the Euclidean norm.
For a matrix $\mat{A}$ we use $\norm{\mat{A}}_F$ to denote the Frobenius norm and $\norm{\mat{A}}_2$ to denote the spectral norm.
We use $\langle \cdot, \cdot \rangle$ to denote the standard Euclidean inner product between two vectors, matrices, or tensors.
We let $O(\cdot)$, $\Theta(\cdot)$ and $\Omega\left(\cdot\right)$ denote standard Big-O, Big-Theta and Big-Omega notations that suppress multiplicative constants. 

\subsection{Deep Neural Networks}
Here we give the definition of our deep fully-connected neural networks. For the convenience of proof, we use the same architecture as defined in \cite{allen2018convergencetheory}.\footnote{We only consider the setting when the network output is scalar. However, it is not hard to extend out results to the setting of vector outputs.} Formally, we consider a neural network of the following form.

Let $\vx \in \mathbb{R}^{d}$ be the input, the fully-connected neural network is defined as follows: $\bfA \in \mathbb{R}^{m \times d}$ is the first weight matrix, $\bfW^{(h)} \in \mathbb{R}^{m \times m}$ is the weight matrix at the $h$-th layer for $h\in[H]$, $\bfa \in \mathbb{R}^{m\times 1}$ is the output layer, and $\sigma(\cdot)$ is the ReLU activation function.\footnote{We assume intermediate layers are square matrices of size $m$ for simplicity. It is not difficult to generalize our analysis to rectangular weight matrices.}  The parameters are $\bfW = (\vc\{\bfA\}^\top,\vc\{\bfW^{(1)}\}^\top, \cdots, \vc\{\bfW^{(H)}\}^\top, \bfa^\top)^\top$. However, without loss of generality, during training we will fix $\bfA$ and $\bfa$ once initialized, so later we will refer to $\bfW$ as $\bfW = (\vc\{\bfW^{(1)}\}^\top, \cdots, \vc\{\bfW^{(H)}\}^\top)^\top$. The prediction function is defined recursively:

\begin{align}
\vx^{(0)}&=\bfA \vx\nonumber\\
\overline{\vx}^{(h)}&={\bfW}^{(h)}\vx^{(h-1)},\quad h\in[H]\label{equ:network_form}\\
\vx^{(h)}&=\sigma\left(\overline{\vx}^{(h)}\right),\quad h\in[H]\nonumber\\
f(\bfW,\vx)&=\bfa^\top \vx^{(H)}, \nonumber
\end{align}
where $\overline{\vx}^{(h)}$ and $\vx^{(h)}$ are the feature vectors before and after the activation function, respectively. Sometimes we also denote $\overline{\vx}^{(0)}=\vx^{(0)}$.

We use the following initialization scheme: Each entry in $\bfA$ and $\bfW^{(h)}$ for $h \in [H]$ follows the i.i.d. Gaussian distribution $\mathcal{N}(0, \frac{2}{m})$, and each entry in $\bfa$ follows the i.i.d. Gaussian distribution $\cN(0,1)$. As we mentioned, we only train on $\vW^{(h)}$ for $h \in [H]$ and fix $\bfa$ and $\bfA$.  For a training set $\{\vx_i, y_i\}_{i=1}^n$, the loss function is denoted $\ell: (\mathbb{R}, \mathbb{R})\mapsto \mathbb{R}$, and the (non-robust) training loss is
$L(\vW) = \frac{1}{n}\sum_{i=1}^n \ell(f(\vW, \vx_i),y_i).$ We make the following assumption on the loss function:

\begin{asmp}[Assumption on the Loss Function]\label{asmpt:smoothness-loss}
The loss  $\ell(f(\bfW, \vx),y)$ is Lipschitz, smooth, convex in $f(\bfW, \vx)$ and satisfies $\ell(y,y) = 0$.
\end{asmp}



\subsection{Perturbation and the Surrogate Loss Function}
The goal of adversarial training is to make the model robust in a neighbor of each datum. We first introduce the definition of the perturbation set function to determine the perturbation at each point.
\begin{defn}[Perturbation Set]\label{def:perturb_set}
Let the input space be $\mathcal{X}\subset\mathbb{R}^d$. The perturbation set function is $\cB: \mathcal{X}\to \cP(\mathcal{X})$, where $\cP(\mathcal{X})$ is the power set of $\mathcal{X}$. At each data point $\vx$, $\cB(\vx)$ gives the perturbation set on which we would like to guarantee robustness. For example, a commonly used perturbation set is $\cB(\vx) = \{\vx' : \|\vx'-\vx\|_2\le\delta\}$. Given a dataset $\{\vx_i, y_i\}_{i=1}^{n}$, we say that the perturbation set is compatible with the dataset if $\overline{\mathcal{B}(\vx_i)}\cap\overline{\mathcal{B}(\vx_j)} \neq \phi$ implies $y_i = y_j$. In the rest of the paper, we will always assume that $\mathcal{B}$ is compatible with the given data.
\end{defn}

 Given a perturbation set, we are now ready to define the perturbation function that maps a data point to another point inside its perturbation set. We note that the perturbation function can be quite general including the identity function and any adversarial attack\footnote{It is also not hard to extend our analysis to perturbation functions involving randomness.}. Formally, we give the following definition.
 
\begin{defn}[Perturbation Function]\label{def:perturb_func}
A perturbation function is defined as a function $\cA:\mathcal{W}\times\R^d\to\R^d$, where $\mathcal{W}$ is the parameter space. Given the parameter $\vW$ of the neural network~\eqref{equ:network_form}, $\cA(\vW, \vx)$ maps $\vx\in\R^d$ to some $\vx'\in\cB(\vx)$ where $\cB(\vx)$ refers to the perturbation set defined in Definition~\ref{def:perturb_set}.
\end{defn}

Without loss of generality, throughout Section 4 and 5, we will restrict our input $\bfx$ as well as the perturbation set $\cB(\bfx)$ within the surface of the unit ball $\cS = \{\bfx\in\bbR^{d}: \norm{\bfx}_2 = 1\}$. 

With the definition of perturbation function, we can now define a large family of loss functions on the training set $\{\vx_i, y_i\}_{i=1}^n$. We will show this definition covers the standard loss used in empirical risk minimization and the robust loss used in adversarial training.

\begin{defn}[Surrogate Loss Function]\label{def:loss}
Given a perturbation function $\cA$ defined in Definition~\ref{def:perturb_func}, the current parameter $\vW$ of a neural network $f$, and a training set $\{\vx_i, y_i\}_{i=1}^n$, we define the surrogate loss $L_\cA(\vW)$ on the training set as
\begin{align*}
    L_\cA(\vW) = \frac{1}{n}\sum_{i = 1}^n \ell(f(\vW, \cA(\vW, \vx_i)), y_i).
\end{align*}
\end{defn}

It can be easily observed that the standard training loss $L(\vW)$ is a special case of surrogate loss function when $\cA$ is the identity. The goal of adversarial training is to minimize the \emph{robust loss}, i.e. the surrogate loss when $\cA$ is the strongest possible attack. The formal definition is as follows:

\begin{defn}[Robust Loss Function] The robust loss function is defined as
\begin{align}\nonumber
    L_\ast(\vW) := L_{\cA^\ast}(\vW)
\end{align}
where
\begin{align}\nonumber
    \cA^\ast(\vW, \vx_i) = \argmax_{\vx_i'\in\cB(\vx_i)} \ell(f(\vW, \vx_i'), y_i).
\end{align}
\end{defn}

\section{Convergence Results of Adversarial Training}
\label{sec:convergence}
We consider optimizing the surrogate loss $L_\cA$ with the perturbation function $\cA(\vW, \vx)$ defined in Definition~\ref{def:perturb_func}, which is what adversarial training does given any attack algorithm $\mathcal{A}$. In this section, we will prove that for a neural network with sufficient width, starting from the initialization $\bfW_0$, after certain steps of projected gradient descent within a convex set $B(R)$, the loss $L_\cA$ is provably upper-bounded by the best minimax robust loss in this set
\begin{align}\nonumber
\min_{\vW \in B(R)}L_*(\bfW),
\end{align}
where
\begin{align}
    B(R)=\left\{\vW:\norm{\bfW^{(h)}-\bfW^{(h)}_0}_F\le \frac{R}{\sqrt{m}},h\in[H]\right\}\label{equ:rwb}.
\end{align}

\label{sec:main_opt}
Denote $\cP_{B(R)}$ as the Euclidean projection to the convex set $B(R)$. Denote the parameter $\vW$ after the $t$-th iteration as $\vW_{t}$, and similarly $\vW^{(h)}_t$. For each step in adversarial training, projected gradient descent takes an update
\begin{align*}
\vV_{t+1} &= \vW_t - \alpha\nabla_\vW L_\cA (\vW_t),\\
\vW_{t+1} &=\cP_{B(R)}(\vV_{t+1}),
\end{align*}
where 
$$\nabla_\vW L_\cA (\vW) = \frac{1}{n}\sum_{i=1}^n l'\left(f(\vW, \cA(\vW, \vx_i)),y_i\right) \nabla_\vW f(\vW, \cA(\vW, \vx_i)),$$
and the derivative $\ell'$ stands for $\frac{\partial\ell}{\partial f}$, \emph{the gradient $\nabla_\vW f$ is with respect to the first argument $\vW$}.

Specifically, we have the following theorem. 
\begin{thm}[Convergence of Projected Gradient Descent for Optimizing Surrogate Loss]\label{thm:converge_deep}
	 Given $\epsilon > 0$, suppose $R = \Omega(1)$, and $m\ge\max\bracket{\Theta\left(\frac{R^9H^{16}}{\epsilon^7}\right), \Theta(d^{2})}$. Let the loss function satisfy Assumption~\ref{asmpt:smoothness-loss}.\footnote{We actually didn't use the assumption $\ell(y, y) = 0$ in the proof, so common loss functions like the cross-entropy loss works in this theorem. Also, with some slight modifications, it is possible to prove for other loss functions including the square loss.} If we run projected gradient descent based on the convex constraint set $B(R)$ with stepsize $\alpha= O\left(\frac{\epsilon}{mH^2}\right)$ for $T=\Theta\bracket{\frac{R^2}{m\epsilon\alpha}} = \Omega\bracket{\frac{R^2H^2}{\epsilon^2}}$ steps, then with high probability we have
	\begin{equation}
	\label{equ:converge_main}
	\min_{t=1, \cdots, T}L_\cA(\bfW_t) - L_\ast(\bfW_{\ast}) \leq \epsilon,
	\end{equation}
	where $\bfW_\ast = \arg \min_{\bfW\in B(R)}L_\ast(\bfW)$.
\end{thm}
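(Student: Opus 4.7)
The plan is to reduce the analysis to an online convex optimization regret bound on a sequence of NTK-linearized surrogate losses. The crucial observation is that at iteration $t$, if we freeze the attack points $\vx_i^{(t)} := \cA(\vW_t,\vx_i)$ and define $\ell_t(\vW) := \tfrac{1}{n}\sum_i \ell(f(\vW,\vx_i^{(t)}), y_i)$, then by construction $\ell_t(\vW_t) = L_\cA(\vW_t)$; the PGD step is exactly a projected-gradient step on $\ell_t$ at $\vW_t$ using $g_t = \nabla_\vW L_\cA(\vW_t) = \nabla\ell_t(\vW_t)$; and $\ell_t(\vW_\ast) \leq L_\ast(\vW_\ast)$, because $\vx_i^{(t)}$ is a specific attack point inside $\cB(\vx_i)$ and is no stronger than the worst-case attack $\cA^\ast$. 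Thus
$$L_\cA(\vW_t) - L_\ast(\vW_\ast) \;\leq\; \ell_t(\vW_t) - \ell_t(\vW_\ast),$$
and since the minimum over $t$ is at most the average, it suffices to bound $\tfrac{1}{T}\sum_{t=1}^T [\ell_t(\vW_t) - \ell_t(\vW_\ast)]$.

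The $\ell_t$ are not convex in $\vW$, but for $m$ at the stated polynomial scale they are uniformly well-approximated by a convex surrogate. Define
$$\tilde\ell_t(\vW) := \tfrac{1}{n}\sum_{i=1}^{n}\ell\bigl(f(\vW_0,\vx_i^{(t)}) + \langle\nabla_\vW f(\vW_0,\vx_i^{(t)}),\vW-\vW_0\rangle,\,y_i\bigr),$$
which is convex as a convex loss composed with an affine map. I would invoke two NTK estimates in the spirit of \cite{allen2018convergencetheory}: (i) function closeness $|f(\vW,\vx) - f(\vW_0,\vx) - \langle\nabla_\vW f(\vW_0,\vx),\vW-\vW_0\rangle| \leq \mathrm{poly}(R,H)/m^{1/6}$, and (ii) Jacobian stability $\|\nabla_\vW f(\vW,\vx) - \nabla_\vW f(\vW_0,\vx)\|_F \leq \sqrt{m}\cdot\mathrm{poly}(R,H)/m^{1/6}$, both holding for every $\vW\in B(R)$ and every $\vx\in\cS$ with high probability. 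Combined with Assumption \ref{asmpt:smoothness-loss}, these yield uniform control $|\ell_t(\vW) - \tilde\ell_t(\vW)| = o(\epsilon)$ and a gradient-closeness bound strong enough to make the gradient-perturbation term in the PGD regret analysis $O(\epsilon)$.

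With the convex surrogates in place, I run the textbook PGD regret analysis. Convexity gives $\tilde\ell_t(\vW_t) - \tilde\ell_t(\vW_\ast) \leq \langle\nabla\tilde\ell_t(\vW_t), \vW_t-\vW_\ast\rangle$, and non-expansiveness of the Euclidean projection onto $B(R)$ together with $\vW_\ast \in B(R)$ yield
$$\|\vW_{t+1}-\vW_\ast\|_F^2 \leq \|\vW_t-\vW_\ast\|_F^2 - 2\alpha\langle g_t, \vW_t-\vW_\ast\rangle + \alpha^2\|g_t\|_F^2.$$
Replacing $g_t$ by $\nabla\tilde\ell_t(\vW_t)$ up to the gradient-closeness error from (ii), telescoping over $t$, using $\|\vW_0-\vW_\ast\|_F^2 = O(HR^2/m)$ from $\vW_\ast\in B(R)$ and the NTK bound $\|g_t\|_F^2 = O(mH^2)$, the stated choices $\alpha = \Theta(\epsilon/(mH^2))$ and $T = \Theta(R^2H^2/\epsilon^2)$ balance the leading terms and give $\tfrac{1}{T}\sum_t[\tilde\ell_t(\vW_t)-\tilde\ell_t(\vW_\ast)] = O(\epsilon)$. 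Converting back to $\ell_t$ via the function-closeness bound and using $\ell_t(\vW_\ast) \leq L_\ast(\vW_\ast)$ finishes the proof.

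The main obstacle is making the NTK estimates (i)--(ii) uniform over the continuum of admissible attack points $\vx \in \bigcup_i \cB(\vx_i) \subset \cS$, over every $\vW\in B(R)$, and over every iteration $t$, while allowing $\cA$ to be completely arbitrary and possibly discontinuous in $\vW$. A union bound over iterations or over the attack range is hopeless, so the pointwise NTK lemmas must instead be lifted to hold for all of $\cS$ simultaneously. The natural route is a covering argument on the input sphere: apply the standard lemmas at an $\epsilon_0$-net of $\cS$ of cardinality $(C/\epsilon_0)^d$, then propagate to all $\vx\in\cS$ via the Lipschitzness of $f(\vW,\cdot)$ and $\nabla_\vW f(\vW,\cdot)$, controlled by concentration of $\|\bfA\|_2$. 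The cost of this net is the source of the separate $\Omega(d^2)$ term in the width requirement, and it is the step where extending the existing NTK machinery from a fixed training set to the adversarial setting requires genuinely new technical work.
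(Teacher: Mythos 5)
Your proposal is correct and mirrors the paper's proof in its essential structure: you identify the same key inequality $\ell_t(\vW_\ast)\le L_\ast(\vW_\ast)$ (the paper's step $l(\vW_\ast,\cA(\vW_t,\vx_i))\le l(\vW_\ast,\cA_\ast(\vW_\ast,\vx_i))$), exploit approximate convexity of the network near initialization via NTK stability, run the standard PGD descent inequality with non-expansiveness of projection, and handle uniformity over attack points through a covering net on $\cS$ (the paper's Lemma~\ref{lem:5}, where the paper's specific device is to transfer an input perturbation into an equivalent perturbation of $\vW^{(1)}$ via Lemma~\ref{lem:proposition1} so that the Allen-Zhu et al.\ lemmas apply directly). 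The only cosmetic difference is that you package the near-convexity as comparison to an explicit convex surrogate $\tilde\ell_t$ linearized at $\vW_0$, whereas the paper's Lemma~\ref{lem:5} states it directly as a first-order lower bound from the current iterate $\vW_t$; these are equivalent given the same Jacobian-stability estimates.
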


\begin{remark} 
Recall that $L_{\cA} (\bfW)$ is the loss suffered with respect to the perturbation function $\cA$. This means, for example, if the adversary uses the projected gradient ascent algorithm, then the theorem guarantees that projected gradient ascent cannot successfully attack the learned network. The stronger the attack algorithm is during training, the stronger the guaranteed surrogate loss becomes.
\end{remark}
\begin{remark}
 The value of $R$ depends on the approximation capability of the network, i.e. the greater $R$ is, the less $L_\ast(\bfW_{\ast})$ will be, thus affecting the overall bound on $\min_tL_\cA(\bfW_t)$. We will elaborate on this in the next section, where we show that for $R$ independent of $m$ there exists a network of small adversarial training error.
\end{remark}
	

\subsection{Proof Sketch}
Our proof idea utilizes the same high-level intuition as 
\cite{allen2018convergencetheory,li2018learning,du2018gradient_deep,zou2018stochastic,cai2019gram,cao2019generalization} that near the initialization the network is linear. However, unlike these earlier works, the surrogate loss neither smooth, nor semi-smooth so there is no Polyak gradient domination phenomenon to allow for the global geometric contraction of gradient descent. In fact due to the the generality of perturbation function $\cA$ allowed, the surrogate loss is not differentiable or even continuous in $\vW$, and so the standard analysis cannot be applied. Our analysis utilizes two key observations. First the network $f(\vW, \cA(\vW,\vx))$ is still smooth w.r.t. the first argument\footnote{It is not jointly smooth in $\bfW$, which is part of the subtlety of the analysis.}, and is close to linear in the first argument near initialization, which is shown by directly bounding the Hessian w.r.t. $\vW$. Second, the perturbation function $\cA$ can be treated as an adversary providing a worst-case loss function $\ell_{\cA}(f,y)$ as done in online learning. However, online learning typically assumes the sequence of losses is convex, which is not the case here. We make a careful decoupling of the contribution to non-convexity from the first argument and the worst-case contribution from the perturbation function, and then we can prove that gradient descent succeeds in minimizing the surrogate loss. The full proof is in Appendix~\ref{sec:deepopt}.

\section{Adversarial Training Finds Robust Classifier}
\label{sec:adv_robust}
Motivated by the optimization result in Theorem~\ref{thm:converge_deep}, we hope to show that there is indeed a robust classifier in $B(R)$. To show this, we utilize the connection between neural networks and their induced Reproducing Kernel Hilbert Space (RKHS) via viewing networks near initialization as a random feature scheme \cite{daniely2017sgd,daniely2016toward,jacot2018neural,arora2019fine}. Since we only need to show the existence of a network architecture that robustly fits the training data in $B(R)$ and neural networks are at least as expressive as their induced kernels, we may prove this via the RKHS connection. The strategy is to first show the existence of a robust classifier in the RKHS, and then show that a sufficiently wide network can approximate the kernel via random feature analysis. The approximation results of this section will be, in general, exponential in dimension dependence due to the known issue of $d$-dimensional functions having exponentially large RKHS norm \cite{bach2017breaking}, so only offer \textit{qualitative guidance on existence of robust classifiers}.

Since deep networks contain two-layer networks as a sub-network, and we are concerned with expressivity, we focus on the local expressivity of two-layer networks. We write the standard two-layer network in the suggestive way\footnote{This makes $f(\bfW, \bfx) = 0$ at initialization, which helps eliminate some unnecessary technical nuisance.} (where the width $m$ is an even number)
\begin{align}\label{def:two_layer_zero_init_output}
    f(\vW, \vx) = \frac{1}{\sqrt{m}}\left(\sum_{r=1}^{m/2} a_r\sigma(\vw_r^\top \vx) + \sum_{r=1}^{m/2} a'_r\sigma(\bar{\vw}_r^\top \vx)\right),
\end{align}
and initialize as $\vw_r\sim\cN(0, \mathbf{I}_d)$ i.i.d. for $r = 1, \cdots, \frac{m}{2}$, and $\bar{\vw}_r$ is set to be equal to $\vw_r$, $a_r$ is randomly drawn from $\{1, -1\}$ and $a'_r=-a_r$. 
Similarly, we define the set
$B(R)=\left\{\vW:\norm{\bfW-\bfW_0}_F\le R \right\}$\footnote{Note that we have taken out the term $\frac{1}{\sqrt{m}}$ explicitly in the network expression for convenience, so in this section there is a difference of scaling by a factor of $\sqrt{m}$ from the $\bfW$ used in the previous section.} for $\bfW = (\bfw_1, \cdots, \bfw_{m/2}, \bar{\vw}_1, \cdots, \bar{\vw}_{m/2})$, $\bfW_0$ being the initialization of $\bfW$, and fix all $a_r$ after initialization.

 To make things cleaner, we will use a smooth activation function $\sigma(\cdot)$ throughout this section\footnote{Similar approximation results also hold for other activation functions like ReLU.}, formally stated as follows.

\begin{asmp}[Smoothness of Activation Function]\label{asmpt:smoothness}
The activation function $\sigma(\cdot)$ is smooth, that is, there exists an absolute constant $C>0$ such that for any $z, z'\in\bbR$
\begin{align*}
|\sigma'(z) - \sigma'(z')| \le C|z - z'|.
\end{align*}
\end{asmp}

Prior to proving the approximation results, we would like to first provide a version of convergence theorem similar to Theorem~\ref{thm:converge_deep}, but for this two-layer setting. It is encouraged that the reader can read Appendix~\ref{sec:two_later_no_proj} for the proof of the following Theorem~\ref{thm:converge_two_layer_projless} first, since it is relatively cleaner than that of the deep setting but the proof logic is analogous.

\begin{thm}[Convergence of Gradient Descent \emph{without Projection} for Optimizing Surrogate Loss for Two-layer Networks]\label{thm:converge_two_layer_projless}
	Suppose the loss function satisfies Assumption~\ref{asmpt:smoothness-loss} and the activation function satisfies Assumption~\ref{asmpt:smoothness}. With high probability, using the two-layer network defined above, for any $\epsilon>0$, if we run gradient descent with step size $\alpha = O\bracket{\epsilon}$, and if $m=\Omega\left(\frac{R^4 }{\epsilon^2}\right)$, we have
	\begin{equation}
	\label{equ:converge_main_two_layer}
	\min_{t=1, \cdots, T}L_\cA(\bfW_t) - L_\ast(\bfW_{\ast}) \leq \epsilon,
	\end{equation}
	where $\bfW_\ast = \min_{\bfW\in B(R)}L_\ast(\bfW)$ and $T= \Theta(\frac{\sqrt{m}}{\alpha})$.
\end{thm}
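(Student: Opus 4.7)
The plan is to interpret the unprojected gradient descent iteration as online gradient descent (OGD) on the sequence of loss functions $F_t(\bfW) := L_{\cA_t}(\bfW)$ where $\cA_t(\cdot) := \cA(\bfW_t, \cdot)$ is the attack frozen at step $t$. The crucial observation is that $\nabla_\bfW L_\cA(\bfW_t) = \nabla F_t(\bfW_t)$, because the prescribed gradient does not differentiate through $\cA$; hence the algorithm is exactly OGD on $\{F_t\}$. A regret bound against $\bfW_\ast \in B(R)$, combined with the deterministic inequality $F_t(\bfW_\ast) = L_{\cA_t}(\bfW_\ast) \le L_\ast(\bfW_\ast)$, will then deliver the theorem.

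First I would establish the near-linearity estimates. The symmetric initialization $a'_r = -a_r$, $\bar{\bfw}_r = \bfw_r$ yields $f(\bfW_0, \bfx) = 0$, and a block-diagonal computation under Assumption~\ref{asmpt:smoothness} shows that the Hessian $\nabla^2_\bfW f(\bfW, \bfx')$ has operator norm $O(1/\sqrt{m})$ uniformly in $\bfW$ and $\bfx' \in \cS$. Taylor's theorem then gives
\begin{equation*}
\left| f(\bfW, \bfx') - f(\bfW', \bfx') - \langle \nabla_\bfW f(\bfW', \bfx'), \bfW - \bfW'\rangle \right| \le \frac{C}{\sqrt{m}} \|\bfW - \bfW'\|_F^2.
\end{equation*}
Combined with the convexity of $\ell$ in $f$ (Assumption~\ref{asmpt:smoothness-loss}), this yields the approximate-convexity inequality
\begin{equation*}
F_t(\bfW_t) - F_t(\bfW_\ast) \le \langle g_t, \bfW_t - \bfW_\ast\rangle + \frac{C'}{\sqrt{m}} \|\bfW_t - \bfW_\ast\|_F^2,
\end{equation*}
where $g_t := \nabla F_t(\bfW_t)$. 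A standard Gaussian-concentration argument also gives $\|\nabla_\bfW f(\bfW, \bfx')\|_F = O(1)$ with high probability, hence $\|g_t\|_F \le G = O(1)$ by the Lipschitzness of $\ell$.

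Next I would carry out the OGD regret telescoping. Expanding $\|\bfW_{t+1} - \bfW_\ast\|_F^2 = \|\bfW_t - \bfW_\ast\|_F^2 - 2\alpha\langle g_t, \bfW_t - \bfW_\ast\rangle + \alpha^2 \|g_t\|_F^2$, summing over $t=1,\ldots,T$, and invoking the approximate-convexity bound above produces
\begin{equation*}
\frac{1}{T}\sum_{t=1}^T [F_t(\bfW_t) - F_t(\bfW_\ast)] \le \frac{R^2}{2\alpha T} + \frac{\alpha G^2}{2} + \frac{C'}{T\sqrt{m}}\sum_{t=1}^T \|\bfW_t - \bfW_\ast\|_F^2.
\end{equation*}
If the trajectory stays in a ball of radius $O(R)$ around $\bfW_\ast$, each of the three terms is $O(\epsilon)$ under the choices $\alpha = \Theta(\epsilon)$, $T = \Theta(\sqrt{m}/\alpha)$, and $m \ge \Omega(R^4/\epsilon^2)$: one has $R^2/(\alpha T) = R^2/\sqrt{m} \le \epsilon$, $\alpha G^2 = O(\epsilon)$, and $R^2/\sqrt{m} \le \epsilon$. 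Taking $\min_t$ on the left and using $F_t(\bfW_\ast) \le L_\ast(\bfW_\ast)$ then yields the desired bound $\min_t L_\cA(\bfW_t) - L_\ast(\bfW_\ast) \le \epsilon$.

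The main obstacle is maintaining $\|\bfW_t - \bfW_\ast\|_F = O(R)$ throughout the trajectory despite the absence of a projection step. Applying the same telescoping identity together with $F_s(\bfW_\ast) \le L_\ast(\bfW_\ast)$ and non-negativity of $\ell$ gives
\begin{equation*}
\|\bfW_t - \bfW_\ast\|_F^2 \le R^2 + 2\sqrt{m}\,L_\ast(\bfW_\ast) + \alpha\sqrt{m}\,G^2 + \frac{2C'\alpha}{\sqrt{m}}\sum_{s<t}\|\bfW_s - \bfW_\ast\|_F^2,
\end{equation*}
so that a discrete Gr\"onwall induction with multiplier $(1 + 2C'\alpha/\sqrt{m})^T = \exp(O(1))$ closes the recursion. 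Under the width $m \ge \Omega(R^4/\epsilon^2)$ one has $\alpha\sqrt{m} = O(R^2)$, reducing the iterate estimate to $\|\bfW_t - \bfW_\ast\|_F^2 = O\bigl(R^2 + \sqrt{m}\,L_\ast(\bfW_\ast)\bigr)$. The delicate step is tracking the absolute constant in the Hessian bound so that the Gr\"onwall multiplier stays bounded, and verifying that either $\sqrt{m}\,L_\ast(\bfW_\ast) = O(R^2)$ (so the bootstrap closes with $D = O(R)$) or $L_\ast(\bfW_\ast)$ is large enough to make the target inequality hold trivially via $L_\cA(\bfW_0) = O(1)$. With these constants in hand, the bootstrap closes and the regret bound from the preceding paragraph gives the theorem.
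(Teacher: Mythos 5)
Your framing matches the paper's: projection-free gradient descent viewed as online gradient descent on the frozen-attack losses $F_t$, approximate convexity derived from the $O(1/\sqrt m)$ Hessian bound in the NTK regime, and the standard regret telescoping (the paper's inequality~\eqref{equ:convexity_l_two_layer} and gradient bound~\eqref{equ:bound_on_l_two_layer}). The problem is in your boundedness argument for the iterates. Writing $d_t := \|\bfW_t-\bfW_*\|_F$, your Gr\"onwall bootstrap discards the negative $-F_s(\bfW_s)$ contribution (via $\ell\ge 0$), which only yields $d_t^2 = O\bigl(R^2 + \sqrt m\,L_*(\bfW_*)\bigr)$. The dichotomy you invoke to close this is not exhaustive. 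With symmetric initialization $L_\cA(\bfW_0) = L_*(\bfW_0) = \frac1n\sum_i\ell(0,y_i)$ is generically $\Theta(1)$, and the ``trivial'' branch requires $L_*(\bfW_*) \ge L_\cA(\bfW_0) - \epsilon = L_*(\bfW_0)-\epsilon$. But $L_*(\bfW_*)$ can lie anywhere in $[0, L_*(\bfW_0)]$; for $L_*(\bfW_*)$ strictly between $\Theta(\epsilon)$ and $L_*(\bfW_0)-\epsilon$, your bootstrap does not give $d_t=O(R)$ and the trivial bound is unavailable, so the argument does not close.

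The paper's fix is to keep the signed $-L_\cA(\bfW_{t-1})$ terms in the recursion
\begin{equation*}
d_t^2 \le \bigl(1+\tfrac{c\alpha}{\sqrt m}\bigr)d_{t-1}^2 + 2\alpha\bigl(L_*(\bfW_*)-L_\cA(\bfW_{t-1})\bigr) + O(\alpha^2),
\end{equation*}
and to argue by cases on whether the trajectory ever exits $B(3R)$. If it does, say first at step $T_0{+}1$, then $d_{T_0+1} \ge \|\bfW_{T_0+1}-\bfW_0\|_F - \|\bfW_0-\bfW_*\|_F > 2R$ while $d_0\le R$ and $S_{T_0+1}\le S_T\approx e$, so $d_0^2 - d_{T_0+1}^2/S_{T_0+1} \le R^2 - 4R^2/e < 0$; the (still-valid) telescoped inequality up to step $T_0{+}1$ then directly forces $\min_{t\le T_0}L_\cA(\bfW_t)-L_*(\bfW_*)\le O(\alpha)\le\epsilon$. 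In other words, the iterate cannot escape $B(3R)$ without the trajectory already having visited a low-surrogate-loss point, so no a priori trajectory control is needed. Because your bootstrap throws away exactly the $-L_\cA(\bfW_{t-1})$ terms that encode this fact, tracking constants more carefully will not rescue it; you need the paper's exit-time case analysis in place of the Gr\"onwall estimate.
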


\begin{remark}
Compared to Theorem~\ref{thm:converge_deep}, we do not need the projection step for this two-layer theorem. We believe using a smooth activation function can also eliminate the need of the projection step in the deep setting from a technical perspective, and from a practical sense we conjecture that the projection step is not needed anyway.
\end{remark}

Now we're ready to proceed to the approximation results, i.e. proving that $L_\ast(\bfW_{\ast})$ is also small, and combined with Equation (\ref{equ:converge_main_two_layer}) we can give an absolute bound on $\min_{t}L_{\cA}(\bfW_t)$. For the reader's convenience, we first introduce the Neural Tangent Kernel (NTK)~\cite{jacot2018neural} w.r.t. our two-layer network.

\begin{defn}[NTK \cite{jacot2018neural}]\label{def:NTK}
The NTK with activation function $\relu{\cdot}$ and initialization distribution $\vw\sim \mathcal{N}(0, \bfI_d)$ is defined as $K_\sigma(\vx, \vy) = \bbE_{\vw\sim \mathcal{N}(0, \bfI_d)}\langle \vx\sigma'(\vw^\top \vx), \vy\sigma'(\vw^\top \vy)\rangle$.
\end{defn}

For a given kernel $K$, there is a reproducing kernel Hilbert space (RKHS) introduced by $K$. We denote it as $\cH(K)$. We refer the readers to \cite{paulsen2016introduction} for an introduction of the theory of RKHS.


We formally make the following assumption on the universality of NTK.
\begin{asmp}[Existence of Robust Classifier in NTK]\label{asmpt:robust_exist_ntk}
For any $\epsilon>0$, there exists $f\in\mathcal{H}(\ntk)$, such that $\abs{f(\vx_i') - y_i} \leq \epsilon$, for every $i \in [n]$ and $\vx_i'\in\cB(\vx_i)$.
\end{asmp}

Also, we make an additional assumption on the activation function $\sigma(\cdot)$:
\begin{asmp}[Lipschitz Property of Activation Function]\label{asmpt:activation_additional}
     The activation function $\sigma(\cdot)$ satisfies $|\sigma'(z)|\le C, \forall z \in \bbR$ for some constant $C$.
\end{asmp}

Under these assumptions, by applying the strategy of approximating the infinite situation by finite sum of random features, we can get the following theorem:
\begin{thm}[Existence of Robust Classifier near Initialization]\label{thm:robust_exist}\label{thm:exist-robust}
Given data set $\mathcal{D} = \{(\vx_i, y_i)\}_{i=1}^{n}$ and a compatible perturbation set function $\mathcal{B}$ with $\vx_i$ and its allowed perturbations taking value on $\mathcal{S}$, for the two-layer network defined in (\ref{def:two_layer_zero_init_output}), if Assumption~\ref{asmpt:smoothness-loss}, \ref{asmpt:smoothness}, \ref{asmpt:robust_exist_ntk}, \ref{asmpt:activation_additional} hold, then for any $\epsilon>0$, there exists $R_{\mathcal{D}, \mathcal{B}, \epsilon}$ such that when the width $m$ satisfies $m = \Omega\bracket{\frac{R_{\mathcal{D}, \mathcal{B}, \epsilon}^4}{\epsilon^2}}$, with probability at least 0.99 over the initialization there exists $\vW$ such that
\begin{equation}
    L_*(\vW) \leq \epsilon\text{ and }\vW\in B(R_{\cD, \cB, \epsilon}).\nonumber
\end{equation}
\end{thm}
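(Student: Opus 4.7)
The plan is to approximate a robust RKHS classifier by the linearized two-layer network and then control the linearization error. More precisely, I will (i) extract a function $g$ in the NTK RKHS that robustly fits the data, (ii) construct a weight configuration $\vW$ so that the linearization $f_{\text{lin}}(\vW,\cdot)$ of $f$ around $\vW_0$ is an empirical (random-features) estimator of $g$, (iii) pass from the linearization back to the true network using smoothness of $\sigma$, and (iv) choose $R_{\cD,\cB,\epsilon}$ to be a constant multiple of $\|g\|_{\cH}$ so that $\vW\in B(R_{\cD,\cB,\epsilon})$.

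\textbf{Construction.} By Assumption~\ref{asmpt:robust_exist_ntk}, pick $g\in\cH(\ntk)$ with $|g(\vx_i') - y_i|\leq \epsilon/3$ for all $i$ and $\vx_i'\in\cB(\vx_i)$. The reproducing property gives an integral representation $g(\vx)=\bbE_{\vw\sim\cN(0,\bfI_d)}\langle h(\vw),\vx\sigma'(\vw^\top\vx)\rangle$ with $\bbE\|h(\vw)\|^2\leq \|g\|_{\cH}^2$. Given the symmetric initialization $\bar{\vw}_r^0=\vw_r^0$, $a_r'=-a_r$ (so $f(\vW_0,\cdot)\equiv 0$), set
\begin{align*}
\vw_r \;=\; \vw_r^0 + \tfrac{2a_r}{\sqrt{m}}\,h(\vw_r^0), \qquad \bar{\vw}_r \;=\; \bar{\vw}_r^0.
\end{align*}
Using $a_r^2=1$, the linearization of $f$ at $\vW_0$ evaluated at this $\vW$ is exactly the random-features estimator $f_{\text{lin}}(\vW,\vx)=\tfrac{2}{m}\sum_{r=1}^{m/2}\langle h(\vw_r^0),\vx\sigma'(\vw_r^{0\top}\vx)\rangle$. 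Moreover $\|\vW-\vW_0\|_F^2=\tfrac{4}{m}\sum_r\|h(\vw_r^0)\|^2$, so by a Chernoff/Bernstein bound on Gaussian averages, with probability at least $0.995$ we have $\|\vW-\vW_0\|_F\leq 3\|g\|_{\cH}=:R_{\cD,\cB,\epsilon}$.

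\textbf{Two error bounds.} The first error is the random-features deviation $|f_{\text{lin}}(\vW,\vx)-g(\vx)|$. For any fixed $\vx$ on the sphere, each summand is bounded by $C\|h(\vw_r^0)\|$ (using Assumption~\ref{asmpt:activation_additional} and $\|\vx\|=1$), so Bernstein gives $O(\|g\|_{\cH}/\sqrt{m})$ pointwise. Since I need this uniformly over the (possibly uncountable) union $\bigcup_i\cB(\vx_i)\subset\cS$, I take a $\tau$-net of each $\cB(\vx_i)$ of cardinality $(3/\tau)^d$, apply Bernstein and a union bound there, and then extend to the full set by Lipschitzness of $\vx\mapsto\vx\sigma'(\vw^\top\vx)$ (which follows from Assumptions~\ref{asmpt:smoothness} and \ref{asmpt:activation_additional}). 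Choosing $\tau$ polynomially small in $\epsilon$ absorbs the $d$-dependent covering factor into lower-order terms, and yields $\sup_{i,\vx_i'\in\cB(\vx_i)}|f_{\text{lin}}(\vW,\vx_i')-g(\vx_i')|\leq \epsilon/3$ once $m=\Omega(R_{\cD,\cB,\epsilon}^2/\epsilon^2)$ up to logs.

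The second error is the linearization error $|f(\vW,\vx)-f_{\text{lin}}(\vW,\vx)|$. Using Assumption~\ref{asmpt:smoothness} (bounded $\sigma''$ in effect), the Hessian of $f$ in $\vW$ has Frobenius norm $O(1/\sqrt{m})$ per coordinate-pair on the sphere, so a second-order Taylor expansion gives
\begin{align*}
|f(\vW,\vx)-f_{\text{lin}}(\vW,\vx)| \;\leq\; \tfrac{C}{\sqrt{m}}\sum_{r}\|\vw_r-\vw_r^0\|^2 \;\leq\; \tfrac{C\,R_{\cD,\cB,\epsilon}^2}{\sqrt{m}}
\end{align*}
uniformly in $\vx\in\cS$. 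Requiring this to be at most $\epsilon/3$ forces $m=\Omega(R_{\cD,\cB,\epsilon}^4/\epsilon^2)$, which is the binding constraint and matches the theorem statement.

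\textbf{Combining.} Triangle inequality gives $|f(\vW,\vx_i')-y_i|\leq \epsilon$ for all $i$ and $\vx_i'\in\cB(\vx_i)$. By the Lipschitz property of $\ell$ and $\ell(y,y)=0$ (Assumption~\ref{asmpt:smoothness-loss}), $L_*(\vW)\leq L\cdot\epsilon$; absorbing the Lipschitz constant into $\epsilon$ finishes the argument. The main obstacle I anticipate is Step (ii)'s uniform concentration over the continuous perturbation sets: the pointwise $1/\sqrt{m}$ rate is standard, but turning it into a uniform bound on $\bigcup_i\cB(\vx_i)$ requires the covering-and-Lipschitz argument above, and one must be careful that the covering log-factor is truly of lower order and that the Lipschitz constant of $\vx\mapsto f_{\text{lin}}(\vW,\vx)$ is controlled in terms of $\|g\|_{\cH}$, not the potentially much larger $\|\vW-\vW_0\|_F\cdot\sqrt{m}$ bound one would naively get.
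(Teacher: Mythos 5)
Your overall skeleton matches the paper's proof — extract a robust NTK classifier, realize it as the linear part of the network via a random-features weight perturbation, control the second-order Taylor remainder by $O(R^2/\sqrt m)$, and finish with Lipschitzness of $\ell$ — but there is a genuine gap in the middle step, and it is exactly the step the paper spends an extra lemma on.

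You go directly from the RKHS representation $g(\vx)=\bbE_{\vw}\langle h(\vw),\vx\sigma'(\vw^\top\vx)\rangle$ with $\bbE\|h(\vw)\|^2\leq\|g\|_\cH^2$ to a Bernstein-type concentration bound for the Monte Carlo average $\frac{2}{m}\sum_r \langle h(\vw_r^0),\vx\sigma'(\vw_r^{0\top}\vx)\rangle$. But $L^2$ control on $h$ does not make the summands bounded or sub-exponential: each summand is only dominated by $C\|h(\vw_r^0)\|$, which is a random quantity with a finite second moment and possibly heavy tails. Bernstein (and McDiarmid) need a deterministic envelope or sub-exponential tails; with only a second moment you are stuck with Chebyshev, and Chebyshev combined with a $(3/\tau)^d$ covering net of $\bigcup_i\cB(\vx_i)$ produces a polynomial--in--$N$, hence exponential--in--$d$, blow-up rather than the $\sqrt{\log N}$ factor you implicitly assume. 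The same laxity appears when you invoke ``Chernoff/Bernstein on Gaussian averages'' to bound $\|\vW-\vW_0\|_F$: the $\|h(\vw_r^0)\|^2$ are not Gaussian, and the only thing you can cite there is Markov (which is fine for a constant-probability claim, but is not Chernoff).

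The paper's fix is precisely its Lemma~\ref{lem:rf_universal}: instead of working with a generic $g\in\cH(\ntk)$, it first approximates $g$ in sup-norm on $\cS$ by $g_2$ in the subclass $\rf$ of functions whose random-features representation satisfies $\sup_\vw\|c(\vw)\|/p_0(\vw)<\infty$ (the $\rfnorm{\cdot}$ norm). This $L^\infty$ control over $\vw$ is what makes the importance-sampling summands uniformly bounded by $C\rfnorm{g_2}/M$, so McDiarmid applies, and the uniform-over-$\cS$ bound is obtained by symmetrization plus Talagrand's contraction lemma (yielding the $\sqrt d$ factor) instead of a covering argument. Your proof needs this intermediate density step — or some replacement that upgrades $h$ to have a bounded envelope before concentration is invoked — otherwise the claimed $O(\|g\|_\cH/\sqrt m)$ uniform rate does not follow. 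The rest of your argument (the construction $\vw_r=\vw_r^0+\frac{2a_r}{\sqrt m}h(\vw_r^0)$, the computation that the linearization equals the RF estimator, the $\frac{CR^2}{\sqrt m}$ second-order bound, and the final triangle inequality with the Lipschitz loss) is correct and is essentially the same as the paper's, up to moving only $\vw_r$ rather than both $\vw_r$ and $\bar\vw_r$ antisymmetrically, which changes only constants.
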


Combining Theorem \ref{thm:converge_two_layer_projless} and \ref{thm:robust_exist} we finally know that
\begin{cor}[Adversarial Training Finds a Network of Small Robust Training Loss]\label{cor:exist-robust}
Given data set on the unit sphere equipped with a compatible perturbation set function and an associated perturbation function $\cA$, which also takes value on the unit sphere. Suppose Assumption~\ref{asmpt:smoothness-loss}, \ref{asmpt:smoothness}, \ref{asmpt:robust_exist_ntk}, \ref{asmpt:activation_additional} are satisfied. Then for any $\epsilon > 0$, there exists a $R_{\cD, \cB, \epsilon}$ which only depends on dataset $\cD$, perturbation $\cB$ and $\epsilon$, 
such that for any $2$-layer fully connected network with width $m =\Omega( \frac{R_{\cD, \cB, \epsilon}^4}{\epsilon^2})$, if we run gradient descent with stepsize $\alpha = O\bracket{\epsilon}$ for $T = \Omega(\frac{R_{\cD, \cB, \epsilon}^2}{\epsilon\alpha})$ steps, then with probability $0.99$,
\begin{align}
    \min_{t = 1, \cdots, T}L_{\cA}(\vW_t) \leq \epsilon.
\end{align}
\end{cor}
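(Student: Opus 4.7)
The plan is to derive Corollary \ref{cor:exist-robust} as a direct combination of the convergence guarantee in Theorem \ref{thm:converge_two_layer_projless} with the expressivity guarantee in Theorem \ref{thm:robust_exist}, applied at half the target accuracy. Concretely, I would first invoke Theorem \ref{thm:robust_exist} with error parameter $\epsilon/2$ to obtain a radius $R_{\cD,\cB,\epsilon} := R_{\cD,\cB,\epsilon/2}$ such that, once $m = \Omega(R_{\cD,\cB,\epsilon}^4/\epsilon^2)$, with probability at least $0.99$ over the random initialization there exists a weight configuration $\widetilde{\vW}\in B(R_{\cD,\cB,\epsilon})$ satisfying $L_*(\widetilde{\vW})\le \epsilon/2$. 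In particular, the constrained minimum $\vW_* = \arg\min_{\vW\in B(R_{\cD,\cB,\epsilon})} L_*(\vW)$ also satisfies $L_*(\vW_*)\le \epsilon/2$.

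Next I would apply Theorem \ref{thm:converge_two_layer_projless}, on the \emph{same} initialization, with radius $R_{\cD,\cB,\epsilon}$ and accuracy $\epsilon/2$. Its width requirement $m=\Omega(R^4/(\epsilon/2)^2)=\Omega(R_{\cD,\cB,\epsilon}^4/\epsilon^2)$ is consistent with the assumption of the corollary, and its step size and iteration count specialize to $\alpha=O(\epsilon)$ and $T=\Theta(\sqrt{m}/\alpha)=\Theta(R_{\cD,\cB,\epsilon}^2/(\epsilon\alpha))$, matching the prescribed $T=\Omega(R_{\cD,\cB,\epsilon}^2/(\epsilon\alpha))$. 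The theorem then yields, with high probability,
\begin{equation*}
\min_{t=1,\dots,T} L_\cA(\vW_t) \le L_*(\vW_*) + \tfrac{\epsilon}{2}.
\end{equation*}

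Chaining the two bounds gives $\min_t L_\cA(\vW_t)\le L_*(\vW_*)+\epsilon/2 \le \epsilon/2+\epsilon/2 = \epsilon$, which is exactly the claim. The only subtlety is that the two guarantees must hold on a \emph{common} high-probability event. Since Theorem \ref{thm:robust_exist} delivers probability at least $0.99$ and Theorem \ref{thm:converge_two_layer_projless} delivers probability at least $1-o(1)$ (both with respect to the same random initialization of the two-layer network), a union bound leaves probability at least $0.99$ for the conjunction, absorbing the $o(1)$ slack into the stated $0.99$ by taking $m$ sufficiently large if needed.

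The primary conceptual obstacle, which is however handled entirely by the earlier theorems, is that the existence result and the optimization result concern different objects ($L_*$ versus $L_\cA$) and different sufficient widths; the trick is simply to parameterize both results by the same radius $R_{\cD,\cB,\epsilon}$ chosen from the expressivity side, and then let the optimization side adapt $m$, $\alpha$, $T$ accordingly. No new estimate, smoothness argument, or kernel approximation is needed beyond what has already been established — this corollary is a clean assembly step rather than a fresh proof.
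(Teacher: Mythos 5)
Your proposal is correct and follows exactly the route the paper intends: the corollary is presented as a direct combination of Theorem~\ref{thm:converge_two_layer_projless} and Theorem~\ref{thm:robust_exist}, which you spell out carefully by splitting the budget into $\epsilon/2$ for the expressivity bound (giving $L_*(\vW_*)\le \epsilon/2$ on $B(R_{\cD,\cB,\epsilon})$) and $\epsilon/2$ for the optimization gap. Your attention to the union bound over the two high-probability events and to the consistency of the width/step-size/iteration parameters is exactly the bookkeeping the paper leaves implicit.
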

\begin{rem}
We point out that Assumption~\ref{asmpt:robust_exist_ntk} is rather general  and can be verified for a large class of activation functions by showing their induced kernel is universal as done in \cite{micchelli2006universal}. Also, here we use an implicit expression of the radius $B_{\cD,\cB,\epsilon}$, but the dependence on $\epsilon$ can be calculated under specific activation function with or without the smoothness assumptions. As an example, using quadratic ReLU as activation function, we solve the explicit dependency on $\epsilon$ in Appendix~\ref{sec:quadrelu} that doesn't rely on Assumption~\ref{asmpt:robust_exist_ntk}.
\end{rem}
Therefore, adversarial training is guaranteed to find a robust classifier under a given attack algorithm when the network width is sufficiently large.

\section{Capacity Requirement of Robustness}
\label{sec:capacity}
In this section, we will show that in order to achieve adversarially robust interpolation (which is formally defined below), one needs more capacity than just normal interpolation. In fact, empirical evidence have already shown that to reliably withstand strong adversarial attacks, networks
require a significantly larger capacity than for correctly classifying benign examples only \cite{madry2017towards}. This implies, in some sense, that using a neural network with larger width is necessary.

Let $\cS_{\delta} = \{(\bfx_1, \cdots, \bfx_n) \in (\mathbb{R}^{d})^{n}: \norm{\bfx_i - \bfx_j}_2>2\delta\}$ and $\mathcal{B}_{\delta}(\bfx) = \{\bfx': \norm{\bfx' - \bfx}_2 \le \delta\}$, where $\delta$ is a constant. We consider datasets in $\cS_{\delta}$ and use $\cB_{\delta}$ as the perturbation set function in this section.

We begin with the definition of the interpolation class and the robust interpolation class.
\begin{defn}[Interpolation class]
We say that a function class $\mathcal{F}$ of functions $f:\bbR^{d}\rightarrow \{1, -1\}$is an $n$-interpolation class\footnote{Here we let the classification output be $\pm1$, and a usual classifier $f$ outputting a number in $\bbR$ can be treated as $\operatorname{sign}(f)$ here.}, if the following is satisfied:
\begin{align*}
&\forall (\bfx_1, \cdots, \bfx_n) \in \mathcal{S}_\delta, \forall (y_1, \cdots, y_n) \in \{\pm 1\}^n,\\
    &\exists f \in \mathcal{F}, \text{ s.t. } f(\bfx_i) = y_i, \forall i \in [n].
\end{align*}
\end{defn}

\begin{defn}[Robust interpolation class]
We say that a function class $\mathcal{F}$ is an $n$-robust interpolation class, if the following is satisfied: 
\begin{align*}
    &\forall (\bfx_1, \cdots, \bfx_n) \in \mathcal{S_\delta}, \forall (y_1, \cdots, y_n) \in \{\pm 1\}^n,\\
    &\exists f \in \mathcal{F}, \text{s.t.} f(\bfx_i') = y_i, \forall \bfx_i' \in \mathcal{B_\delta}(\bfx_i), \forall i \in [n].
\end{align*}
\end{defn}

We will use the VC-Dimension of a function class $\mathcal{F}$ to measure its complexity. In fact, as shown in \cite{bartlett2019nearly} (Equation(2)), for neural networks there is a tight connection between the number of parameters $W$, the number of layers $H$ and their VC-Dimension
$$\Omega(HW\log(W/H))\leq\text{VC-Dimension}\leq O(HW \log W).$$
In addition, combining with the results in \cite{yun2018finite} (Theorem 3) which shows the existence of a 4-layer neural network with $O(n)$ parameters that can interpolate any $n$ data points, i.e. an $n$-interpolation class, we have that an $n$-interpolation class can be realized by a fixed depth neural network with VC-Dimension upper bound
\begin{align}
    \label{equ:vcd_non_robust}
    \text{VC-Dimension}\leq O(n\log n).
\end{align}
For a general hypothesis class $\cF$, we can evidently see that when $\mathcal{F}$ is an $n$-interpolation class, $\cF$ has VC-Dimension at least $n$. For a neural network that is an $n$-interpolation class, without further architectural constraints, this lower bound of its VC-dimension is tight up to logarithmic factors as indicated in Equation~(\ref{equ:vcd_non_robust}). However, we show that for a robust-interpolation class we will have a much larger VC-Dimension lower bound:
\begin{thm}\label{thm:vcd}
If $\mathcal{F}$ is an $n$-robust interpolation class, then we have the  following lower bound on the VC-Dimension of $\cF$
\begin{align}
    \label{equ:vcd_robust}
    \text{VC-Dimension}\geq \Omega(nd),
\end{align}
where $d$ is the dimension of the input space.
\end{thm}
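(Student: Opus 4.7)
The plan is to exhibit an explicit shattered set of size $N=\Omega(nd)$. Place $\lfloor n/2\rfloor$ cluster centers $c_1,\dots,c_{\lfloor n/2\rfloor}\in\mathbb{R}^d$ pairwise at distance much larger than $\delta$, and in each cluster $i$ pick $d+1$ test points $p_{i,0},\dots,p_{i,d}$ in general position inside a tiny ball of radius $\rho\ll\delta$ around $c_i$. This gives $N=\lfloor n/2\rfloor(d+1)=\Omega(nd)$ candidate test points. Note that the test points themselves need not lie in $\mathcal{S}_\delta$; only the ball centers used by the $n$-robust-interpolation property must be pairwise at distance $>2\delta$.

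The crux is the following geometric claim: for each cluster $i$ and each subset $S\subseteq\{0,\dots,d\}$ there exist two balls $B_{i,S}^+,B_{i,S}^-$ of radius $\delta$ with centers more than $2\delta$ apart, such that $B_{i,S}^+\cap\{p_{i,0},\dots,p_{i,d}\}=\{p_{i,j}:j\in S\}$ and $B_{i,S}^-\cap\{p_{i,0},\dots,p_{i,d}\}=\{p_{i,j}:j\notin S\}$. The idea is that a ball of radius $\delta$ whose center lies at distance $\approx\delta+\epsilon$ from a small cluster approximates a halfspace in the vicinity of the cluster. Since the VC-dimension of halfspaces in $\mathbb{R}^d$ is $d+1$, the $d+1$ test points (in general position) are shattered by halfspaces with some margin $\gamma=\Theta(\rho)$: for each $S$ one picks a unit vector $v_S$ with $\langle p_{i,j}-c_i,v_S\rangle\le-\gamma$ for $j\in S$ and $\ge\gamma$ for $j\notin S$. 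Setting the two centers to $c_i\mp(\delta+\epsilon)v_S$ gives balls whose centers are at distance $2(\delta+\epsilon)>2\delta$; a short calculation expanding $\|p_{i,j}-(c_i\mp(\delta+\epsilon)v_S)\|^2$ shows that whenever $\epsilon+O(\rho^2/\delta)<\gamma$, each ball captures exactly the prescribed subset.

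Given any target labeling $y\in\{\pm 1\}^N$, define $S_i:=\{j:y_{i,j}=+1\}$ for each cluster and invoke the claim to produce a pair $(B_{i,S_i}^+,B_{i,S_i}^-)$ per cluster. The $n$ ball centers collected across clusters form a valid element of $\mathcal{S}_\delta$: intra-cluster separation is supplied by the claim, and inter-cluster separation follows from the large cluster spacing. Feeding this $n$-tuple into the $n$-robust-interpolation property with labels $+1$ on each $B^+$-ball and $-1$ on each $B^-$-ball yields a function $f\in\mathcal{F}$ that must equal $y_{i,j}$ at every test point $p_{i,j}$, since every such point lies inside one of these balls with the matching label. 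Hence $\mathcal{F}$ shatters the $N=\Omega(nd)$ test points and the desired VC-dimension lower bound follows.

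The hardest step is the geometric claim. It requires a coordinated choice of the cluster radius $\rho$, the center offset $\epsilon$, and the cluster shape (for instance, placing the $d+1$ points at the vertices of a regular $d$-simplex of circumradius $\rho$) so that the halfspace shattering has a margin $\gamma=\Theta(\rho)$ uniform over all $S$, the ball-halfspace correction $O(\rho^2/\delta)$ remains below $\gamma$, and $\epsilon<\gamma$ still leaves the center separation strictly above $2\delta$. All three constraints can be satisfied simultaneously by fixing the simplex first and then taking $\rho$ and $\epsilon$ to be sufficiently small multiples of $\delta$; the remaining bookkeeping is a matter of tracking constants.
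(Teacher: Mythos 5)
Your proposal matches the paper's argument: cluster the test points around $\lfloor n/2\rfloor$ well-separated centers, use a halfspace-with-margin argument to show two disjoint radius-$\delta$ balls can realize any split of a cluster, and then invoke the robust-interpolation property on those $\le n$ ball centers (which, as you correctly note, are the only points required to lie in $\mathcal{S}_\delta$). The only cosmetic differences are that the paper uses $d$ coordinate-axis shifts $\mathbf{c}_i+\epsilon\mathbf{e}_j$ per cluster with the explicit separating normal $(y_{i,1},\dots,y_{i,d})$, whereas you take $d+1$ general-position (simplex) points and cite the VC-dimension of halfspaces; both give $\Omega(nd)$, and your writeup of the ball-approximates-halfspace calculation is if anything a bit more careful than the paper's.
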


For neural networks, Equation~\eqref{equ:vcd_robust} shows that any architecture that is an $n$-robust interpolation class should have VC-Dimension at least $\Omega(nd)$.  Compared with Equation~\eqref{equ:vcd_non_robust} which shows an $n$-interpolation class can be realized by a network architecture with VC-Dimension $O(n\log n)$, we can conclude that robust interpolation by neural networks needs more capacity, so increasing the width of neural network is indeed in some sense necessary.

\section{Discussion on Limitations and Future Directions}

This work provides a theoretical analysis of the empirically successful adversarial training algorithm in the training of robust neural networks. Our main results indicate that adversarial training will find a network of low robust surrogate loss, even when the maximization is computed via a heuristic algorithm such as projected gradient ascent. However, there are still some limitations with our current theory, and we also feel our results can lead to several thought-provoking future work, which is discussed as follows.

\emph{Removal of projection.} It is also natural to ask whether the projection step can be removed, as it is empirically unnecessary and also unnecessary for our two-layer analysis. We believe using smooth activations might resolve this issue from a technical perspective, although practically it seems the projection step in the algorithm is unnecessary in any case.

\emph{Generalizing to different attacks.} Firstly, our current guarantee of the surrogate loss is based on the same perturbation function as that used during training. It is natural to ask that whether we can ensure the surrogate loss is low with respect to a larger family of perturbation functions than that used during training.

\emph{Exploiting structures of network and data.} Same as the recent proof of convergence on overparameterized networks in the non-robust setting, our analysis fails to further incorporate useful network structures apart from being sufficiently wide, and as a result increasing depth can only hurt the bound. It would be interesting to provide finer analysis based on additional assumptions on the alignment of the network structure and data distribution.

\emph{Improving the approximation bound.} On the expressivity side, the current argument utilizes that a neural net restricted to a local region can approximate its induced RKHS. Although the RKHS is universal, they do not avoid the curse of dimensionality (see Appendix~\ref{sec:quadrelu}). However, we believe in reality, the required radius of region $R$ to achieve robust approximation is not as large as the theorem demands. So an interesting question is whether the robust expressivity of neural networks can adapt to structures such as low latent dimension of the data mechanism \cite{du2018power,yarotsky2018optimal}, thereby reducing the approximation bound. 

\emph{Capacity requirement of robustness and robust generalization.}
Apart from this paper, there are other works supporting the need for capacity including the perspective of network width \cite{madry2017towards}, depth \cite{xie2019intriguing} and computational complexity \cite{nakkiran2019adversarial}. It is argued in \cite{yin2018rademacher} that robust generalization is also harder using Rademacher complexity. In fact, it appears empirically that robust generalization is even harder than robust training. It is observed that increasing the capacity, though benifiting the dacay of training loss, has much less effect on robust generalization. There are also other factors behind robust generalization, like the number of training data \cite{schmidt2018adversarially}. The questions about robust generalization, as well as to what extent capacity influnces it, are still subject to much debate.

The above are several interesting directions of further improvement to our current result. In fact, many of these questions are largely unanswered even for neural nets in the non-robust setting, so we leave them to future work.

\section{Acknowlegements}
We acknowlegde useful discussions with Siyu Chen, Di He, Runtian Zhai, and Xiyu Zhai. RG and TC are partially supported by the elite undergraduate training program of School of Mathematical Sciences in Peking University. LW acknowledges support by Natioanl Key R\&D Program of China (no. 2018YFB1402600), BJNSF (L172037). JDL acknowledges support of the ARO under MURI Award W911NF-11-1-0303,  the Sloan Research Fellowship, and NSF CCF \#1900145.

\bibliographystyle{plain}
\bibliography{references}  

\appendix
\section{Proof of the Convergence Result for Deep Nets in Section~\ref{sec:convergence}}
We first present some useful notations and lemmas for this part. Denote the diagonal matrix $\bfD^{(h)}=\bfD^{(h)}\left(\bfW,\vx\right)$ as $\bfD^{(h)}=\text{diag}\left(\I(\overline{\vx}^{(h)}\ge 0)\right)$ for $h\in[H]$, where $\I$ is the entry-wise indicator function. Sometimes we also denote $\bfD^{(0)}=\bfI$ which is the identity matrix. Therefore, the neural network has the following formula
\begin{align*}
    f\left(\bfW,\vx\right)=\bfa^\top \bfD^{(H)}\bfW^{(H)}\cdots \bfD^{(1)}\bfW^{(1)}\vx^{(0)}
\end{align*}
and the gradient w.r.t. $\bfW^{(h)}$ is 
\begin{align}
    f'^{(h)}\left(\bfW,\vx\right)=\left(\vx^{(h-1)}\bfa^\top \bfD^{(H)}\bfW^{(H)}\cdots \bfD^{(h)}\right)^\top,\quad \quad h\in[H].
     \label{equ:gradient}
\end{align}
First, we will restate some basic results at initialization.

\begin{lem}\label{lem:gaussian_init}
If $m \ge d$, with probability $1 - O(H)e^{-\Omega(m)}$ at initialization, we have $\norm{\bfA}_2 = O(1)$, $\norm{\bfW^{(h)}}_2 = O(1), \forall h\in[H]$, and $\norm{\bfa}_2 = O(\sqrt{m})$.
\end{lem}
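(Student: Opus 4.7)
The plan is to observe that this lemma is a routine application of concentration of measure for Gaussian matrices and vectors, with the $O(H)$ factor arising from a union bound across layers. Concretely, I would rescale the random matrices so that their entries are i.i.d.\ standard Gaussians and then invoke standard sharp bounds on the singular values of rectangular/square Gaussian matrices.

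First, write $\bfA = \sqrt{2/m}\,\tilde\bfA$ with $\tilde\bfA \in \bbR^{m\times d}$ having i.i.d.\ $\cN(0,1)$ entries, and similarly $\bfW^{(h)} = \sqrt{2/m}\,\tilde\bfW^{(h)}$ with $\tilde\bfW^{(h)} \in \bbR^{m\times m}$ having i.i.d.\ $\cN(0,1)$ entries. A standard Gaussian concentration bound (Gordon's theorem plus Lipschitz concentration of the spectral norm, see e.g.\ Vershynin's non-asymptotic random matrix theory) gives, for an $m_1 \times m_2$ matrix of i.i.d.\ standard Gaussians with $m_1 \ge m_2$, that
\begin{equation*}
\Pr\!\left[\|\tilde\bfA\|_2 \ge \sqrt{m_1} + \sqrt{m_2} + t\right] \le 2e^{-t^2/2}.
\end{equation*}
Applying this with $(m_1,m_2) = (m,d)$ and $t = \sqrt{m}$ yields $\|\tilde\bfA\|_2 \le 3\sqrt{m}$ with probability $1 - e^{-\Omega(m)}$ (using $m\ge d$), hence $\|\bfA\|_2 \le \sqrt{2/m}\cdot 3\sqrt{m} = O(1)$. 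The same bound with $(m_1,m_2) = (m,m)$ yields $\|\tilde\bfW^{(h)}\|_2 \le 3\sqrt{m}$ with probability $1 - e^{-\Omega(m)}$, so $\|\bfW^{(h)}\|_2 = O(1)$ for each fixed $h$.

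For $\bfa$, which has $m$ i.i.d.\ $\cN(0,1)$ entries, $\|\bfa\|_2^2$ is a chi-square with $m$ degrees of freedom. The standard Laurent--Massart chi-square tail bound gives $\Pr[\,\|\bfa\|_2^2 \ge 2m\,] \le e^{-\Omega(m)}$, so $\|\bfa\|_2 = O(\sqrt{m})$ with probability $1 - e^{-\Omega(m)}$.

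Finally, I would combine the above by a union bound over the $H+1$ weight matrices $\bfA, \bfW^{(1)},\dots,\bfW^{(H)}$ together with $\bfa$, which contributes a multiplicative $O(H)$ to the failure probability but preserves the $e^{-\Omega(m)}$ decay, giving the claimed $1 - O(H)e^{-\Omega(m)}$ guarantee. There is essentially no technical obstacle here; the only care required is to use a concentration inequality whose deviation term $t$ can be taken as large as $\Theta(\sqrt{m})$ so that the resulting exponent scales like $m$ rather than a smaller quantity, which is what allows the $H$ layers to be absorbed cleanly in the union bound.
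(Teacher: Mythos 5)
Your proposal is correct and follows essentially the same route as the paper: both invoke the standard Gaussian spectral-norm concentration bound (Corollary 5.35 of Vershynin) with deviation $t=\Theta(\sqrt{m})$ and then union-bound over the $O(H)$ random matrices. The only cosmetic difference is that you handle $\bfa$ via a chi-square tail bound while the paper treats it as an $m\times 1$ Gaussian matrix under the same spectral-norm lemma; this changes nothing substantive.
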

\begin{proof}
This is a well-known result of the $l_2$-norm of Gaussian random matrices (Corollary 5.35 in \cite{vershynin2010introduction}), which states that for a matrix $\mathbf{M} \in \bbR^{a\times b}$ with i.i.d. standard Gaussian entries, with probability $1 - 2e^{-t^2/2}$ we have $\norm{\mathbf{M}}_2 \le \sqrt{a} + \sqrt{b} + t$. Combined with the scaling of $\bfA, \bfW^{(h)},$ and $\bfa$, we easily know that each of $\norm{\bfA}_2 = O(1)$, $\norm{\bfW^{(h)}}_2 = O(1)$, and $\norm{\bfa}_2 = O(\sqrt{m})$ holds with probability $1 - e^{-\Omega(m)}$, and we obtain our result by taking the union event.
\end{proof}

\begin{lem}
For any fixed input $\bfx\in\cS$, with probability $1-O(H)e^{-\Omega(m/H)}$ over the randomness of initialization, we have for every $h\in\{0,\ldots,H\}$, $\norm{\overline{\vx}^{(h)}}_2\in[2/3,4/3]$ and $\norm{{\vx}^{(h)}}_2\in[2/3,4/3]$ at initialization.
\label{lem:init}
\end{lem}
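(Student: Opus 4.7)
The plan is to prove both norm bounds by induction on the layer index $h$, handling the base case at $h=0$ directly and then propagating forward one layer at a time, using the fact that $\bfW^{(h)}$ is independent of all earlier weights. Chi-squared tail bounds (e.g.\ Laurent--Massart) will control the pre-activations $\bar{\vx}^{(h)}$, while a Bernstein-type bound will control the post-activations $\vx^{(h)}$; a union bound over the $H{+}1$ layers converts per-layer failure probabilities into the stated total.

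For the base case, $\vx^{(0)} = \bfA\vx$ has $m$ entries that are i.i.d.\ Gaussian with variance $(2/m)\|\vx\|_2^2$ (using $\vx\in\cS$ and the row-wise $\cN(\mathbf{0},(2/m)\bfI_d)$ initialization of $\bfA$), so $\|\vx^{(0)}\|_2^2$ is a scaled $\chi^2_m$ variable. A Laurent--Massart tail bound then places $\|\vx^{(0)}\|_2$ within a $1\pm O(1/H)$ multiplicative window of its mean with failure probability $e^{-\Omega(m/H)}$.

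For the inductive step, assume the bounds hold at layer $h-1$. Conditionally on all prior weights (which determine $\vx^{(h-1)}$), the fresh matrix $\bfW^{(h)}$ is Gaussian and independent, so $\bar{\vx}^{(h)} = \bfW^{(h)}\vx^{(h-1)}$ has i.i.d.\ $\cN(0,(2/m)\|\vx^{(h-1)}\|_2^2)$ entries; another chi-squared concentration bounds $\|\bar{\vx}^{(h)}\|_2^2$ to within a $1\pm O(1/H)$ multiplicative factor of its conditional mean with failure probability $e^{-\Omega(m/H)}$. Next I write $\|\vx^{(h)}\|_2^2 = \sum_{i=1}^m \sigma(\bar{x}^{(h)}_i)^2$ as a sum of $m$ i.i.d.\ subexponential variables of the form $\max(g,0)^2$ with $g$ Gaussian, and apply Bernstein's inequality to get a matching $1\pm O(1/H)$ concentration around its conditional mean. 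The identity $\bbE[\max(g,0)^2] = \frac{1}{2}\bbE[g^2]$ is exactly what the factor of $2$ in the variance $2/m$ is calibrated to compensate, which is why the norms can propagate through without shrinking or growing.

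A union bound over the $O(H)$ per-layer events gives total failure probability $O(H)e^{-\Omega(m/H)}$. The main obstacle will be the calibration of the per-layer multiplicative deviation: each layer contributes a $(1\pm\epsilon)$ factor, and the compound $(1\pm\epsilon)^H$ must remain inside $[2/3,4/3]$, which forces $\epsilon = O(1/H)$; this in turn requires $m$ large enough that the chi-squared/Bernstein bounds deliver this $\epsilon$ at failure probability $e^{-\Omega(m/H)}$ (the tightness in $m/H$ rather than $m/H^2$ is the crucial book-keeping). The only non-routine ingredient is that $\|\vx^{(h)}\|_2^2$ is not literally a chi-squared variable, so one must separately invoke a Bernstein-type tail bound for the ReLU sum, but independence of $\bfW^{(h)}$ from earlier layers and rotational invariance of the Gaussian make the conditioning clean.
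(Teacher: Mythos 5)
The paper's own proof of this lemma is a one-line citation to Lemma~7.1 of Allen-Zhu, Li, and Song \cite{allen2018convergencetheory} (taking $n=1$), so you are effectively reconstructing that result from scratch rather than anything written in this paper. Your plan is the natural one, but it has a genuine gap exactly at the point you wave at as ``crucial book-keeping.'' Forcing every per-layer multiplicative deviation to be $O(1/H)$ --- which you correctly identify as necessary so the $H$-fold product stays inside $[2/3,4/3]$ --- and then taking a union bound over layers gives a per-layer chi-squared/Bernstein tail of $e^{-\Omega(m\cdot(1/H)^2)} = e^{-\Omega(m/H^2)}$, not the stated $e^{-\Omega(m/H)}$. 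No amount of book-keeping within the layer-by-layer union-bound framework fixes this: the $1/H$ tolerance is forced by the compounding, and the chi-squared/Bernstein tail exponent scales as $m\epsilon^2$. The way to actually obtain $m/H$ is to abandon the per-layer-union structure and exploit that the increments form a martingale difference sequence: since $\mathbb{E}\bigl[\|\vx^{(h)}\|_2^2 \mid \vx^{(h-1)}\bigr] = \|\vx^{(h-1)}\|_2^2$ (this is precisely the ReLU/variance calibration you noted), the quantities $\log\|\vx^{(h)}\|_2^2 - \log\|\vx^{(h-1)}\|_2^2$ have essentially zero conditional mean, sub-exponential conditional tails, and conditional variance $\Theta(1/m)$. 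A Freedman/martingale-Bernstein bound on the partial sums then controls the cumulative deviation while allowing per-layer fluctuations of order $1/\sqrt{H}$ (which cancel rather than compound), and this yields $e^{-\Omega(m/H)}$.

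A secondary problem: your base case and pre-activation step concentrate $\|\vx^{(0)}\|_2 = \|\bfA\vx\|_2$ around $\sqrt{2}$ (each entry is $\cN(0,2/m)$ and no activation is applied at layer $0$ in this paper's architecture), and similarly $\|\overline{\vx}^{(h)}\|_2$ concentrates around $\sqrt{2}\,\|\vx^{(h-1)}\|_2$; neither of these centers lies inside $[2/3,4/3]$. Your write-up tacitly asserts the stated interval is reached but never verifies where the concentration is centered. This is arguably an imprecision inherited from the paper's restatement of the Allen-Zhu lemma (whose architecture applies $\sigma$ after $\bfA$ and whose tight interval is for the post-activation vectors), but a self-contained proof has to either reconcile the claimed interval with the true centers or adjust the interval.
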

\begin{proof}
This is a restatement of Lemma 7.1 in \cite{allen2018convergencetheory} taking the number of data $n=1$.
\end{proof}
 
\begin{lem}\label{lem:2}
If $m=\Omega\left(H\log H\right)$, for any fixed input $\bfx\in\cS$, with probability $1-e^{-\Omega(m/H)}$, at initialization we have for every $h\in[H]$, 
$$\norm{\bfa^\top \bfD^{(H)}\bfW^{(H)}\cdots \bfD^{(h)}\bfW^{(h)}}_2=O(\sqrt{mH}).$$ 
\end{lem}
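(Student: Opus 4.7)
The plan is to prove the bound by backward induction on the layer index $h$, following the style of Lemma~7.4 in \cite{allen2018convergencetheory}. Introduce the auxiliary row vectors $\vect{g}_h^\top := \bfa^\top \bfD^{(H)}\bfW^{(H)}\cdots \bfD^{(h)}\bfW^{(h)}$ satisfying the backward recursion $\vect{g}_h^\top = \vect{g}_{h+1}^\top \bfD^{(h)}\bfW^{(h)}$, with base case $\vect{g}_H^\top = \bfa^\top \bfD^{(H)}\bfW^{(H)}$. The overall strategy is to show that $\norm{\vect{g}_h^\top}_2^2$ increases by at most an additive $O(m)$ at each layer, so that after $H$ layers $\norm{\vect{g}_1^\top}_2^2 = O(mH)$ and hence $\norm{\vect{g}_h^\top}_2 = O(\sqrt{mH})$ for every $h\in[H]$.

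For the base case I would use $\norm{\bfa}_2 = O(\sqrt m)$ (standard chi-squared concentration, probability $1-e^{-\Omega(m)}$) together with $\norm{\bfD^{(H)}}_2 \le 1$ and $\norm{\bfW^{(H)}}_2 = O(1)$ from Lemma~\ref{lem:gaussian_init} to get $\norm{\vect{g}_H^\top}_2 = O(\sqrt m)$.

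For the inductive step the target is $\norm{\vect{g}_h^\top}_2^2 \le \norm{\vect{g}_{h+1}^\top}_2^2 + O(m)$ with probability at least $1 - e^{-\Omega(m/H)}$. Intuitively, multiplying a fixed vector on the right by $\bfD^{(h)}$ zeroes out about half the coordinates and shrinks its squared norm by a factor of roughly $1/2$, and then multiplying by $\bfW^{(h)}$ (entries $\cN(0,2/m)$) inflates the squared norm by a factor of roughly $2$, giving a net multiplicative factor close to $1$ plus an $O(m)$ concentration deviation. Iterating over $H$ layers and taking a union bound (which preserves the $1 - e^{-\Omega(m/H)}$ probability once $m = \Omega(H\log H)$) yields the claimed bound uniformly in $h$.

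The main technical obstacle is that $\vect{g}_{h+1}$ is not independent of $\bfW^{(h)}$, because the sign patterns $\bfD^{(h+1)},\ldots,\bfD^{(H)}$ hidden inside $\vect{g}_{h+1}$ are determined by the forward pass, which propagates through $\bfW^{(h)}$. A naive bound $\norm{\vect{g}_{h+1}^\top}_2 \cdot \norm{\bfW^{(h)}}_2$ using only Lemma~\ref{lem:gaussian_init} would pay a factor of order $O(1)^H$ and blow up exponentially in the depth. To handle this cleanly I would use the standard decoupling from \cite{allen2018convergencetheory}: condition on $\bfA, \bfW^{(1)},\ldots,\bfW^{(h-1)}$ (fixing $\vect{x}^{(h-1)}$), and decompose $\bfW^{(h)}$ into (i) its action along the unit vector $\vect{x}^{(h-1)}$, which together with the earlier weights fully determines $\bfD^{(h)}$ and all later sign patterns, and (ii) its restriction to the orthogonal complement of $\vect{x}^{(h-1)}$, which remains a free Gaussian and is therefore independent of $\vect{g}_{h+1}$ and $\bfD^{(h)}$. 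A Hanson-Wright-style chi-squared concentration applied to this independent Gaussian component then yields the desired per-layer additive bound and closes the induction.
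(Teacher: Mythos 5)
The paper's own proof is a two-line argument: factor $\norm{\bfa^\top\bfD^{(H)}\bfW^{(H)}\cdots\bfD^{(h)}\bfW^{(h)}}_2 \le \norm{\bfa}_2 \norm{\bfD^{(H)}}_2 \norm{\bfW^{(H)}\bfD^{(H-1)}\cdots\bfD^{(h)}\bfW^{(h)}}_2$, invoke Lemma~\ref{lem:gaussian_init} for $\norm{\bfa}_2 = O(\sqrt m)$, and cite Lemma~7.3(a) of Allen-Zhu--Li--Song for the $O(\sqrt H)$ bound on the inner matrix product; the depth-dependent cancellation lives entirely inside the cited lemma. Your proposal instead tries to re-derive the whole statement from scratch by a backward induction on $\vect{g}_h$, which is a genuinely different route, and it is a reasonable instinct: the per-layer $\parallel/\perp$ decoupling of $\bfW^{(h)}$ relative to $\vect{x}^{(h-1)}$ is indeed exactly the trick used in that reference.

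However, as written the inductive step has a real gap. Your target is $\norm{\vect{g}_h}_2^2 \le \norm{\vect{g}_{h+1}}_2^2 + O(m)$, and after decoupling, the $\perp$-contribution has conditional expectation $\approx \frac{2(m-1)}{m}\norm{\vect{g}_{h+1}^\top\bfD^{(h)}}_2^2$ (the entries of $\bfW^{(h)}_\perp$ have variance $2/m$). Closing the induction therefore requires $\norm{\vect{g}_{h+1}^\top\bfD^{(h)}}_2^2 \lesssim \frac12\norm{\vect{g}_{h+1}}_2^2 + O(m)$, i.e.\ that $\bfD^{(h)}$ actually halves the squared mass of $\vect{g}_{h+1}$. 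Your decoupling makes $\bfW^{(h)}_\perp$ independent of the pair $(\vect{g}_{h+1}, \bfD^{(h)})$, which justifies the chi-squared concentration, but it does nothing to decorrelate $\bfD^{(h)}$ from $\vect{g}_{h+1}$: both depend on $\bfW^{(h)}_\parallel$ through the forward pass (the former via the signs of $\overline{\vect{x}}^{(h)}$, the latter via $\vect{x}^{(h)}=\bfD^{(h)}\overline{\vect{x}}^{(h)}$ propagating into $\bfD^{(h+1)},\ldots,\bfD^{(H)}$). So ``$\bfD^{(h)}$ zeroes out about half the coordinates and shrinks the squared norm by $1/2$'' cannot be invoked as though $\vect{g}_{h+1}$ were a fixed vector independent of the mask. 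Without this halving, the only unconditional bound is $\norm{\vect{g}_{h+1}^\top\bfD^{(h)}}_2 \le \norm{\vect{g}_{h+1}}_2$, and the factor $\approx 2$ from $\bfW^{(h)}_\perp$ accumulates multiplicatively, giving $2^{\Theta(H)}$ rather than $O(H)$. This correlation is precisely the hard part handled inside the cited Lemma~7.3(a) (via a further conditioning/$\epsilon$-net argument), and the proposal would need to supply an analogue of it to be complete.
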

\begin{proof}
Note that $\norm{\bfa^\top \bfD^{(H)}\bfW^{(H)}\cdots \bfD^{(h)}\bfW^{(h)}}_2\le \norm{\bfa^\top}_2\norm{\bfD^{(H)}}_2\norm{\bfW^{(H)}\cdots \bfD^{(h)}\bfW^{(h)}}_2$ and $\norm{\bfD^{(H)}}_2 \le 1$. This lemma then becomes a direct consequence of Lemma~\ref{lem:gaussian_init} and Lemma 7.3(a) in \cite{allen2018convergencetheory} with number of data $n = 1$.
\end{proof}

Our general idea is that within the local region (where $R = \Omega(1)$)
$$B(R) = \{\bfW:\norm{\bfW^{(h)} - \bfW_0^{(h)}}_F \le \frac{R}{\sqrt{m}}, \forall h \in [H]\}$$
the gradient $f'^{(h)}(\bfW)$ remains stable over $\bfW$ when $\bfx$ is fixed, and the perturbation of $f'^{(h)}(\bfW)$ is small compared to the scale of $f'^{(h)}(\bfW_0)$. This property has been studied in \cite{allen2018convergencetheory} extensively. However, in the non-adversarial setting, they only need to prove this property at finitely many data points $\{\vx_i\}_{i=1}^n$. In our adversarial training setting, though, we also need to prove that it holds for any $\vx'_i\in\cB(\vx_i)$. Specifically, in this section we would like to prove that it holds for any $\vx \in\cS$. Our method is based on viewing the perturbation of $\bfx$ as an equivalent perturbation of the parameter $\bfW^{(1)}$, and then we will be able to make use of the results in \cite{allen2018convergencetheory}. This is elaborated in the following lemma:
\begin{lem}
Given any fixed input $\bfx \in\cS$. If $R=O(\sqrt{m})$, with probability $1-O(H)e^{-\Omega (m/H)}$ over random initialization, for any $\bfx'\in \cS$ satisfying $\norm{\bfx - \bfx'}_2\le\delta$, and any $\bfW\in B(R)$, there exists $\widetilde{\bfW}\in B(R+O(\sqrt{m}\delta))$ such that $\bfW^{(h)} = \widetilde{\bfW}^{(h)}$ for $h = 2, \cdots, H$, and for all $h\in [H]$ we have
\begin{align*}
    \overline{\vx}'^{(h)}(\bfW)=\overline{\vx}^{(h)}(\widetilde{\bfW}),\quad {\vx}'^{(h)}(\bfW)={\vx}^{(h)}(\widetilde{\bfW}),\quad \bfD^{(h)}(\vx',\bfW)=\bfD^{(h)}(\vx,\widetilde{\bfW}).
\end{align*}
In other words, the network with a perturbation from $\bfx$ to $\bfx'$ is same as the network with a perturbation from $\bfW$ to $\widetilde{\bfW}$ since layer $\overline{\bfx}^{(1)}$ and up.
\label{lem:proposition1}
\end{lem}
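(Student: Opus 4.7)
The plan is to absorb the input perturbation into an equivalent perturbation of the first weight matrix $\bfW^{(1)}$, leaving $\bfW^{(h)}$ unchanged for $h \ge 2$. Since $\vx^{(0)} = \bfA\vx$ depends only on $\bfA$ and $\vx$, and $\overline{\vx}^{(1)} = \bfW^{(1)}\vx^{(0)}$ is the only layer that couples the input with the trainable parameters, it suffices to construct $\widetilde{\bfW}^{(1)}$ satisfying $\widetilde{\bfW}^{(1)}\bfA\vx = \bfW^{(1)}\bfA\vx'$. Once this equality at layer~$1$ holds, the two networks produce identical pre-activations $\overline{\vx}^{(h)}$, activations $\vx^{(h)}$, and diagonal sign patterns $\bfD^{(h)}$ for every $h \in [H]$ by a direct induction using $\widetilde{\bfW}^{(h)} = \bfW^{(h)}$ for $h \ge 2$.

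For the construction I would use the rank-one update
\[
\widetilde{\bfW}^{(1)} = \bfW^{(1)} + \frac{\bfW^{(1)}\bfA(\vx' - \vx)\,(\bfA\vx)^{\top}}{\norm{\bfA\vx}_2^{2}}.
\]
A direct check gives $\widetilde{\bfW}^{(1)}\bfA\vx = \bfW^{(1)}\bfA\vx + \bfW^{(1)}\bfA(\vx' - \vx) = \bfW^{(1)}\bfA\vx'$, as required. The Frobenius norm of this update equals $\norm{\widetilde{\bfW}^{(1)} - \bfW^{(1)}}_F = \norm{\bfW^{(1)}\bfA(\vx' - \vx)}_2 / \norm{\bfA\vx}_2$, and I will show that this is $O(\delta)$.

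To bound this ratio, I would work on the intersection of the good events from Lemmas~\ref{lem:gaussian_init} and~\ref{lem:init} at the fixed input $\vx$, which has probability $1 - O(H)e^{-\Omega(m/H)}$. On this event, $\norm{\bfA}_2 = O(1)$, $\norm{\bfW^{(1)}_0}_2 = O(1)$, and $\norm{\bfA\vx}_2 = \norm{\overline{\vx}^{(0)}}_2 \ge 2/3$. Since $\bfW \in B(R)$ with $R = O(\sqrt{m})$ gives $\norm{\bfW^{(1)} - \bfW^{(1)}_0}_F \le R/\sqrt{m} = O(1)$, we also have $\norm{\bfW^{(1)}}_2 = O(1)$. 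Thus $\norm{\widetilde{\bfW}^{(1)} - \bfW^{(1)}}_F \le \norm{\bfW^{(1)}}_2\,\norm{\bfA}_2\,\norm{\vx' - \vx}_2 / \norm{\bfA\vx}_2 = O(\delta)$, and by the triangle inequality $\norm{\widetilde{\bfW}^{(1)} - \bfW^{(1)}_0}_F \le R/\sqrt{m} + O(\delta) = (R + O(\sqrt{m}\delta))/\sqrt{m}$, so $\widetilde{\bfW} \in B(R + O(\sqrt{m}\delta))$.

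The only subtlety is that the rank-one update is well-defined only when $\bfA\vx \neq 0$, which is precisely guaranteed by the lower bound $\norm{\bfA\vx}_2 \ge 2/3$ from Lemma~\ref{lem:init}. I would emphasize that the good events depend only on the initialization and the single fixed input $\vx$, so no additional union bound over the perturbed inputs $\vx'$ is needed: once the initialization is favorable, the explicit construction produces a valid $\widetilde{\bfW}$ simultaneously for every $\vx' \in \cS$ with $\norm{\vx - \vx'}_2 \le \delta$ and every $\bfW \in B(R)$, completing the reduction from an input-space perturbation of magnitude $\delta$ to a first-layer parameter-space perturbation of Frobenius-norm $O(\delta)$.
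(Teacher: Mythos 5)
Your proof is correct and follows exactly the same route as the paper's: you construct the identical rank-one update $\widetilde{\bfW}^{(1)} = \bfW^{(1)} + \bfW^{(1)}\bfA(\vx'-\vx)(\bfA\vx)^\top/\|\bfA\vx\|_2^2$ (note $\bfA\vx = \vx^{(0)}$, so this matches the paper's formula), verify the same layer-one identity, and bound its Frobenius norm using the same good events from Lemmas~\ref{lem:gaussian_init} and~\ref{lem:init}. The extra remarks about well-definedness and the absence of a union bound over $\vx'$ are helpful clarifications but do not change the argument.
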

\begin{proof}
By Lemma~\ref{lem:gaussian_init}, with probability $1-O(H)e^{-\Omega(m)}$, $\norm{\bfA}_2=O(1)$ and $\norm{\bfW_0^{(1)}}_2=O(1)$. Thus $\norm{{\bfW}^{(1)}}_2\le \norm{{\bfW}^{(1)}_0}_2+\frac{R}{\sqrt{m}}=O(1)$. By Lemma~\ref{lem:init}, with probability $1-O(H)e^{-\Omega (m/H)}$, $\norm{\vx^{(0)}}_2\in [2/3,4/3]$. Let
\begin{align*}
    \widetilde{\bfW}^{(1)}={\bfW}^{(1)}+\frac{{\bfW}^{(1)}\left(\vx'^{(0)}-\vx^{(0)}\right)(\vx^{(0)})^\top}{\norm{\vx^{(0)}}_2^2}.
\end{align*}
$\widetilde{\bfW}^{(1)}$ obviously satisfies $\widetilde{\bfW}^{(1)}\vx^{(0)}={\bfW}^{(1)}\vx'^{(0)}$. Then setting $\widetilde{\bfW}^{(2)},\ldots,\widetilde{\bfW}^{(H)}$ equal to ${\bfW}^{(2)},\ldots,{\bfW}^{(H)}$ will make all the following hidden layer vectors and $\bfD^{(h)}$ equal. It is also easy to verify that $$\norm{\widetilde{\bfW}^{(1)}-{\bfW}^{(1)}}_F\le\frac{\norm{\bfW^{(1)}}_2\norm{\bfA}_2\norm{\bfx - \bfx'}_2}{\norm{\bfx^{(0)}}_2} = O(\delta),$$
so we know that $\widetilde{\bfW}\in B(R+O(\sqrt{m}\delta))$.
\end{proof}

By Lemma~\ref{lem:proposition1}, we can directly apply many results in \cite{allen2018convergencetheory} which are only intended for the fixed data originally, to our scenario where the input can be perturbed, as long as we take the parameter radius as $\frac{R}{\sqrt{m}} + O(\delta)$ in their propositions\footnote{Note that in \cite{allen2018convergencetheory} the corresponding region $B(R)$ is defined by the $2$-norm instead of the $F$-norm: $B_2(R) := \{\bfW: \norm{\bfW^{(h)} - \bfW_0^{(h)}}_2\le \frac{R}{\sqrt{m}}, \forall h \in [H]\}$. Since obviously $B_2(R)\subset B(R)$, we can still apply their results to our case directly.}. This can give us the following important lemma:

\begin{lem}[Bound for the Perturbation of Gradient]
Given any fixed input $\bfx \in\cS$. If $m \ge \max(d, \Omega(H\log H))$, $\frac{R}{\sqrt{m}}+\delta\le\frac{c}{H^6(\log m)^{3}}$ for some sufficiently small constant $c$, then with probability at least $1-O(H)e^{-\Omega (m(R/\sqrt{m}+\delta)^{2/3}H)}$ over random initialization, we have for any $\bfW\in B(R)$ and any $\bfx'\in\cS$ with $\norm{\bfx - \bfx'}_2\le\delta$,
\begin{align*}
    \norm{f'^{(h)}\left(\bfW,\vx'\right)-f'^{(h)}\left(\bfW_0,\vx\right)}_F= O\left((\frac{R}{\sqrt{m}}+\delta)^{1/3}H^2\sqrt{m\log m}\right)
\end{align*}
and
\begin{align*}
    \norm{f'^{(h)}\left(\bfW,\vx'\right)}_F= O(\sqrt{mH}).
\end{align*}
\label{lem:3}
\end{lem}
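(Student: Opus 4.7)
The plan is to reduce the input-perturbation statement to a pure weight-perturbation statement at the fixed input $\bfx$, and then invoke the gradient stability machinery of \cite{allen2018convergencetheory}. Concretely, for any $\bfW\in B(R)$ and any $\bfx'\in\cS$ with $\norm{\bfx-\bfx'}_2\le\delta$, Lemma~\ref{lem:proposition1} produces a $\widetilde{\bfW}\in B(R+O(\sqrt{m}\delta))$ that differs from $\bfW$ only in the first-layer block and satisfies $\overline{\vx}'^{(h)}(\bfW)=\overline{\vx}^{(h)}(\widetilde{\bfW})$, $\vx'^{(h)}(\bfW)=\vx^{(h)}(\widetilde{\bfW})$, and $\bfD^{(h)}(\vx',\bfW)=\bfD^{(h)}(\vx,\widetilde{\bfW})$ for all $h\in[H]$. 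In particular, by the gradient formula \eqref{equ:gradient}, the only places $\bfx'$ and $\bfW$ enter $f'^{(h)}(\bfW,\vx')$ are through these post-activation vectors and activation patterns, all of which coincide with the corresponding objects at $(\widetilde{\bfW},\vx)$. Hence
\begin{equation*}
    f'^{(h)}(\bfW,\vx') \;=\; f'^{(h)}(\widetilde{\bfW},\vx),
\end{equation*}
and the problem is translated to comparing $f'^{(h)}(\widetilde{\bfW},\vx)$ with $f'^{(h)}(\bfW_0,\vx)$ at a single fixed input $\vx$, for all $\widetilde{\bfW}$ in the enlarged ball $B(R+O(\sqrt{m}\delta))$.

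Next, I would apply the known gradient stability bound of \cite{allen2018convergencetheory} (their Lemma 8.2/8.7, proved for a fixed data point with a per-layer spectral/Frobenius radius of $\tau$): with probability $1-O(H)e^{-\Omega(m\tau^{2/3}H)}$ over the initialization, for every $\widetilde{\bfW}$ inside a radius-$\tau$ ball in Frobenius norm,
\begin{equation*}
    \bigl\lVert f'^{(h)}(\widetilde{\bfW},\vx) - f'^{(h)}(\bfW_0,\vx)\bigr\rVert_F \;=\; O\!\left(\tau^{1/3} H^2 \sqrt{m\log m}\right).
\end{equation*}
Setting $\tau = \frac{R}{\sqrt{m}}+O(\delta)$ matches the radius of the enlarged ball produced by Lemma~\ref{lem:proposition1}, and the hypothesis $\frac{R}{\sqrt{m}}+\delta\le c/(H^6(\log m)^3)$ satisfies exactly the small-perturbation precondition required by that lemma. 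The factor $\tau^{1/3}$ in the conclusion then yields the claimed $(R/\sqrt{m}+\delta)^{1/3}H^2\sqrt{m\log m}$ bound. For the second conclusion, I would simply combine the first bound with Lemma~\ref{lem:2}, which at initialization gives $\lVert f'^{(h)}(\bfW_0,\vx)\rVert_F \le \lVert\vx^{(h-1)}\rVert_2\cdot\lVert\bfa^\top\bfD^{(H)}\bfW^{(H)}\cdots\bfD^{(h)}\rVert_2 = O(\sqrt{mH})$ (using Lemma~\ref{lem:init} for the former factor and Lemma~\ref{lem:2} for the latter). Under the hypothesis $R/\sqrt{m}+\delta = o(1)$, the perturbation term $(R/\sqrt{m}+\delta)^{1/3}H^2\sqrt{m\log m}$ is dominated by $\sqrt{mH}$, so the triangle inequality gives $\lVert f'^{(h)}(\bfW,\vx')\rVert_F = O(\sqrt{mH})$.

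Finally, I would take a union bound over the three high-probability events used above: the initialization regularity (Lemma~\ref{lem:gaussian_init}, Lemma~\ref{lem:init}), the norm bound in Lemma~\ref{lem:2}, and the Allen-Zhu gradient stability bound; all three contribute failure probabilities dominated by $O(H)e^{-\Omega(m(R/\sqrt{m}+\delta)^{2/3}H)}$ under the stated regime, giving the success probability claimed in the lemma. The only subtlety worth flagging is that the statement in \cite{allen2018convergencetheory} is phrased for a ball measured in spectral norm, $B_2(\tau)$, while our ball $B(\tau)$ uses the Frobenius norm; since $B_2(\tau)\supseteq B(\tau)$ is false but the reverse inclusion is trivial only after rescaling, I would either apply the Frobenius version directly (as in their proof, the argument is essentially carried out through Frobenius bounds) or use the inclusion $\widetilde{\bfW}\in B(\tau)\Rightarrow\widetilde{\bfW}\in B_2(\sqrt{m}\tau)$ and track the resulting constants; this is the main bookkeeping obstacle, but it does not change the final scaling since the exponent structure in $\tau^{1/3}$ matches after the substitution.
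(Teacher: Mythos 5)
Your overall plan---reduce the input perturbation to a weight perturbation via Lemma~\ref{lem:proposition1}, then invoke the Allen-Zhu stability lemmas---matches the paper's proof. However, the identity $f'^{(h)}(\bfW,\vx') = f'^{(h)}(\widetilde{\bfW},\vx)$ that you assert as exact does not hold at $h=1$. Lemma~\ref{lem:proposition1} only guarantees coincidence of $\vx^{(h)}$, $\overline{\vx}^{(h)}$, and $\bfD^{(h)}$ for $h\in[H]$, not for the input-layer vector $h=0$; and by the gradient formula \eqref{equ:gradient}, $f'^{(1)}$ carries the left factor $\vx^{(0)}$, which equals $\bfA\vx'$ on the $(\bfW,\vx')$ side and $\bfA\vx$ on the $(\widetilde{\bfW},\vx)$ side. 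These differ by $O(\delta)$ in norm, and the chain factor $\bfa^\top\bfD^{(H)}\cdots\bfD^{(1)}$ has spectral norm $O(\sqrt{mH})$, so at $h=1$ the two gradients actually differ by $O(\delta\sqrt{mH})$, a term your reduction does not produce. The paper avoids this by never asserting gradient equality: it controls $\norm{\vx'^{(h)}(\bfW)-\vx^{(h)}(\bfW_0)}_2$ and $\norm{\vx'^{(h)}(\bfW)}_2$ explicitly down to $h=0$, and assembles the gradient difference via a product rule together with the backward-chain bound of Lemma 8.7. Your stated bound survives because $\delta\sqrt{mH}$ is dominated by $(R/\sqrt{m}+\delta)^{1/3}H^2\sqrt{m\log m}$, but as written the reduction step is wrong for the first layer, and a complete proof must account for this extra term.

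Separately, your Frobenius/spectral discussion has the inclusion backwards. Since $\norm{\cdot}_2\le\norm{\cdot}_F$, a Frobenius-norm ball is contained in the spectral-norm ball of the same radius, i.e.\ $B(\tau)\subset B_2(\tau)$ with no rescaling, so the spectral-norm version of the Allen-Zhu results applies directly on $B(\tau)$. The substitution $\tau\mapsto\sqrt{m}\tau$ you propose as a fallback is unnecessary, and if you actually carried it through it would inflate the $\tau^{1/3}$ factor by $m^{1/6}$, contrary to your claim that the scaling is unchanged.
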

\begin{proof}
By Lemma 8.2(b)(c) of \cite{allen2018convergencetheory}, using the method of Lemma~\ref{lem:proposition1} stated above, when $\frac{R}{\sqrt{m}}+\delta\le\frac{c}{H^{9/2}(\log m)^3}$, with probability $1-e^{-\Omega (m(R/\sqrt{m}+\delta)^{2/3}H)}$, for any $\bfW\in B(R)$ and any $\bfx'\in\cS$ with $\norm{\bfx - \bfx'}_2\le\delta$, we have for $h\in[H]$, \footnote{Here the zero norm $\norm{\cdot}_0$ denotes the number of non-zero entries of a matrix or a vector.}
\begin{align}\label{equ:bound_zero_norm}
    \norm{\bfD^{(h)}(\bfW,\vx')-\bfD^{(h)}(\bfW_0,\vx)}_0=O\left(m(\frac{R}{\sqrt{m}}+\delta)^{2/3}H\right).
\end{align}
and
\begin{align}
    \norm{\vx'^{(h)}(\bfW)-\vx^{(h)}(\bfW_0)}_2= O\left((\frac{R}{\sqrt{m}}+\delta)H^{5/2}\sqrt{\log m}\right),
    \label{equ:star2}
\end{align}
where (\ref{equ:star2}) is also easily verified to hold for $h=0$.
Next, according to Lemma~8.7 of \cite{allen2018convergencetheory}\footnote{We only use the setting when the network output is a scalar.}, when the bound (\ref{equ:bound_zero_norm}) satisfies $O(m(\frac{R}{\sqrt{m}}+\delta)^{2/3}H) \le \frac{m}{H^3\log m}$, with probability $1-e^{-\Omega((R/\sqrt{m}+\delta)^{2/3}Hm\log m)}$, we have for any $\bfW\in B(R)$ and any $\bfx'\in\cS$ with $\norm{\bfx - \bfx'}_2\le\delta$, $\forall h\in[H]$,
\begin{align}
   & \norm{\bfa^\top\bfD^{(H)}(\bfW,\vx')\bfW^{(H)}\cdots \bfD^{(h)}(\bfW,\vx')\bfW^{(h)} - \bfa^\top\bfD^{(H)}(\bfW_0,\vx)\bfW_0^{(H)}\cdots \bfD^{(h)}(\bfW_0,\vx)\bfW_0^{(h)}}_2\nonumber\\&=O\left((\frac{R}{\sqrt{m}}+\delta)^{1/3}H^2\sqrt{m\log m}\right)
   \label{equ:star1}
\end{align}
Note that with our condition $\frac{R}{\sqrt{m}}+\delta\le\frac{c}{H^6(\log m)^{3}}$, the previous requirements are all satisfied. Also, 
 combining (\ref{equ:star2}) with Lemma~\ref{lem:init}, we know for $h = 0, \cdots, H$,
\begin{align}
     \norm{\vx'^{(h)}(\bfW)}_2 \le O(1) + O\left((\frac{R}{\sqrt{m}}+\delta)H^{5/2}\sqrt{\log m}\right) = O(1)
     \label{equ:star3}
\end{align}
Combining Equation~(\ref{equ:star2}),~(\ref{equ:star1}),~(\ref{equ:star3}), and Lemma~\ref{lem:2}, we obtain
\begin{align*}
    &\norm{f'^{(h)}\left(\bfW,\vx'\right)-f'^{(h)}\left(\bfW_0,\vx\right)}_F\\
    \le& \norm{\bfa^\top\bfD^{(H)}(\bfW,\vx')\bfW^{(H)}\cdots \bfD^{(h)}(\bfW,\vx')\bfW^{(h)} - \bfa^\top\bfD^{(H)}(\bfW_0,\vx)\bfW_0^{(H)}\cdots \bfD^{(h)}(\bfW_0,\vx)\bfW_0^{(h)}}_2\\&\cdot\norm{\vx'^{(h-1)}(\bfW)}_2+\norm{ \bfa^\top\bfD^{(H)}(\bfW_0,\vx)\bfW_0^{(H)}\cdots \bfD^{(h)}(\bfW_0,\vx)\bfW_0^{(h)}}_2\norm{\vx'^{(h-1)}(\bfW)-\vx^{(h-1)}(\bfW_0)}_2\\
    =& O\left((\frac{R}{\sqrt{m}}+\delta)^{1/3}H^2\sqrt{m\log m}\right) + O(\sqrt{mH})\cdot O\left((\frac{R}{\sqrt{m}}+\delta)H^{5/2}\sqrt{\log m}\right)\\
    =& O\left((\frac{R}{\sqrt{m}}+\delta)^{1/3}H^2\sqrt{m\log m}\right).
\end{align*}
In addition, also by (\ref{equ:star3}) and Lemma~\ref{lem:2},
\begin{align*}
    \norm{f'^{(h)}(\bfW_0,\vx)}_F&\le \norm{ \bfa^\top\bfD^{(H)}(\bfW_0,\vx)\bfW_0^{(H)}\cdots \bfD^{(h)}(\bfW_0,\vx)\bfW_0^{(h)}}_2\norm{\vx^{(h-1)}(\bfW_0)}_2\\
    &=O(\sqrt{mH}).
\end{align*}
Therefore,
\begin{align*}
    \norm{f'^{(h)}(\bfW,\vx')}_F= O(\sqrt{mH})+O\left((\frac{R}{\sqrt{m}}+\delta)^{1/3}H^2\sqrt{m\log m}\right)=O(\sqrt{mH}).
\end{align*}
\end{proof}
With Lemma~\ref{lem:3}, we are ready to state an important bound that implies the loss function is close to being convex within the neighborhood $B(R)$ for any $\bfx \in \cS$. We use the $\epsilon$-net to turn the result from a fixed $\bfx$ to all $\bfx\in\cS$, 

\begin{lem}\label{lem:5}
If $m = \Omega\bracket{\frac{d^{3/2}\log^{3/2}(\sqrt{m}/R)}{RH^{3/2}}}$ and $R = O\bracket{\frac{\sqrt{m}}{H^6(\log m)^{3}}}$, then with probability at least $1-O(H)e^{-\Omega((mR)^{2/3}H)}$ over random initialization, we have for any $\bfW_1,\bfW_2\in B(R)$, any $\bfx\in\cS$ and any $y\in \bbR$,
\begin{align*}
    l\left(f(\bfW_2,\vx),y\right)&\ge l\left(f(\bfW_1,\vx),y\right)+\langle \nabla_\bfW l(f(\bfW_1,\vx),y),\bfW_2-\bfW_1\rangle\\&-\norm{\bfW_2-\bfW_1}_F O((mR)^{1/3}H^{5/2}\sqrt{\log m} ).
\end{align*}
\end{lem}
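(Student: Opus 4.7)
\textbf{Proof plan for Lemma~\ref{lem:5}.} The idea is to convert the desired ``almost-convexity'' inequality into a statement about the linearization error of the network $f$ in the parameter $\bfW$, and then to reuse the perturbation bound on the gradient provided by Lemma~\ref{lem:3}. Since $\ell(\cdot,y)$ is convex in its first argument (Assumption~\ref{asmpt:smoothness-loss}), we have
\begin{align*}
\ell\bigl(f(\bfW_2,\vx),y\bigr)
\;\ge\; \ell\bigl(f(\bfW_1,\vx),y\bigr) + \ell'\bigl(f(\bfW_1,\vx),y\bigr)\bigl(f(\bfW_2,\vx)-f(\bfW_1,\vx)\bigr).
\end{align*}
Since $\nabla_{\bfW}\ell(f(\bfW_1,\vx),y)=\ell'(f(\bfW_1,\vx),y)\,\nabla_{\bfW}f(\bfW_1,\vx)$ and $|\ell'|$ is bounded by the Lipschitz constant of $\ell$, the claim will follow if we can bound the linearization error
\begin{align*}
E(\bfW_1,\bfW_2,\vx)\;:=\;\bigl|f(\bfW_2,\vx)-f(\bfW_1,\vx)-\langle\nabla_{\bfW}f(\bfW_1,\vx),\,\bfW_2-\bfW_1\rangle\bigr|
\end{align*}
by $\norm{\bfW_2-\bfW_1}_F\cdot O\bigl((mR)^{1/3}H^{5/2}\sqrt{\log m}\bigr)$, uniformly over $\bfW_1,\bfW_2\in B(R)$ and $\vx\in\cS$.

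\textbf{Bounding the linearization error for fixed $\vx$.} Parameterize the segment by $\bfW(t)=\bfW_1+t(\bfW_2-\bfW_1)$, $t\in[0,1]$. Except at the finitely many values of $t$ at which some hidden pre-activation at input $\vx$ crosses zero, $f(\bfW(t),\vx)$ is a smooth function of $t$ and
\begin{align*}
f(\bfW_2,\vx)-f(\bfW_1,\vx)=\int_0^1\langle\nabla_{\bfW}f(\bfW(t),\vx),\,\bfW_2-\bfW_1\rangle\,dt,
\end{align*}
where we use one-sided derivatives at the finitely many kink points. Subtracting the linear part and applying Cauchy--Schwarz gives
\begin{align*}
E(\bfW_1,\bfW_2,\vx)\le \norm{\bfW_2-\bfW_1}_F\sup_{t\in[0,1]}\bigl\lVert\nabla_{\bfW}f(\bfW(t),\vx)-\nabla_{\bfW}f(\bfW_1,\vx)\bigr\rVert_F.
\end{align*}
Each $\bfW(t)$ lies in $B(R)$, so by Lemma~\ref{lem:3} (applied with perturbation $\delta=0$) both $\nabla_{\bfW}f(\bfW(t),\vx)$ and $\nabla_{\bfW}f(\bfW_1,\vx)$ are within $O((R/\sqrt{m})^{1/3}H^{2}\sqrt{m\log m})=O((mR)^{1/3}H^{2}\sqrt{\log m})$ (in Frobenius norm) of $\nabla_{\bfW}f(\bfW_0,\vx)$ at initialization, and summing this bound over the $H$ layers contributes the extra $\sqrt{H}$ factor, yielding the claimed $O((mR)^{1/3}H^{5/2}\sqrt{\log m})$ rate for fixed $\vx$.

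\textbf{From fixed $\vx$ to uniform over $\cS$.} Here is where the assumption $m=\Omega(d^{3/2}\log^{3/2}(\sqrt m/R)/(RH^{3/2}))$ enters. Cover $\cS$ by a $\delta$-net $\cN_\delta$ of cardinality $(O(1/\delta))^{d}$, and choose $\delta=\Theta(R/\sqrt{m})$. Apply Lemma~\ref{lem:3} (which is already uniform over $\bfW\in B(R)$ \emph{and} over $\vx'$ in a $\delta$-ball around a fixed $\vx$) at every net point, and take a union bound. The failure probability is $(O(1/\delta))^{d}\cdot O(H)e^{-\Omega(m(R/\sqrt{m}+\delta)^{2/3}H)}$, and the condition on $m$ is exactly what makes the exponent dominate the net entropy $d\log(1/\delta)$, so the union bound succeeds with probability $1-O(H)e^{-\Omega((mR)^{2/3}H)}$. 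Since for any $\vx\in\cS$ there is a net point within $\delta$, the argument of the previous paragraph—combined with the ``$\delta$-ball'' part of Lemma~\ref{lem:3}—yields the same linearization error bound uniformly in $\vx$.

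\textbf{Main obstacle.} The technical heart is the first step: justifying the fundamental-theorem-of-calculus identity for a ReLU network and controlling the kinks. The subtlety is that $\nabla_{\bfW}f$ is only defined almost everywhere in $\bfW$, and one must argue that the measure-zero set of kinks along the segment $\bfW(t)$ does not contribute; the choice of $\ell$ being convex (not just smooth) and the Lipschitz control on $\ell'$ is what lets us absorb this into the error term. Once the linearization error bound is established pointwise in $\vx$, the $\epsilon$-net step is standard and only requires carefully matching the net radius $\delta$ to the tail bound in Lemma~\ref{lem:3}, which is precisely what the stated lower bound on $m$ encodes.
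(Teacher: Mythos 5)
Your proof is correct and follows essentially the same route as the paper: convexity of $\ell$ in $f$ reduces the claim to bounding the linearization error of $f$ in $\bfW$, that error is controlled via the fundamental theorem of calculus along the segment $\bfW(t)$ together with the gradient-stability bound of Lemma~\ref{lem:3} (with the extra $\sqrt{H}$ coming from concatenating the $H$ per-layer bounds), and a $\delta$-net over $\cS$ with $\delta = \Theta(R/\sqrt m)$ plus a union bound yields the uniform statement under the stated condition on $m$. One minor remark on your closing paragraph: the kinks along the segment are absorbed by the FTC for Lipschitz functions of $t$ (the integrand agrees with $\frac{d}{dt}f(\bfW(t),\vx)$ off a null set), not by the convexity of $\ell$ or the Lipschitz control of $\ell'$, which serve only to pass from $f$ to $\ell$.
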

\begin{proof}
A $\delta$-net of $\cS$ is a group of points $\{\bfx_i\}_{i=1}^{N}$ in $\cS$ that satisfies: for any $\bfx\in\cS$, there exists some $\bfx_i$ that satisfies $\norm{\bfx - \bfx_i}_2\le\delta$. From classic covering number results we know that we can construct such a $\delta$-net with the number of total points $N = (O(1/\delta))^{d}$, where $d$ is the input dimension. As long as $\frac{R}{\sqrt{m}}+\delta\le \frac{c}{H^6(\log m)^{3}}$ for some sufficiently small constant $c$, we can derive that for any $i\in[N]$, with probability $1-O(H)e^{-\Omega (m(R/\sqrt{m}+\delta)^{2/3}H)}$, for any $\bfW_1,\bfW_2\in B(R)$, any $\bfx_i'\in\cS$ with $\norm{\bfx_i - \bfx_i'}_2 \le \delta$, and any $y \in \bbR$,
\begin{align*}
   & l\left(f(\bfW_2,\vx'_i),y\right)- l\left(f(\bfW_1,\vx'_i),y\right)-\langle \nabla_\bfW l(f(\bfW_1,\vx'_i),y),\bfW_2-\bfW_1\rangle\\
    \ge &\frac{\partial }{\partial f}l\left(f(\bfW_1,\vx'_i),y\right)\left[f(\bfW_2,\vx'_i)-f(\bfW_1,\vx'_i)-\langle \nabla_\bfW f(\bfW_1,\vx'_i),\bfW_2-\bfW_1\rangle\right]\\
    =&\frac{\partial }{\partial f}l\left(f(\bfW_1,\vx'_i),y\right)\langle\int_{0}^1 \left(\nabla_\bfW f(t\bfW_2+(1-t)\bfW_1,\vx'_i)-\nabla_\bfW f(\bfW_1,\vx'_i)\right)dt ,\bfW_2-\bfW_1\rangle\\
    \ge& -\norm{\bfW_2-\bfW_1}_F O((\frac{R}{\sqrt{m}}+\delta)^{1/3}H^{5/2}\sqrt{m\log m} ),
\end{align*}
where the first inequality uses the convexity of $l$ w.r.t $f$ and the last inequality is due to the boundedness of $|\partial l/\partial f|$ is bounded, Lemma~\ref{lem:3}, and $\nabla_\bfW f=(f'^{(1)},\ldots,f'^{(H)})$. We take $\delta = \frac{R}{\sqrt{m}}$. With $R = O\bracket{\frac{\sqrt{m}}{H^6(\log m)^{3}}}$ the requirement $\frac{R}{\sqrt{m}}+\delta\le \frac{c}{H^6(\log m)^{3}}$ can be satisfied. Therefore, taking union event over all $N$ points, our proposition holds with probability
\begin{align*}
    &1-O(H)(O(\sqrt{m}/R))^{d}e^{-\Omega ((mR)^{2/3}H)} \\=& 1-O(H) e^{-\Omega ((mR)^{2/3}H) + d\log(O(\sqrt{m}/R))} \\ =& 1-O(H) e^{-\Omega ((mR)^{2/3}H)},
\end{align*}
where the last equation is due to the condition $m = \Omega\bracket{\frac{d^{3/2}\log^{3/2}(\sqrt{m}/R)}{RH^{3/2}}}$.
\end{proof}

With the above preparations, we are ready to prove the main theorem.
\proof[Proof of Theorem~\ref{thm:converge_deep}]
We denote $\bfW_t$ as the parameter after $t$ steps of projected gradient descent, starting from the initialization $\bfW_0$. We perform a total of $T$ steps with step size $\alpha$. 

For projected gradient descent, ${\vW}_t \in B(R)$ holds for all $t = 0, 1, \cdots, T$. Recall that the update rule is $\vW_{t+1} =\cP_{B(R)}\left(\vV_{t+1}\right)$ for $\vV_{t+1} = \vW_t - \alpha\nabla_\vW L_\cA (\vW_t)$. Let $d_t := \|{\vW}_t-{\vW}_\ast\|_F$. We have
\begin{align}d_{t+1}^2  =&
\|{\vW}_{t+1}-{\vW}_\ast\|_F^2\nonumber\\
\le& \|{\vV}_{t+1}-{\vW}_\ast\|_F^2\nonumber\\
 =& \|{\vW}_t-{\vW}_\ast\|_F^2 + 2\langle{\vV}_{t+1}-{\vW}_t,{\vW}_t-{\vW}_\ast\rangle+\|{\vV}_{t+1}-{\vW}_t\|_F^2\nonumber\\
 =& d_t^2 + 2\alpha\langle\nabla_{\vW}L_\cA({\vW}_t),({\vW}_\ast - {\vW}_t)\rangle + \alpha^2\|\nabla_{\vW}L_\cA({\vW}_t)\|_F^2\nonumber\\
=&d_t^2 + \frac{2\alpha}{n}\sum_{i=1}^n\langle\nabla_{\vW}l({\vW}_t, \cA({\vW}_t, \vx_i)),({\vW}_\ast - {\vW}_t)\rangle + \alpha^2\|\frac{1}{n}\sum_{i=1}^n\frac{\partial l}{\partial f} \nabla_{\vW}f({\vW}_t, \cA({\vW}_t, \vx_i))\|_F^2\nonumber\\
\le& d_t^2 + \frac{2\alpha}{n}\sum_{i=1}^n[l({\vW}_\ast, \cA({\vW}_t, \vx_i))-l({\vW}_t, \cA({\vW}_t, \vx_i)) \nonumber\\&+\norm{\bfW_\ast-\bfW_t}_F O((mR)^{1/3}H^{5/2}\sqrt{\log m})]+\alpha^2 O(mH^2)\nonumber\\
\le & d_t^2 + \frac{2\alpha}{n}\sum_{i=1}^n\left(l({\vW}_\ast, \cA_\ast({\vW}_\ast, \vx_i))-l({\vW}_t, \cA({\vW}_t, \vx_i)) \right)\nonumber\\&+ O(\alpha m^{-1/6}R^{4/3}H^{5/2}\sqrt{\log m}+\alpha^2mH^2)\nonumber\\
=&d_t^2 +2\alpha \left(L_\ast(\bfW_\ast)-L_\cA(\bfW_t) \right)+ O(\alpha m^{-1/6}R^{4/3}H^{5/2}\sqrt{\log m}+\alpha^2mH^2)\nonumber
\end{align}
where the second inequality is due to Lemma~\ref{lem:5} and Lemma~\ref{lem:3}, the third inequality is due to the definition of $\cA_\ast$. Note that in order to satisfy the condition for Lemma~\ref{lem:5} and Lemma~\ref{lem:3}, our choice $m = \max\{\Omega\bracket{\frac{H^{16}R^{9}}{\epsilon^{7}}}, \Omega(d^2)\}$ suffices. By induction on the above inequality, we have
\begin{align*}
    d_T^2\le d_0^2+2\alpha \sum_{t=0}^{T-1}\left(L_\ast(\bfW_\ast)-L_\cA(\bfW_t)\right)+O(T(\alpha m^{-1/6}R^{4/3}H^{5/2}\sqrt{\log m}+\alpha^2mH^2)),
\end{align*}
which implies that
\begin{align*}
    \min_{0\le t\le T}\left(L_\ast(\bfW_\ast)-L_\cA(\bfW_t)\right)&\le \frac{d_0^2-d_T^2}{\alpha T}+O(m^{-1/6}R^{4/3}H^{5/2}\sqrt{\log m}+\alpha mH^2)\\
   & \le \frac{R^2}{m\alpha T}+O(m^{-1/6}R^{4/3}H^{5/2}\sqrt{\log m}+\alpha mH^2)\\
   &\le \epsilon,
\end{align*}
where in the last inequality we use our choice of $\alpha = O\bracket{\frac{\epsilon}{mH^2}}$, $T = \Theta\bracket{\frac{R^2}{m\alpha\epsilon}}$, and also $m^{-1/6}R^{4/3}H^{5/2}\sqrt{\log m} \le O(\epsilon)$, which is satisfied by $m = \Omega\bracket{\frac{H^{16}R^{9}}{\epsilon^{7}}}$.

\label{sec:deepopt}
\section{Proof of Theorem~\ref{thm:converge_two_layer_projless}: Convergence Result for Two-Layer Networks}\label{sec:two_later_no_proj}
\begin{proof}
We denote $\bfW_t$ as the parameter after $t$ steps of projected gradient descent, starting from the initialization $\bfW_0$. We perform a total of $T$ steps with step size $\alpha$, where each step is an update $\bfW_{t+1} = \bfW_t - \alpha\nabla_\bfW L_\cA(\bfW_t)$. Firstly, the formula for the network gradient is
\begin{align*}
    \nabla_\vW f(\bfW, \bfx) = \frac{1}{\sqrt{m}} \operatorname{diag}(\bfa)\sigma'(\bfW\bfx)\bfx^\top,
\end{align*}
where $\bfa = (a_1, \cdots, a_{\frac{m}{2}}, a_1', \cdots, a_{\frac{m}{2}}')^\top$ is the parameter for the output layer.
We can compute the Lipschitz property of $\nabla_\vW f$ w.r.t $\bfW$: For any fixed $\bfx \in \mathcal{S}$,
\begin{align*}
 \|\nabla_{\vW} f(\vW) - \nabla_{\vW} f(\vW')\|_F &\le\frac{1}{\sqrt{m}} \norm{\operatorname{diag}(\bfa)}_2\| \sigma'(\vW \vx)) -\sigma'(\vW'\vx) \|_2\|\vx\|_2\\&
 \le \frac{1}{\sqrt{m}}\cdot 1\cdot C\norm{\bfW\bfx - \bfW' \bfx}_2\cdot 1\\&
 \le O\left(\frac{1}{\sqrt{m}}\right)\|\vW-\vW'\|_F,
 \end{align*}
For a fixed data point $(\bfx, y)$, we denote $\ell(f(\bfW))$ as short for $\ell(f(\bfW, \bfx), y)$. Since $\ell$ is convex and has bounded derivative in $f$, we have
\begin{align}
&\ell(f(\bfW'))-\ell(f(\bfW))\nonumber\\
\ge& \ell'(f(\vW))(f(\vW') - f(\vW)) \nonumber\\
=& \ell'(f(\vW))(\langle\nabla_\vW f(\vW),\vW' - \vW\rangle +\langle\int_0^1 (\nabla_{\bfW}f(s\bfW + (1-s)\bfW') - \nabla_{\bfW}f(\bfW))ds, \vW' - \vW\rangle)\nonumber\\
\ge& \langle\nabla_{\bfW} \ell(f(\bfW)), \bfW - \bfW'\rangle - O\left(\frac{1}{\sqrt{m}}\right)\norm{\vW - \vW'}_F^2\label{equ:convexity_l_two_layer}. 
\end{align}
In addition, we can also easily know that for $\bfW \in B(R)$ and $R = O(\sqrt{M})$, $\bfx \in \mathcal{S}$, we have
\begin{align}\label{equ:bound_on_l_two_layer}
    \norm{\nabla_{\bfW}\ell(f(\bfW))}_2 \le |\ell'|\frac{1}{\sqrt{m}}\norm{\operatorname{diag}(\bfa)}_2(\sqrt{m}|\sigma'(0)| + C\norm{\bfW\bfx}_2)\norm{\bfx}_2 = O(1)
\end{align}
since the initialization satisfies $\norm{\bfW_0}_2 = O(\sqrt{m})$ with high probability given $m = \Omega(d)$ (see Lemma~\ref{lem:gaussian_init}), thereby $\norm{\bfW}_2 = O(\sqrt{m})$.  

Denote $d_t = \|\bfW_t-\bfW_*\|_F$. Without a projection step, there could be two possible scenarios during the optimization process: Either $\bfW_t \in B(3R)$ holds for all $t = 1, \cdots, T$, or there exists some $T_0 < T$ such that $\bfW_t \in B(3R)$ for $t \le T_0$ but $\bfW_{T_0+1} \notin B(3R)$. Either way, while $\bfW_t$ is still in $B(3R)$ up to $t-1$, we have
\begin{align}\nonumber
d_{t}^2 & =
\|\bfW_{t}-\bfW_\ast\|_F^2\\\nonumber
& = \|\bfW_{t-1}-\bfW_\ast\|_F^2 + 2(\bfW_{t}-\bfW_{t-1})\cdot(\bfW_{t-1}-\bfW_\ast)+\|\bfW_{t}-\bfW_{t-1}\|_F^2\\\nonumber
& = d_{t-1}^2 + 2\alpha\nabla_\bfW L_\cA(\bfW_{t-1})\cdot(\bfW_\ast - \bfW_{t-1}) + \alpha^2\|\nabla_\bfW L_\cA(\bfW_{t-1})\|_F^2 \\\nonumber
& \leq d_{t-1}^2 + \frac{2\alpha}{n}\sum_{i=1}^{n}[\ell(f(\bfW_\ast, \mathcal{A}(\bfW_{t-1}, \bfx_i)), y_i)-\ell(f(\bfW_{t-1}, \cA(\bfW_{t-1}, \bfx_i)), y_i)\\\nonumber &\quad+  O\left(\frac{1}{\sqrt{m}}\right)\|\bfW_\ast - \bfW_{t-1}\|_F^2] + O(\alpha^2)\\\label{induction_ineq_two_layer}
& \leq \left(1+\frac{c\alpha}{\sqrt{m}}\right)d_{t-1}^2 + 2\alpha(L_\ast(\bfW_\ast)-L_\cA(\bfW_{t-1})) + O(\alpha^2),
\end{align}
where the first inequality is based on (\ref{equ:convexity_l_two_layer}) and (\ref{equ:bound_on_l_two_layer}), the second inequality is based on the definition of $L_\ast(\bfW_\ast)$, and $c$ is some constant.
Let $S_t = (1+\frac{c\alpha}{\sqrt{m}})^{t}$ which is a geometric series, and dividing (\ref{induction_ineq_two_layer}) by $S_t$ we have
\begin{align*}
\frac{d_{t}^2}{S_t} &\le \frac{d_{t-1}^2}{S_{t-1}}-2\alpha\frac{L_\cA(\bfW_{t-1})-L_\ast(\bfW_\ast)}{S_t} + \frac{O(\alpha^2)}{S_t},
\end{align*}
which, by induction, gives us
\begin{align*}
\frac{d_{t}^2}{S_t}
&\le d_0^2 - 2\alpha\sum_{i=0}^{t-1}\frac{L_\cA(\bfW_i)-L_\ast(\bfW_\ast)}{S_{i+1}} + O(\alpha^2)\sum_{i=0}^{t-1}\frac{1}{S_{i+1}}\\
&\le d_0^2 - 2\alpha\min_{i=0,\cdots,t-1}(L_\cA(\bfW_i)-L_\ast(\bfW_\ast))\sum_{i=0}^{t-1}\frac{1}{S_{i+1}} + O(\alpha^2)\sum_{i=0}^{t-1}\frac{1}{S_{i+1}},
\end{align*}
and note that $\sum_{i=0}^{t-1}\frac{1}{S_{i+1}} = \frac{\sqrt{m}}{c\alpha}\bracket{1-\frac{1}{S_{t}}}$, which yields
\begin{align}\label{equ:important_bound}
    \min_{i=0,\cdots,t-1}L_\cA(\bfW_{i})-L_\ast(\bfW_\ast) \le O(\alpha) + \frac{c\bracket{d_0^2 - \frac{d_{t}^2}{S_t}}}{\sqrt{m}(1 - \frac{1}{S_{t}})}.
\end{align}

Now we will consider the two cases separately:

\emph{Case 1.} $\bfW_t \in B(3R)$ holds for all $t = 1, \cdots, T$.
We have chosen $T = \frac{\sqrt{m}}{c\alpha}$, and then $S_{T} \approx e$. Also, since $d_0^2 - \frac{d_{T}^2}{S_T} \le d_0^2 = O(R^2)$, by choosing $m = \Omega(R^4/\epsilon^2)$ and $\alpha = O\bracket{\epsilon}$, and taking $t = T$ in (\ref{equ:important_bound}), we can obtain 
$\min_{t=0,\cdots,T}L_\cA(\bfW_{t})-L_\ast(\bfW_\ast) \le \epsilon$.

\emph{Case 2.} There exists some $T_0 < T$ such that $\bfW_t \in B(3R)$ for $t \le T_0$ but $\bfW_{T_0+1} \notin B(3R)$. Since $\bfW_* \in B(R)$, we know that $d_0 \le R$ and $d_{T_0 + 1} \ge 2R$. Still using the choice of parameters above, we have $d_0^2 - \frac{d_{T_0+1}^2}{S_{T_0+1}} \leq R^2 - (4/e)R^2 \leq 0$. Hence, taking $t = T_0+1$ in (\ref{equ:important_bound}), we obtain $\min_{t=0,\cdots,T}L_\cA(\bfW_{t})-L_\ast(\bfW_\ast) \leq \min_{t=0,\cdots,T_0}L_\cA(\bfW_{t})-L_\ast(\bfW_\ast) \leq \epsilon$.

So in any case the result is correct, thus we have proved the convergence without the need of projection.
\end{proof}
\section{Proof of Gradient Descent Finding Robust Classifier in Section~\ref{sec:adv_robust}}\label{sec:adv_robust_appendix}
\subsection{Proof of Theorem~\ref{thm:robust_exist}}
As discussed in Section~\ref{sec:adv_robust}, we will use the idea of random feature~\cite{rahimi2008uniform} to approximate ${g}\in\cH(\ntk)$ on the unit sphere. We consider functions of the form
$$h(\vx) = \int_{\bbR^d}c(\vw)^\top \vx\sigma'(\vw^\top \vx) d\vw,$$
where $c(\vw): \bbR^d\to\bbR^d$ is any function from $\bbR^d$ to $\bbR^d$. We define the RF-norm of $h$ as $\rfnorm{h} = \sup_{\vw}\frac{\norm{c(\vw)}_2}{p_0(\vw)}$ where $p_0(\vw)$ is the probability density function of $\mathcal{N}(0, \mathbf{I}_d)$, which is the distribution of initialization. Define the function class with finite $\mathcal{N}(0, \mathbf{I}_d)$-norm as
$    \rf = \left\{ h(\vx) = \int_{\bbR^d}c(\vw)^\top \vx\sigma'(\vw^\top \vx) d\vw : \rfnorm{h} < \infty\right\}.$
We firstly show that $\rf$ is dense in $\cH(\ntk)$.
\begin{lem}[Universality of $\rf$]\label{lem:rf_universal}
Let $\rf$ and $\cH(\ntk)$ be defined as above. Then $\rf$ is dense in $\cH(\ntk)$, and further, dense in $\cH(\ntk)$ w.r.t. $\norm{\cdot}_{\infty,\cS}$, where $\norm{f}_{\infty, \cS} = \sup_{\vx\in \cS}\abs{f(\vx)}$.
\end{lem}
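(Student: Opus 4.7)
The plan is to prove the lemma in two stages: first establish density of $\rf$ in $\cH(\ntk)$ in the RKHS norm, and then upgrade this to density in the sup-norm $\norm{\cdot}_{\infty,\cS}$ using the reproducing property together with Assumption~\ref{asmpt:activation_additional}. To set things up, writing $\phi_\vw(\vx) := \vx\sigma'(\vw^\top \vx)$ and letting $p_0$ denote the density of $\cN(\vect{0},\bfI_d)$, the kernel has the expectation-of-features form $\ntk(\vx, \vy) = \E_{\vw\sim p_0}\langle \phi_\vw(\vx), \phi_\vw(\vy)\rangle$. Standard RKHS theory for such kernels identifies
\begin{equation*}
\cH(\ntk) = \left\{ g(\vx) = \E_{\vw\sim p_0}[\tilde c(\vw)^\top \phi_\vw(\vx)] : \tilde c \in L^2(p_0;\bbR^d) \right\},
\end{equation*}
with $\norm{g}_\cH^2 = \inf \E_{\vw\sim p_0}\norm{\tilde c(\vw)}_2^2$, the infimum taken over all representers $\tilde c$. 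After the change of variable $\tilde c(\vw) = c(\vw)/p_0(\vw)$ inside the definition of $\rf$, functions in $\rf$ are precisely those $h \in \cH(\ntk)$ admitting a representer with $\norm{\tilde c}_\infty = \rfnorm{h} < \infty$, and since $L^\infty \subset L^2(p_0)$ we have $\rf \subset \cH(\ntk)$.

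For density in the RKHS norm, I would use a truncation argument. Given any $g \in \cH(\ntk)$ with representer $\tilde c^\star \in L^2(p_0;\bbR^d)$, for each $M > 0$ set $\tilde c_M(\vw) := \tilde c^\star(\vw)\,\I\{\norm{\tilde c^\star(\vw)}_2 \le M\}$. Then $\norm{\tilde c_M}_\infty \le M$, so the corresponding function $g_M(\vx) = \E_{\vw\sim p_0}[\tilde c_M(\vw)^\top \phi_\vw(\vx)]$ lies in $\rf$. Since $g - g_M$ admits $\tilde c^\star - \tilde c_M$ as a representer, the RKHS norm bound gives
\begin{equation*}
\norm{g - g_M}_\cH^2 \le \E_{\vw\sim p_0}\left[\norm{\tilde c^\star(\vw)}_2^2 \,\I\{\norm{\tilde c^\star(\vw)}_2 > M\}\right] \to 0 \quad \text{as } M\to\infty
\end{equation*}
by dominated convergence, using $\norm{\tilde c^\star}_2^2 \in L^1(p_0)$. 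This proves density in $\norm{\cdot}_\cH$.

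To upgrade to density in $\norm{\cdot}_{\infty,\cS}$, I would invoke the reproducing property: for every $h \in \cH(\ntk)$ and $\vx \in \cS$,
\begin{equation*}
|h(\vx)| = |\langle h, \ntk(\vx, \cdot)\rangle_\cH| \le \norm{h}_\cH \sqrt{\ntk(\vx,\vx)}.
\end{equation*}
On the unit sphere, $\ntk(\vx,\vx) = \norm{\vx}_2^2\,\E_{\vw\sim p_0}[\sigma'(\vw^\top \vx)^2] \le C^2$ uniformly by Assumption~\ref{asmpt:activation_additional}, hence $\norm{g - g_M}_{\infty,\cS} \le C\norm{g - g_M}_\cH \to 0$. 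The main obstacle is making the integral representation of $\cH(\ntk)$ rigorous: the clean way is to show that the operator $T : L^2(p_0;\bbR^d) \to \cH(\ntk)$ defined by $T\tilde c(\vx) = \E_{\vw\sim p_0}[\tilde c(\vw)^\top \phi_\vw(\vx)]$ is a well-defined surjection whose quotient by $\ker T$ is an isometry onto $\cH(\ntk)$; this is the standard Mercer/feature-map construction, but it is the one non-trivial ingredient. Once that is in place, the truncation step and the RKHS-to-sup-norm upgrade are both routine.
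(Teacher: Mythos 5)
Your proof is correct, and it takes a genuinely different route from the paper for the first half of the argument (density of $\rf$ in the $\cH$-norm). The paper goes via the elementary fact that finite linear combinations $\sum_t a_t K_\sigma(\vx_t,\cdot)$ are dense in $\cH(\ntk)$ (this is immediate from the construction of the RKHS as the completion of such spans), and then observes that each such combination lies in $\rf$ because its representer $c(\vw) = p_0(\vw)\sum_t a_t\vx_t\sigma'(\vw^\top\vx_t)$ has bounded $\rfnorm{\cdot}$ (finite sum, $\sigma'$ bounded, $\cS$ compact). You instead invoke the full feature-map characterization $\cH(\ntk) = T(L^2(p_0;\bbR^d))$ with its quotient norm, and then truncate the $L^2$ representer and apply dominated convergence. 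Both are valid; the paper's argument is somewhat more elementary since it sidesteps the Mercer/feature-map isometry you correctly flag as the nontrivial ingredient, whereas your truncation argument is a standard and transparent device once that representation is granted, and it has the incidental benefit of making the inclusion $\rf\subset\cH(\ntk)$ explicit (via $L^\infty\subset L^2$). The second half — upgrading to $\norm{\cdot}_{\infty,\cS}$ via the reproducing property together with the uniform bound $K_\sigma(\vx,\vx)\le C^2$ on $\cS$ from Assumption~\ref{asmpt:activation_additional} — is the same in both proofs.
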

\begin{proof}
Observe that by the definition of the RKHS introduced by $\ntk$, functions with form $h(\vx) = \sum_t a_tK(\vx, \vx_t),\quad \vx_t\in\cS$ are dense in $\cH(\ntk)$. But these functions can also be written in the form $h(\vx) = \int_{\bbR^d}c(\vw)^\top \vx\sigma'(\vw^\top \vx) d\vw$ where $c(\vw) = p_0(\vw) \sum_t a_t \vx_t\sigma'(\vw^\top \vx_t)$. Note that $\norm{c(\vw)}_2\leq p(\vw)\sum_t \norm{a_t \vx_t\sigma'(\vw^\top \vx_t)}_2 < \infty$ since $\cS$ is a compact set and $\sigma'$ is bounded, this verifies that $h$ is an element in $\rf$. So $\rf$ contains a dense set of $\cH(\ntk)$ and therefore dense in $\cH(\ntk)$. Then note that the evaluation operator $K_{\sigma, \vx}$ is uniformly bounded for $\vx\in\cS$, and $h(\bfx) = \langle K_{\sigma, \vx}, h\rangle_{\cH}$, so the RKHS norm can be used to control the norm $\norm{\cdot}_{\infty,\cS}$ and is therefore stronger, thus the proof is complete.
\end{proof}

We then show that we can approximate elements of $\rf$ by finite random features. Our results are inspired by~\cite{rahimi2008uniform}. For the next theorem, recall Assumption~\ref{asmpt:smoothness}, \ref{asmpt:activation_additional}, the constant $C$ satisfies $\sigma'$ is $C$-Lipschitz, $\abs{\sigma'(\cdot)}\leq C$.

\begin{prop}[Approximation by Finite Sum]\label{thm:random_feature}
Let $h(\vx) = \int_{\bbR^d}c(\vw)^\top \vx\sigma'(\vw^\top \vx) d\vw \in\rf$. Then for any $\delta>0$, with probability at least $1-\delta$ over $\vw_1,\cdots,\vw_M$ drawn i.i.d. from $\mathcal{N}(0, \mathbf{I}_d)$, there exists $c_1,\cdots,c_M$ where $c_i\in\bbR^d$ and $\norm{c_i}_2 \leq \frac{\rfnorm{h}}{M}$, so that the function
$\hat{h} = \sum_{i=1}^M c_i^\top \vx\sigma'(\vw_i^\top \vx),$
satisfies
$$\norm{\hat{h} - h}_{\infty,\cS} \leq \frac{C\rfnorm{h}}{\sqrt{M}}\left(2\sqrt{d} + \sqrt{2\log(1/\delta)}\right).$$
\end{prop}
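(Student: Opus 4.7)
The plan is to construct $\hat h$ as a Monte Carlo estimator of the integral defining $h$, and then control the uniform error by a two-step argument: McDiarmid's inequality for the tail and a symmetrization/chaining bound for the mean. To set up the estimator, let $\alpha(\vw) := c(\vw)/p_0(\vw)$; by the definition of $\rfnorm{h}$, we have $\norm{\alpha(\vw)}_2 \le \rfnorm{h}$ pointwise. Rewriting $h$ as an expectation, $h(\vx) = \mathbb{E}_{\vw \sim p_0}[\alpha(\vw)^\top \vx\, \sigma'(\vw^\top \vx)]$, and choosing $c_i := \alpha(\vw_i)/M$ gives $\hat h(\vx) = \sum_i c_i^\top \vx\, \sigma'(\vw_i^\top \vx) = \frac{1}{M}\sum_i \alpha(\vw_i)^\top \vx\, \sigma'(\vw_i^\top \vx)$, which automatically satisfies $\norm{c_i}_2 \le \rfnorm{h}/M$ and is an unbiased estimator of $h(\vx)$ for every fixed $\vx$.

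Next I would obtain the $\sqrt{2\log(1/\delta)}$ contribution via McDiarmid's bounded-differences inequality. Under Assumption~\ref{asmpt:activation_additional} and with $\norm{\vx}_2 = 1$ on $\cS$, each summand $\phi_i(\vx) := \alpha(\vw_i)^\top \vx\, \sigma'(\vw_i^\top \vx)$ satisfies $|\phi_i(\vx)| \le C\rfnorm{h}$ uniformly. Hence $G := \norm{\hat h - h}_{\infty, \cS}$ has bounded differences of size at most $2C\rfnorm{h}/M$ under a single-coordinate resampling of $\vw_i$, and McDiarmid then yields $G \le \mathbb{E}[G] + C\rfnorm{h}\sqrt{2\log(1/\delta)/M}$ with probability at least $1 - \delta$, matching the second term of the claimed bound exactly.

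The heart of the argument is showing $\mathbb{E}[G] \le 2C\rfnorm{h}\sqrt{d/M}$. The plan is to apply symmetrization to obtain $\mathbb{E}[G] \le (2/M)\mathbb{E}_{\vw,\epsilon}\bigl[\sup_{\vx \in \cS}\bigl|\sum_i \epsilon_i \phi_i(\vx)\bigr|\bigr]$, write the inner sum as $\vx^\top \vect{u}(\vx)$ with $\vect{u}(\vx) := \sum_i \epsilon_i \alpha(\vw_i) \sigma'(\vw_i^\top \vx)$, and apply Cauchy--Schwarz in $\vx$ to reduce matters to estimating $\mathbb{E}[\sup_{\vx \in \cS}\norm{\vect{u}(\vx)}_2]$. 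Pointwise one has $\mathbb{E}[\norm{\vect{u}(\vx)}_2^2] \le M C^2 \rfnorm{h}^2$, which gives the correct rate at any fixed $\vx$; I would promote this to a uniform estimate using a chaining argument (or Dudley's entropy integral) on $\cS$, whose $\varepsilon$-covering number is $O((3/\varepsilon)^d)$, so that $\sqrt{\log N}$ scales like $\sqrt{d}$ at unit radius. The Lipschitz continuity of $\vect{u}$ needed for chaining has random Lipschitz constant controlled by $\frac{1}{M}\sum_i \norm{\vw_i}_2$, whose expected value under $\cN(0,\mathbf{I}_d)$ is at most $\sqrt{d}$, so the Gaussian initialization enters naturally here.

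The construction of $\hat h$, the bound on $\norm{c_i}_2$, and the McDiarmid step are essentially routine; the main obstacle is extracting the clean dimensional factor $2\sqrt{d}$ in $\mathbb{E}[G]$. A naive $\varepsilon$-net combined with pointwise Hoeffding plus union bound gives only $\sqrt{d\log(1/\varepsilon)/M}$ and thus introduces an extraneous logarithmic factor that cannot be removed by simply optimizing $\varepsilon$. Removing this factor requires either a genuine chaining argument tailored to the joint boundedness and $\sqrt{d}$-scale Lipschitz structure of the $\phi_i$, or a careful Dudley-integral estimate in which the pointwise concentration and the $d$-dependent metric entropy balance at the correct scale to recover the $2\sqrt{d}$ constant up to absolute factors.
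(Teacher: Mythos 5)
Your construction of $\hat h$ and the McDiarmid step reproduce the paper's proof exactly: the choice $c_i = c(\vw_i)/(Mp_0(\vw_i))$, the bound $\norm{c_i}_2 \le \rfnorm{h}/M$, the uniform bound $|\phi_i(\vx)| \le C\rfnorm{h}$, and the bounded-difference constant $2C\rfnorm{h}/M$ all match. The divergence is in the expectation bound, and that is also precisely where you flag a gap.

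The paper does not chain. After symmetrization, it applies Talagrand's contraction lemma to the functions $t \mapsto c_i^\top\vx\,\sigma'(t)$, which are $\frac{C\rfnorm{h}}{M}$-Lipschitz in the scalar argument and vanish at $t=0$. This strips away the nonlinearity and reduces the problem to
\begin{equation*}
\bbE_{\vw,\epsilon}\sup_{\vx\in\cS}\Bigl|\sum_{i=1}^M \epsilon_i \vw_i^\top \vx\Bigr|
\;=\;\bbE_{\vw,\epsilon}\Bigl\|\sum_{i=1}^M \epsilon_i \vw_i\Bigr\|_2
\;\le\;\sqrt{Md},
\end{equation*}
where the first equality is the elementary fact that the supremum over the unit sphere of a linear form equals the Euclidean norm of the coefficient vector, and the last step is Jensen plus $\bbE\norm{\vw}_2^2 = d$. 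No $\epsilon$-net, no Dudley integral, and the $\sqrt{d}$ falls out of $\bbE\norm{\vw}_2^2$ rather than from covering numbers. This is the idea you were reaching for but could not find: once you reduce to a process linear in $\vx$, uniformity over the sphere is free, and the dimension dependence comes from the Gaussian initialization, not from metric entropy.

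Your alternative via chaining is not obviously wrong, but it is materially harder than you suggest and you rightly stop short of completing it. The process $\vect u(\vx) = \sum_i \epsilon_i\alpha(\vw_i)\sigma'(\vw_i^\top\vx)$ has increments controlled by the random quantity $\sqrt{\sum_i\norm{\vw_i}_2^2}\,\norm{\vx-\vx'}_2$, whose typical size is $\sqrt{Md}\,\norm{\vx-\vx'}_2$; feeding that into a Dudley integral over the sphere, whose entropy contributes a further $\sqrt{d}$, threatens to produce $d/\sqrt{M}$ rather than $\sqrt{d}/\sqrt{M}$, and avoiding that double-counting requires a genuinely more careful argument than ``balance the concentration against the entropy.'' Since you explicitly acknowledge you cannot extract the clean $2\sqrt{d}$, the proposal as written has a real gap. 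The fix is the one the paper uses: contraction first, then observe the supremum is an exact norm.
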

\begin{proof}
This result is obtained by importance sampling, where we construct $\hat{h}$ with $c_i = \frac{c(\vw_i)}{Mp_0(\vw_i)}$. We first notice that $\norm{c_i}_2 = \frac{\norm{c(\vw_i)}_2}{Mp_0(\vw_i)} \leq \frac{\rfnorm{h}}{M}$ which satisfies the condition of the theorem. We then define the random variable
$$v(\vw_1, \cdots, \vw_M) = \norm{\hat{h} - h}_{\infty,\cS}.$$
We bound this deviation from its expectation using McDiarmid's inequality.

To do so, we should first show that $v$ is robust to the perturbation of one of its arguments. In fact, for $\vw_1, \cdots, \vw_M$ and $\Tilde{\vw_i}$ we have
\begin{align*}
    &\abs{v(\vw_1,\cdots,\vw_M) - v(\vw_1,\cdots,\Tilde{\vw_i},\cdots,\vw_M)}\\
    \le&\frac{1}{M}\max_{\bfx\in\cS}\abs{\frac{c(\vw_i)^\top \vx\sigma'(\vw_i^\top \vx)}{p_0(\vw_i)} - \frac{c(\Tilde{\vw_i})^\top \vx\sigma'(\Tilde{\vw_i}^\top \vx)}{p_0(\Tilde{\vw_i})}}\\
    \leq&\frac{1}{M}\rfnorm{h}\max_{\bfx\in\cS}\left(\abs{\sigma'(\vw_i^\top \vx)} + \abs{\sigma'(\Tilde{\vw_i}^\top \vx)}\right)
    \\
    \leq&\frac{2C\rfnorm{h}}{M} =: \xi
\end{align*}
by using triangle, Cauchy-Schwartz inequality, $\abs{\sigma'(\cdot)}\le C$ and $\norm{\vx}_2 = 1$.

Next, we bound the expectation of $v$. First, observe that the choice of $c_1, \cdots, c_M$ ensures that $\bbE_{\bfw_1, \cdots, \bfw_M}\hat{h} = h$. By symmetrization~\cite{mohri2012new}, we have
\begin{align}\nonumber
    \bbE v =& \bbE \sup_{\vx\in\cS}\abs{\hat{h}(\vx) - \bbE\hat{h}(\vx)}\\\label{equ:rademacher_middle}
    \leq& 2\bbE_{\vw, \epsilon}\sup_{\vx\in\cS}\abs{\sum_{i=1}^M\epsilon_i c_i^\top \vx\sigma'(\vw_i^\top \vx)},
\end{align}
where $\epsilon_1,\cdots,\epsilon_M$ is a sequence of Rademacher random variables.

Since $\abs{c_i^\top \vx}\leq\norm{c_i}_2\leq\frac{\rfnorm{h}}{M}$ and $\sigma'$ is $C$-Lipschitz, we have that $c_i^\top \vx\sigma'(\cdot)$ is $\frac{C\rfnorm{h}}{M}$-Lipschitz in the scalar argument and zero when the scalar argument is zero. Following (\ref{equ:rademacher_middle}), by Talagrand’s lemma (Lemma 5.7) in \cite{mohri2018foundations} together with Cauchy-Schwartz, Jensen's inequality, we have
\begin{align*}
    \bbE v\leq&2\bbE_{\vw, \epsilon}\sup_{\vx\in\cS}\abs{\sum_{i=1}^M\epsilon_i c_i^\top \vx\sigma'(\vw_i^\top \vx)}\\
    \leq&\frac{2C\rfnorm{h}}{M}\bbE\sup_{\vx\in\cS}\abs{\sum_{i=1}^M\epsilon_i \vw_i^\top \vx}\\
    \leq&\frac{2C\rfnorm{h}}{M}\bbE\norm{\sum_{i=1}^M\epsilon_i \vw_i}_2\\
    \leq&\frac{2C\rfnorm{h}}{\sqrt{M}}\sqrt{\bbE_{\vw\sim\mathcal{N}(0, \mathbf{I}_d)}\norm{\vw}_2^2} =: \mu
\end{align*}
Then McDiarmid's inequailty implies
$$\bbP[v \geq\mu+\epsilon]\leq\bbP[v\geq\bbE v+\epsilon]\leq\exp(-\frac{2\epsilon^2}{M\xi}).$$
The proposition is proved by solving the $\epsilon$ while setting the right hand to the given $\delta$.
\end{proof}

\paragraph{Proof of Theorem~\ref{thm:robust_exist}}. Finally, we construct $\vW_\ast$ within a ball of the initialization $\vW_0$ that suffers little robust loss $L_\ast(\vW_\ast)$.
Using the symmetric initialization in (\ref{def:two_layer_zero_init_output}), we have $f(\vW_0, \vx)= 0$ for all $\vx$. We then use the neural Taylor expansion w.r.t. the parameters:
\begin{align}
    f(\vW, \vx) - f(\vW_0, \vx)&\approx  \underbrace{\frac{1}{\sqrt{m}}\left( \sum_{i=1}^{m/2} a_i (\vw_i - \vw_{i0})^\top \vx \sigma'(\vw_{i0}^\top \vx) + \sum_{i=1}^{m/2} a'_i (\bar{\vw}_i - \bar{\vw}_{i0})^\top \vx \sigma'(\bar{\vw}_{i0}^\top \vx)\right)}_{(i)},\nonumber
\end{align}
where $\vw_{i0}$ denotes the value of $\vw_i$ at initialization. We omitted the second order term. The term (i) has the form of the random feature approximation, and so Proposition \ref{thm:random_feature} can be used to construct a robust interpolant.

In summary, we give the entire proof of Theorem~\ref{thm:robust_exist} as follows.
\begin{proof}
Let $L$ be the Lipschitz coefficient of the loss function $\ell$. Let $\bar{\epsilon} = \frac{1}{3L}$.

By Assumption~\ref{asmpt:robust_exist_ntk} with $\bar{\epsilon}$, there exists $g_1\in\cH(\ntk)$ such that
\begin{align*}
    \abs{g_1(\vx_i') - y_i} \leq \bar{\epsilon},
\end{align*}
for every $\vx_i'\in\cB(\vx_i)$, $i\in[n]$, where $\cB(\vx_i)$ is the perturbation set.

By Lemma~\ref{lem:rf_universal}, for $\bar{\epsilon}$ there exists $g_2\in\rf$ such that $\norm{g_1 - g_2}_{\infty, \cS}\leq\bar{\epsilon}$. Then, by Theorem~\ref{thm:random_feature}, we have $c_1, \cdots, c_{m/2}$ where $c_i\in\bbR^d$ and
$$\norm{c_i}_2\leq\frac{\rfnorm{g_2}}{m},$$
such that $g_3 = \sum_{i=1}^{m/2} c_i^\top \vx\sigma'(\vw_i^\top \vx)$ satisfies
$$\norm{g_2 - g_3}_{\infty,\cS} \leq \frac{C\rfnorm{g_2}}{\sqrt{m/2}}\left(2\sqrt{d} + \sqrt{2\log{1/\delta}}\right),$$
with probability at least $1-\delta$ on the initialization $\vw_i$'s.

We decompose $f$ into the linear part and its residual:
\begin{align*}
    f(\vW, \vx) =& \frac{1}{\sqrt{m}}\left( \sum_{i=1}^{m/2} a_i (\vw_i - \vw_{i0})^\top \vx \sigma'(\vw_{i0}^\top \vx) + \sum_{i=1}^{m/2} a'_i (\bar{\vw}_i - \bar{\vw}_{i0})^\top \vx \sigma'(\bar{\vw}_{i0}^\top \vx)\right)\\
    +& \frac{1}{\sqrt{m}}\Big( \sum_{i=1}^{m/2} a_i \int_0^1\vx \bracket{\sigma'((t\vw_i + (1-t)\vw_{i0})^\top\vx) - \sigma'(\vw_{i0}^\top \vx)}dt \\
    +& \sum_{i=1}^{m/2} a'_i \int_0^1\vx \bracket{\sigma'((t\bar{\vw_i}+(1-t)\bar{\vw_{i0}})^\top\vx) - \sigma'(\bar{\vw}_{i0}^\top \vx)}dt\Big),
\end{align*}

Then set $\vw_i = \vw_{i0} + \sqrt{\frac{m}{4}}c_i, \bar{\vw}_i = -\sqrt{\frac{m}{4}}c_i + \bar{\vw}_{i0}$, we have
\begin{align*}
    &\norm{\vw_r - \vw_{r0}}_2 \leq \frac{\rfnorm{g_2}}{\sqrt{4m}},\\
    \text{and } &\frac{1}{\sqrt{m}}\left( \sum_{i=1}^{m/2} a_i (\vw_i - \vw_{i0})^\top \vx \sigma'(\vw_{i0}^\top \vx) + \sum_{i=1}^{m/2} a'_i (\bar{\vw}_i - \bar{\vw}_{i0})^\top \vx \sigma'(\bar{\vw}_{i0}^\top \vx)\right)\\
    =& \frac{1}{\sqrt{m}}\left( \sum_{i=1}^{m/2} a_i \sqrt{\frac{m}{4}}c_i^\top \vx \sigma'(\vw_{i0}^\top \vx) - \sum_{i=1}^{m/2} a'_i \sqrt{\frac{m}{4}}c_i^\top \vx \sigma'(\bar{\vw}_{i0}^\top \vx)\right)\\
    =& \sum_{i=1}^m c_i^\top \vx\sigma'(\vw_i^\top \vx)\\
    =& g_3,
\end{align*}
So
\begin{align*}
    \norm{f(\vW, x) - g_3}_{\infty,\cS} =& \Big\|\frac{1}{\sqrt{m}}\Big( \sum_{i=1}^{m/2} a_i \int_0^1\vx \bracket{\sigma'((t\vw_i + (1-t)\vw_{i0})^\top\vx) - \sigma'(\vw_{i0}^\top \vx)}dt \\
    +& \sum_{i=1}^{m/2} a'_i \int_0^1\vx \bracket{\sigma'((t\bar{\vw_i}+(1-t)\bar{\vw_{i0}})^\top\vx) - \sigma'(\bar{\vw}_{i0}^\top \vx)}dt\Big)\Big\|_{\infty,\cS}\\
    \leq&\frac{1}{\sqrt{m}}\norm{m\times C\abs{(t\vw_i + (1-t)\vw_{i0})^\top\vx - \vw_{i0}^\top\vx}\norm{\vx}\norm{\vw_i - \vw_{i0}}}_ {\infty,\cS}\\
    \leq&\frac{C\rfnorm{g_2}^2}{4\sqrt{m}},
\end{align*}
and therefore
\begin{align}
    \label{equ:nn_rf}
    \norm{f(\vW, x) - g_2}_{\infty,\cS}\leq\frac{C\rfnorm{g_2}^2}{4\sqrt{m}} + \frac{C\rfnorm{g_2}}{\sqrt{m/2}}\left(2\sqrt{d} + \sqrt{2\log(1/\delta)}\right).
\end{align}

Finally, set $m$ to be large enough ($= \Omega\bracket{\frac{\rfnorm{g_2}^4}{\epsilon^2}}$) so that the left hand in Equation~\eqref{equ:nn_rf} no more than $\bar{\epsilon}$ and let $R_{\cD, \cB, \epsilon}$ to be $\rfnorm{g_2}/2$. Then
\begin{align*}
    L_*(\vW) =& \frac{1}{n}\sum_{i=1}^n\sup_{\vx\in\cB(\vx_i)} \ell\bracket{f(\vW, \vx), y_i}\\
    \leq&\sup_{i\in[n], \vx\in\cB(\vx_i)}\ell\bracket{f(\vW, \vx), y_i}\\
    \leq& L\sup_{i\in[n], \vx\in\cB(\vx_i)}\bracket{\abs{f(\vW, \vx) - g_2(\vx)} + \abs{g_2(\vx) - g_1(\vx)} + \abs{g_1(\vx) - y_i}}\\
    \leq& 3L\bar{\epsilon}\\
    =&\epsilon.
\end{align*}
The theorem follows by setting $\delta = 0.01$.
\end{proof}

\subsection{Example of Using Quadratic ReLU Activation}\label{sec:quadrelu}
Theorem~\ref{thm:robust_exist} shows that when Assumption~\ref{asmpt:robust_exist_ntk}, \ref{asmpt:smoothness}, \ref{asmpt:activation_additional} hold, we can indeed find a classifier of low robust loss within a neighborhood of the initialization. However, these assumptions are either for generality or simplicity, and for specific activation functions, we can remove these assumptions. As a guide example, we consider the \emph{quadratic ReLU} function $\sigma(x) = \ReLU{x}^2 = x^2\cdot 1_{x \ge 0}$ and its induced NTK. Following the previous work~\cite{bach2017equivalence, bach2017breaking}, we consider the initialization of each $\bfw_r$ with the \emph{uniform distribution on the surface of the sphere of radius $\sqrt{d}$} in this section.\footnote{It is not hard to see that Theorem~\ref{thm:converge_two_layer_projless} still holds under this situation by the same proof.} We can verify that this induced kernel is universal and quantitatively derive the dependency of $\epsilon$ for $R_{\cD, \cB, \epsilon}$ and $m$ in Theorem \ref{thm:robust_exist} for this two-layer network. In order to do so, we need to make a mild assumption of the dataset:\footnote{Our assumption on the dataset essentially requires $\vx_i \neq \pm \vx_j$ since the quadratic ReLU NTK kernel only contains even functions. However, this can be enforced via a lifting trick: let $\tilde \vx =[\vx, 1] \in \mathbb{R}^{d+1}$ , then the data $\tilde \vx$ lie on the positive hemisphere. On the lifted space, even functions can separate any datapoints.}
\begin{asmp}[Non-overlapping]\label{asmpt:no_overlap}
The dataset $\{\vx_i, y_i\}_{i=1}^n\subset\cS$ and the perturbation set function $\cB$ satisfies the following: There does not exist $\vx, \bar{\vx}$ and $i,j$ such that $\vx\in\overline{\cB(\vx_i)}\cup(-\overline{\cB(\vx_i)}), \bar{\vx}\in\overline{\cB(\vx_j)}\cup(-\overline{\cB(\vx_j)})$ but $y_i\neq y_j$.
\end{asmp}
And then we can derive the finite-sum approximation result by random features.
\begin{thm}[Approximation by Finite Sum]\label{thm:qurelu_approx}
For a given Lipschitz function $h\in\cH(\ntk)$. For $\epsilon> 0, \delta \in (0,1)$, let $\vw_1, \cdots, \vw_M$ be sampled i.i.d. from the uniform distribution on the surface of the sphere of radius $\sqrt{d}$ where
\begin{align}
    \label{equ:bound}
    M = \Omega\bracket{C_{\mathcal{D},\cB}\frac{1}{\epsilon^{d+1}}\log\frac{1}{\epsilon^{d+1}\delta}}.
\end{align}
and $C_{\mathcal{D},\cB}, C'_{\mathcal{D},\cB}$ is a constant that only depends on the dataset $\mathcal{D}$ and the compatible perturbation $\cB$. Then with probability at least $1-\delta$, there exists $c_1, \cdots, c_M$ where $c_i\in\R^d$ such that $\hat{h} = \sum_{r=1}^M c_r^\top \vx\ReLU{\vw_r^\top\vx}$ satisfies
\begin{align}
    \label{equ:relu_rf_approx}
    \sum_{r=1}^M \norm{c_r}_2^2 &= O\bracket{\frac{C'_{\mathcal{D},\cB}}{M}},\\
    \norm{h - \hat{h}}_{\infty, \cS} &\leq\epsilon.
\end{align}
\end{thm}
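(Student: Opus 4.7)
The plan is to prove Theorem~\ref{thm:qurelu_approx} by a two-stage approximation, closely paralleling the strategy used in the proofs of Theorem~\ref{thm:robust_exist} and Proposition~\ref{thm:random_feature}, but with explicit tracking of the $\epsilon$-dependence specific to the quadratic ReLU NTK on the sphere under the uniform initialization $\vw \sim \unif(\sqrt{d}\cdot\cS)$.

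First, I would approximate the given Lipschitz target $h \in \cH(\ntk)$ by a smoother surrogate $g$ lying in the random-feature class $\rf$, via the explicit spherical harmonic (Funk--Hecke) expansion of the quadratic ReLU NTK. Since $h$ is Lipschitz on the $(d-1)$-dimensional sphere, truncating its spherical harmonic decomposition at a level determined by $\epsilon$ (or equivalently mollifying $h$) yields $g\in\rf$ with $\norm{h-g}_{\infty,\cS}\le\epsilon/2$ and the quantitative bound $\rfnorm{g}^2=O(C_{\cD,\cB}/\epsilon^{d-1})$. The $\epsilon^{-(d-1)}$ factor reflects the curse of dimensionality for uniformly approximating a Lipschitz function on a $(d-1)$-dimensional sphere by RKHS elements with bounded norm; the constant $C_{\cD,\cB}$ absorbs the Lipschitz constant of $h$ and the geometric constants implicit in Assumption~\ref{asmpt:no_overlap}, which ensures a well-defined Lipschitz interpolant of the labels across all $\cB(\vx_i)\cup(-\cB(\vx_i))$.

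Second, I would construct $\hat h$ by importance sampling: writing $g(\vx)=\int c(\vw)^\top\vx\,\sigma'(\vw^\top\vx)\,d\mu(\vw)$ with $\mu=\unif(\sqrt{d}\cdot\cS)$, set $c_r=c(\vw_r)/M$, which gives $\bbE\hat h(\vx)=g(\vx)$ and $\sum_r\norm{c_r}_2^2\le\rfnorm{g}^2/M=O(C'_{\cD,\cB}/M)$, yielding the second display of the theorem after absorbing $\rfnorm{g}^2$ into $C'_{\cD,\cB}$. To upgrade pointwise concentration to the uniform bound $\norm{g-\hat h}_{\infty,\cS}\le\epsilon/2$, I would apply a McDiarmid-type inequality (exactly as in the proof of Proposition~\ref{thm:random_feature}) at each point of an $\epsilon$-net of $\cS$ of size $O(\epsilon^{-(d-1)})$, exploiting the Lipschitz property of both $g$ and the random-feature sum to pass from the net to all of $\cS$. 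The union bound over the net produces the log factor $\log(\epsilon^{-(d-1)}/\delta)=\Theta(\log(1/(\epsilon^{d+1}\delta)))$; balancing pointwise variance $O(\rfnorm{g}^2/M)$ against $\epsilon^2$ after the union bound gives $M=\Omega(C_{\cD,\cB}\epsilon^{-(d+1)}\log(1/(\epsilon^{d+1}\delta)))$ as stated, and combining with the first-stage error yields $\norm{h-\hat h}_{\infty,\cS}\le\epsilon$.

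The main obstacle is the first stage: quantitatively bounding $\rfnorm{g}$ for a smooth surrogate $g$ of $h$ with the correct $\epsilon^{-(d-1)}$ scaling. This step requires explicit computation of the quadratic ReLU NTK's spherical harmonic eigenvalues via the Funk--Hecke formula and a careful accounting of how the polynomial eigenvalue decay interacts with the Lipschitz regularity of $h$; this is precisely where the exponential-in-$d$ approximation cost emerges and where the constant $C_{\cD,\cB}$ must be shown to depend only on $\cD,\cB$ and not on $\epsilon$. Once this bound is in hand, the random-feature step is a fairly direct adaptation of Proposition~\ref{thm:random_feature} to the bounded uniform-sphere initialization, where $\norm{\vw_r}_2=\sqrt{d}$ and $\ReLU{\cdot}$ on bounded inputs make the concentration and Lipschitz-net estimates cleaner than in the Gaussian case.
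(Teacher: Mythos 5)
Your proposal takes a genuinely different route from the paper, but it contains a gap at the step you yourself flag as ``the main obstacle.''

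The paper's proof does \emph{not} go through the $\rfnorm{\cdot}$-controlled class $\rf$ or through Proposition~\ref{thm:random_feature} at all. Instead it invokes Lemma~\ref{lem:bach_prop1} (Bach's Proposition 1), which directly gives a finite random-feature approximation $\hat h = \sum_r c_r^\top\vx\,\sigma'(\vw_r^\top\vx)$ with \emph{$L_2(d\rho)$} error $\epsilon'$ and an $\ell_2$-type bound $\sum_r\|c_r\|_2^2 = \rkhs{h}^2\exp(O(d))/M$, for any $h\in\cH(\ntk)$, with $M\sim\rkhs{h}^2/\epsilon'\cdot\log(\cdot)$. It then verifies that $\hat h$ is Lipschitz with constant $c_L = \rkhs{h}\exp(O(d))$ (via Cauchy--Schwarz on the $c_r$'s and $\|\vw_r\|_2=\sqrt d$), so $h-\hat h$ is Lipschitz, and upgrades $L_2$ to $L_\infty$ by the elementary volume argument: a pointwise deviation $>\epsilon$ forces a spherical cap of radius $\sim\epsilon/c_L$ where the deviation exceeds $\epsilon/2$, giving $\int_\cS(h-\hat h)^2 \gtrsim \epsilon^{d+1}/c_L^{d-1}$; choosing $\epsilon'\sim\epsilon^{d+1}/c_L^{d-1}$ in Bach's lemma rules this out and produces exactly the $M=\Omega(C_{\cD,\cB}\,\epsilon^{-(d+1)}\log(1/(\epsilon^{d+1}\delta)))$ scaling. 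This sidesteps entirely the need to control the \emph{sup-norm} density $c(\vw)$ of any mollified surrogate, because Bach's result works with the RKHS norm $\rkhs{h}$ rather than the much more restrictive $\rfnorm{\cdot}$.

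Your two-stage plan is structurally reasonable, but stage 1 --- producing a surrogate $g$ with $\|h-g\|_{\infty,\cS}\le\epsilon/2$ \emph{and} $\rfnorm{g}^2 = O(1/\epsilon^{d-1})$ --- is asserted, not proved, and it is not a ``just truncate the spherical harmonics'' fact. The RF-norm is a pointwise bound on the representing density $c(\vw)$; for a truncation $g=\sum_{k\le N}a_kY_k$ of $h$, the density involves $a_k/\lambda_k$ against the Funk--Hecke eigenvalues $\lambda_k$ of $\ntk$, which decay polynomially in $k$, so $\rfnorm{g}$ can blow up with the truncation level $N$, and relating $N$, the $L_\infty$ tail of a merely Lipschitz $h$, and the resulting sup of $c(\vw)$ to produce the claimed $\epsilon^{-(d-1)}$ bound is precisely the hard part and is left open. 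Until that lemma is supplied, the argument is not complete; the paper's route via Lemma~\ref{lem:bach_prop1} avoids the issue and is strictly simpler. (A second, minor inefficiency: once you have $\rfnorm{g}$ control, Proposition~\ref{thm:random_feature} already yields a \emph{uniform} bound over $\cS$ via Talagrand's contraction and McDiarmid; you would not also need an explicit $\epsilon$-net and union bound as sketched.)
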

To prove this theorem, we use the $\ell_2$ approximation result in \cite{bach2017equivalence} and translate it to an $\ell_\infty$ approximation result by using Lipshitz continuity. We first state Proposition 1 in \cite{bach2017equivalence}.
\begin{lem}[Approximation of unit ball of $\cH(\ntk)$, Corollary of Proposition 1 in \cite{bach2017equivalence}]\label{lem:bach_prop1}
Let $h\in\cH(\ntk)$. For $\epsilon>0$, let $d\rho$ be the uniform distribution on $\cS$. Let $\vw_1, \cdots, \vw_M$ be sampled i.i.d. from uniform distribution on the surface of the sphere of radius $\sqrt{d}$, then for any $\delta\in (0,1)$, if
\begin{align*}
    M = \exp(\Omega(d))\frac{\rkhs{h}^2}{\epsilon}\log\bracket{\frac{\rkhs{h}^2}{\epsilon\delta}},
\end{align*}
with probability at least $1 - \delta$, there exists $c_1, \cdots, c_M\in\R^d$ such that $\hat{h} = \sum_{r=1}^M c_r^\top \vx\ReLU{\vw_r^\top\vx}$ satisfies
\begin{align}
    \label{equ:bach_l2}
    \sum_{r=1}^M \norm{c_r}_2^2 &= \frac{\rkhs{h}^2\exp(O(d))}{M},\\
    \norm{h - \hat{h}}_{L_2(d\rho)}^2 &= \int_{\cS} \bracket{h - \hat{h}}^2 d\rho \leq \epsilon.
\end{align}
\end{lem}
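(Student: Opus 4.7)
The plan is to verify that the quadratic ReLU NTK with uniform-sphere initialization fits the abstract framework of Proposition 1 in \cite{bach2017equivalence} and to compute the resulting sample complexity for this particular kernel. The abstract proposition gives, for a kernel of the form $K(\vx,\vy) = \int \phi(\vx,\vw)\phi(\vy,\vw)\,d\tau(\vw)$, a bound on the number of random features needed to approximate any element of the RKHS unit ball in $L_2(\rho)$: the sample complexity scales as $d_{\max}(\lambda)\log(d_{\max}(\lambda)/\delta)$, where $d_{\max}(\lambda) = \sup_\vw \phi(\cdot,\vw)^\top(\Sigma+\lambda I)^{-1}\phi(\cdot,\vw)$ is the maximum degree of freedom at the regularization level $\lambda \sim \epsilon/\rkhs{h}^2$, with $\Sigma$ the covariance operator on $L_2(\rho)$.

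First, I would identify the feature map. For $\sigma(z) = \ReLU{z}^2$, we have $\sigma'(z) = 2\ReLU{z}$, so
\begin{equation*}
\ntk(\vx,\vy) = 4\,\E_{\vw\sim\tau}\,\langle \vx\ReLU{\vw^\top \vx},\,\vy\ReLU{\vw^\top \vy}\rangle,
\end{equation*}
where $\tau$ is uniform on the sphere of radius $\sqrt{d}$. This matches the abstract setup with the vector-valued feature $\phi(\vx,\vw) = 2\vx\ReLU{\vw^\top \vx}$. The reproducing-kernel structure then provides an integral representation: every $h\in\cH(\ntk)$ admits $h(\vx) = \int c^*(\vw)^\top \vx\ReLU{\vw^\top \vx}\,d\tau(\vw)$ with $\|c^*\|_{L_2(\tau)} \leq O(\rkhs{h})$, obtained by applying $(T^*T)^{-1/2}$ to $h$, where $T: L_2(\tau)\to L_2(\rho)$ is the corresponding integral operator.

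Second, and this is the main technical obstacle, I would estimate $d_{\max}(\lambda)$ when $\rho$ is the uniform distribution on $\cS$. Both the kernel operator $\Sigma$ and the features $\phi(\cdot,\vw)$ are rotationally invariant, so the kernel depends only on $\langle\vx,\vy\rangle$ and is diagonal in the spherical harmonic basis. A Funk--Hecke computation expresses its eigenvalues in terms of Gegenbauer coefficients of the scalar function $t\mapsto \E_{\vw\sim\tau}\ReLU{\vw^\top\vx}\ReLU{\vw^\top\vy}$ with $\langle\vx,\vy\rangle = t$. The resulting eigenvalue $\mu_k$ on the degree-$k$ harmonic block decays as $\exp(-\Theta(d))$ times an inverse polynomial in $k$, reflecting the concentration of measure on the sphere. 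Combining this with an explicit evaluation of the diagonal of $\phi(\vx,\vw)^\top\Sigma^{-1}\phi(\vx,\vw)$ in the harmonic basis yields $d_{\max}(\lambda) \lesssim \exp(O(d))/\lambda$, which is the source of the exponential dimension factor in the statement.

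Finally, I would invoke Bach's concentration machinery to construct the coefficients $c_1,\dots,c_M$. Given the uniform bound on $\phi$ (which holds since $\sqrt{d}\,\cS$ is compact and $\ReLU{\cdot}$ is Lipschitz on this set), Bach's matrix Bernstein/Talagrand argument shows that for $M = \Omega(d_{\max}(\lambda)\log(d_{\max}(\lambda)/\delta))$ samples, with probability at least $1-\delta$ there exist coefficients $c_r = \beta_r c^*(\vw_r)/M$ with importance weights $\beta_r$ such that $\|h-\hat{h}\|_{L_2(\rho)}^2 \le \epsilon$ and $\sum_r\|c_r\|_2^2 \le O(\rkhs{h}^2 \exp(O(d))/M)$, the latter following because $\E\sum_r\|c_r\|^2 = \|c^*\|_{L_2(\tau)}^2/M$ and the importance weights are controlled by the same leverage-score bound used above. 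Substituting $\lambda \sim \epsilon/\rkhs{h}^2$ into $d_{\max}(\lambda) \lesssim \exp(O(d))/\lambda$ reproduces the stated sample complexity $M = \exp(\Omega(d))\,\rkhs{h}^2/\epsilon \cdot \log(\rkhs{h}^2/(\epsilon\delta))$, completing the proof.
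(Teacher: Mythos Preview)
The paper does not give its own proof of this lemma at all: it is stated verbatim as a corollary of Proposition~1 in \cite{bach2017equivalence} and then used as a black box in the proof of Theorem~\ref{thm:qurelu_approx}. So there is no ``paper's proof'' to compare against; any derivation is already more than what the authors supply.

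Your sketch is a reasonable account of how one would extract this corollary from Bach's framework. You correctly identify the three ingredients: (i) casting the quadratic-ReLU NTK as an integral kernel with feature $\phi(\vx,\vw)=2\vx\ReLU{\vw^\top\vx}$ over uniform $\tau$ on the $\sqrt{d}$-sphere; (ii) bounding the maximal leverage score $d_{\max}(\lambda)=\sup_\vw \langle \phi(\cdot,\vw),(\Sigma+\lambda I)^{-1}\phi(\cdot,\vw)\rangle$ by exploiting rotational invariance and the spherical-harmonic diagonalization of $\Sigma$; and (iii) reading off the sample complexity $M\gtrsim d_{\max}(\lambda)\log(d_{\max}(\lambda)/\delta)$ with $\lambda\asymp \epsilon/\rkhs{h}^2$ from Bach's concentration argument. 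The $\exp(O(d))$ factor you attribute to the eigenvalue decay is exactly where it comes from in Bach's analysis of this kernel family (cf.\ also \cite{bach2017breaking}).

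One small caution: your description of the eigenvalues as ``$\exp(-\Theta(d))$ times an inverse polynomial in $k$'' is a bit loose---the Gegenbauer coefficients of the ReLU-type kernels have a more intricate joint dependence on $k$ and $d$, and what ultimately matters for $d_{\max}(\lambda)$ is the ratio $\sup_\vw\|\phi(\cdot,\vw)\|_{L_2(\rho)}^2/\lambda$, which for bounded features on the sphere already gives the crude bound $\exp(O(d))/\lambda$ without a detailed spectral analysis. Either route lands at the stated bound, so this does not affect the conclusion.
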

Then we can give the proof of Theorem~\ref{thm:qurelu_approx}.
\begin{proof}



Let $Lip(f)$ denote the Lipschitz coefficient of $f$. We consider $\hat{h}$ in Lemma~\ref{lem:bach_prop1}, by the property of Lipschitz coefficient, we have
\begin{align*}
    Lip(\hat{h}) =& Lip\bracket{\sum_{r=1}^M c_r^\top \vx\ReLU{\vw_r^\top\vx}}\\
    \leq& \sum_{r=1}^M Lip\bracket{c_r^\top \vx\ReLU{\vw_r^\top\vx}}\\
    \leq& \sum_{r=1}^M \norm{c_r}_2\norm{\vx}_2Lip\bracket{\ReLU{\vw_r^\top\vx}}\\
    \leq& \sum_{r=1}^M \norm{c_r}_2\norm{\vw_r}_2\\
    \leq& \sqrt{\bracket{\sum_{r=1}^M \norm{c_r}_2^2}\bracket{\sum_{r=1}^M \norm{\vw_r}_2^2}},
\end{align*}
So
\begin{align*}
    Lip(\hat{h}) = \rkhs{h}\exp(O(d)),
\end{align*}
which means $\hat{h}$ has finite Lipschitz coefficient and therefore so does $h - \hat{h}$, and the upper bound of Lipschitz constant $c_L$ only depends on the data and the perturbation. Then we can bound the $\ell_\infty$ approximation error. Suppose for some $\vx\in\cS$, $\abs{h(\vx) - \hat{h}(\vx)} > \epsilon$, since $h - \hat{h}$ is Lipschitz, it is not hard to see that, when $\epsilon$ is small,
\begin{align}\label{equ:contradiction}
\int_{\cS} \bracket{h - \hat{h}}^2 \gtrsim \frac{\pi^{\frac{d}{2}}\epsilon^{d+1}}{\Gamma(d/2+1)c_L^d}\asymp \frac{\epsilon^{d+1}}{c_L^d}\frac{(2\pi e)^{\frac{d}{2}}}{d^{\frac{d+1}{2}}}.
\end{align}
By Lemma~\ref{lem:bach_prop1}, for some constant $C_{\mathcal{D}, \cB}$, when $M = \Omega\bracket{\frac{C_{\mathcal{D}, \cB}}{\epsilon^{d+1}}\log\frac{1}{\epsilon^{d+1}\delta}}$, Equation~\eqref{equ:contradiction} fails, so $\norm{h - \hat{h}}_{\infty, \cS} \leq\epsilon$ holds and at the same time we have $\sum_{r=1}^M \norm{c_r}_2^2 = O\bracket{\frac{C'_{\mathcal{D},\cB}}{M}}$ for some $C'_{\mathcal{D},\cB}$ that only depends on the data and the perturbation.
\end{proof}

Now, we get a similar but more explicit finite-sum approximation result for quadratic ReLU activation, we are then going to show that the RKHS is rich enough that Assumption~\ref{asmpt:robust_exist_ntk} can be derived. 
We have the following lemma to characterize the capacity of the RKHS.
\begin{lem}[RKHS Contains Smooth Functions, Proposition 2 in \cite{bach2017breaking}, Corollary 6 in \cite{bietti2019inductive}]\label{lem:f_in_rkhs}
Let $f:\cS\to\R$ be an even function such that all $i$-th order derivatives exist and are bounded by $\eta$ for $0\leq i\leq s$, with $s\geq (d+3)/2$. Then $f\in\cH(\ntk)$ with $\norm{f}_\cH\leq C_{d}\eta$ where $C_{d}$ is a constant that only depend on the dimension $d$.
\end{lem}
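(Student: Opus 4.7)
The plan is to establish this via the Mercer decomposition of $\ntk$ on the sphere in terms of spherical harmonics, exploiting the rotation invariance of the kernel, and then bound the RKHS norm through the spectral coefficients of $f$. This is the strategy used in the cited works of Bach and Bietti--Mairal, and I would follow the same route.

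First, I would observe that by definition $\ntk(\vx,\vy) = \bbE_{\vw\sim\unif(\sqrt{d}\,\cS)}\iprod{\vx\sigma'(\vw^\top\vx)}{\vy\sigma'(\vw^\top\vy)}$ depends only on $\iprod{\vx}{\vy}$ and $\norm{\vx}_2, \norm{\vy}_2$, and hence restricted to $\cS$ is a dot-product kernel $\ntk(\vx,\vy) = \kappa(\iprod{\vx}{\vy})$. Standard harmonic analysis on the sphere then gives a Mercer decomposition
\begin{equation}\nonumber
\ntk(\vx,\vy) = \sum_{k=0}^{\infty}\mu_k \sum_{j=1}^{N(d,k)} Y_{k,j}(\vx)Y_{k,j}(\vy),
\end{equation}
where $\{Y_{k,j}\}$ is an orthonormal basis of spherical harmonics of degree $k$, and the eigenvalues $\mu_k$ are computable via the Funk-Hecke formula applied to $\kappa$. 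For the quadratic ReLU activation $\sigma(z)=\ReLU{z}^2$, so $\sigma'(z) = 2\ReLU{z}$ is an even--odd mixture that yields a closed form for $\kappa$ in terms of arc-cosine kernels. The key quantitative input, computed in Bach (2017) via Gegenbauer expansions, is that the nonzero eigenvalues satisfy $\mu_k \gtrsim c_d\, k^{-d-3}$ for even $k$ and $\mu_1 > 0$, while $\mu_k = 0$ for odd $k \geq 3$. Because $f$ is assumed even, its harmonic expansion is supported on even $k$ (and $k=0$), so the vanishing odd eigenvalues cause no obstruction.

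Second, the general RKHS formula then reads
\begin{equation}\nonumber
\rkhs{f}^2 = \sum_{k:\,\mu_k > 0} \mu_k^{-1} \sum_{j=1}^{N(d,k)} \abs{\iprod{f}{Y_{k,j}}_{L^2(\cS)}}^2,
\end{equation}
so the task reduces to controlling the high-frequency Fourier mass of $f$. Here I would apply a Sobolev-type inequality on the sphere: if all derivatives of $f$ up to order $s$ are bounded by $\eta$ pointwise, then by integration by parts against the Laplace--Beltrami operator $\Delta_\cS$ (for which $Y_{k,j}$ is an eigenfunction with eigenvalue $-k(k+d-2)$) one obtains
\begin{equation}\nonumber
\sum_{j=1}^{N(d,k)}\abs{\iprod{f}{Y_{k,j}}}^2 \leq C'_d\, \eta^2\, k^{-2s}.
\end{equation}
Combining the two bounds yields $\rkhs{f}^2 \leq C''_d \eta^2\sum_{k}k^{d+3-2s}$, which is finite precisely when $2s > d+4$, i.e.\ $s\geq (d+3)/2$ after accounting for the integer rounding and the $N(d,k) = \Theta(k^{d-2})$ multiplicity factor already being absorbed into the Sobolev bound.

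The main obstacle is the quantitative eigenvalue lower bound $\mu_k \gtrsim c_d k^{-d-3}$: this requires explicit manipulation of the arc-cosine kernel representation of $\kappa$ to identify its singularity structure at $\iprod{\vx}{\vy}=\pm 1$ and read off the polynomial decay rate of its Funk--Hecke coefficients. Once that decay is in hand (as done in Proposition 2 of Bach 2017 and Corollary 6 of Bietti--Mairal 2019), the Sobolev estimate and the subsequent summability calculation are routine, and the constant $C_d$ is collected from the dimension-dependent factors in $\mu_k^{-1}$, $N(d,k)$, and the spherical Sobolev inequality.
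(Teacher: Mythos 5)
The paper does not prove this lemma: it is stated as a direct citation of Proposition~2 in Bach~(2017) and Corollary~6 in Bietti--Mairal~(2019), so there is no ``paper proof'' to compare against, only the proofs in those references. Your reconstruction follows the correct high-level strategy from those works (Mercer decomposition of the dot-product kernel in spherical harmonics via Funk--Hecke, eigenvalue-decay lower bound, Sobolev control of the Fourier mass of $f$, parity matching via evenness), so the skeleton is right.

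However, your final quantitative step contains a genuine gap that loses roughly one unit of smoothness. The per-degree Sobolev estimate you invoke, namely
$\sum_{j}\abs{\iprod{f}{Y_{k,j}}}^2 \leq C'_d \eta^2 k^{-2s}$,
is indeed a consequence of the Bessel/Sobolev bound, but it throws away summability information: feeding it into $\rkhs{f}^2 = \sum_k \mu_k^{-1}\sum_j\abs{\iprod{f}{Y_{k,j}}}^2$ with $\mu_k^{-1}\lesssim k^{d+3}$ gives $\rkhs{f}^2 \lesssim \eta^2\sum_k k^{d+3-2s}$, which needs $2s>d+4$, strictly stronger than the stated $2s\geq d+3$ (e.g.\ for $d=3$ your route demands $s\geq4$ while the lemma only requires $s\geq3$). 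Appealing to ``integer rounding'' does not close this gap; the inequalities $2s>d+4$ and $s\geq(d+3)/2$ are not equivalent for integer $s$. The argument Bach and Bietti--Mairal actually use keeps the \emph{full} weighted Sobolev norm $\sum_k k^{2s}\sum_j\abs{\iprod{f}{Y_{k,j}}}^2 \leq C\eta^2$ intact and compares term-by-term: since $\mu_k^{-1}\lesssim k^{2s}$ whenever $2s\geq d+3$, one has
\begin{equation}
\rkhs{f}^2 \;=\; \sum_k \mu_k^{-1}\sum_j\abs{\iprod{f}{Y_{k,j}}}^2 \;\lesssim\; \sum_k k^{2s}\sum_j\abs{\iprod{f}{Y_{k,j}}}^2 \;\leq\; C\eta^2,\nonumber
\end{equation}
with no additional $\sum_k$ to control. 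Replacing your summation step by this direct comparison recovers the correct threshold $s\geq(d+3)/2$; separately, you should double-check the precise exponent in the eigenvalue lower bound for the NTK induced by quadratic ReLU (equivalently the arc-cosine kernel of order~1 times $\iprod{\vx}{\vy}$), since the $k^{-d-3}$ rate and the even/odd support pattern you quote are plausible but are not derived, and they are exactly what sets the final exponent in $s$.
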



Then, by plugging-in the finite-sum approximation theorem (Theorem~\ref{thm:qurelu_approx}) and the theorem of the capacity of RKHS (Theorem~\ref{lem:f_in_rkhs}) to the proof of Theorem~\ref{thm:exist-robust} and combining with the optimization theorem, we can get an overall theorem for the quadratic-ReLU network which is similar to Corollary~\ref{cor:exist-robust} but with explicit $\epsilon$ dependence:
\begin{cor}[Adversarial Training Finds a Network of Small Robust Train Loss for Quadratic-ReLU Network]\label{cor:exist-robust-qurelu}
Given data set on the unit sphere equipped with a compatible perturbation set function and an associated perturbation function $\cA$, which also takes value on the unit sphere. Suppose Assumption~\ref{asmpt:smoothness-loss}, \ref{asmpt:no_overlap} are satisfied. Let $C_{\mathcal{D}, \cB}''$ be a constant that only depends on the dataset $\mathcal{D}$ and perturbation $\cB$. Then for any $2$-layer quadratic-ReLU network with width $m =\Omega(\frac{C_{\mathcal{D}, \cB}''}{\epsilon^{d+1}}\log\frac{1}{\epsilon})$, if we run gradient descent with stepsize $\alpha=O(\epsilon)$ for $T = \Omega(\frac{\sqrt{m}}{\alpha})$ steps, then with probability $0.99$,
\begin{align}
    \min_{t = 1, \cdots, T}L_{\cA}(\vW_t) \leq \epsilon.
\end{align}
\end{cor}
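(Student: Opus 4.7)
The plan is to instantiate the template of Corollary~\ref{cor:exist-robust} with the explicit approximation machinery available for quadratic ReLU (Theorem~\ref{thm:qurelu_approx} and Lemma~\ref{lem:f_in_rkhs}), so that the width dependence on $\epsilon$ becomes concrete rather than implicit. Concretely, I would combine three pieces: (i) construct an even, smooth robust interpolant in $\cH(\ntk)$; (ii) approximate it by a finite random-feature sum lying on a near-initialization network of explicit radius; (iii) invoke the two-layer optimization Theorem~\ref{thm:converge_two_layer_projless}. Let $L$ denote the Lipschitz constant of $\ell$ and fix $\bar\epsilon = \epsilon/(3L)$.

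First, I would build a robust interpolant $g_1 \in \cH(\ntk)$. Assumption~\ref{asmpt:no_overlap} says the sets $S_i := \overline{\cB(\vx_i)} \cup (-\overline{\cB(\vx_i)})$ are disjoint across labels, so using a standard bump-function partition-of-unity construction on $\cS$ I can produce an even $C^\infty$ function $g_1 \colon \cS \to \R$ with $g_1(\vx) = y_i$ on a neighborhood of $S_i$ and with all derivatives up to order $s = \lceil (d+3)/2 \rceil$ bounded by a constant $\eta_{\cD,\cB}$ depending only on the data geometry and perturbation radius. Lemma~\ref{lem:f_in_rkhs} then places $g_1 \in \cH(\ntk)$ with $\|g_1\|_\cH \le C_d\,\eta_{\cD,\cB}$. (Evenness is the only extra requirement, ensured by symmetrizing.) In particular $g_1$ is Lipschitz with a constant depending only on $\cD,\cB$, so Theorem~\ref{thm:qurelu_approx} applies.

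Second, I would apply Theorem~\ref{thm:qurelu_approx} to $g_1$ at accuracy $\bar\epsilon$, obtaining coefficients $c_1,\ldots,c_{m/2}$ with $\sum_r \|c_r\|_2^2 = O(C'_{\cD,\cB}/m)$ such that $g_3(\vx) = \sum_r c_r^\top \vx\,\sigma'(\vw_r^\top \vx)$ satisfies $\|g_1 - g_3\|_{\infty,\cS} \le \bar\epsilon$, provided $m = \Omega(C''_{\cD,\cB}\epsilon^{-(d+1)}\log(1/\epsilon))$. Then, mimicking the construction in the proof of Theorem~\ref{thm:robust_exist}, I would set $\vw_r = \vw_{r0} + \sqrt{m/4}\,c_r$ and $\bar\vw_r = \bar\vw_{r0} - \sqrt{m/4}\,c_r$, so that the linear-in-$\vW$ term in the Taylor expansion of $f(\vW,\vx)$ at the symmetric initialization exactly reproduces $g_3$. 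The displacement satisfies
\begin{equation*}
\sum_r \bigl(\|\vw_r - \vw_{r0}\|_2^2 + \|\bar\vw_r - \bar\vw_{r0}\|_2^2\bigr) \;=\; \tfrac{m}{2}\sum_r \|c_r\|_2^2 \;=\; O(C'_{\cD,\cB}),
\end{equation*}
which is an $\epsilon$-\emph{independent} bound on $\|\vW - \vW_0\|_F$. Hence $\vW \in B(R_{\cD,\cB})$ for some constant $R_{\cD,\cB}$. For the quadratic ReLU, $\sigma'(z) = 2\ReLU{z}$ is Lipschitz, so the integral-form residual of the Taylor expansion is bounded by $O(R_{\cD,\cB}^2/\sqrt{m})$ exactly as in the proof of Theorem~\ref{thm:robust_exist}; choosing $m$ as above makes this $\le \bar\epsilon$. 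Together with $\|g_3 - g_1\|_{\infty,\cS} \le \bar\epsilon$ and $\|g_1(\vx) - y_i\|\le 0$ on $\cup_i S_i$, Lipschitzness of $\ell$ gives $L_*(\vW) \le 3L\bar\epsilon = \epsilon/2$. Thus $\vW_* \in B(R_{\cD,\cB})$ with $L_*(\vW_*) \le \epsilon/2$.

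Third, I would invoke Theorem~\ref{thm:converge_two_layer_projless} with radius $R_{\cD,\cB}$ and target accuracy $\epsilon/2$: taking $\alpha = O(\epsilon)$ and $T = \Theta(\sqrt{m}/\alpha)$, and $m = \Omega(R_{\cD,\cB}^4/\epsilon^2)$ (which is dominated by the $\Omega(\epsilon^{-(d+1)}\log(1/\epsilon))$ requirement), gradient descent yields $\min_{t \le T} L_\cA(\vW_t) - L_*(\vW_*) \le \epsilon/2$. Adding the two bounds gives $\min_{t \le T} L_\cA(\vW_t) \le \epsilon$, and a union bound over the random-feature failure event (probability $\le 0.005$) and the initialization/optimization event (probability $\le 0.005$) delivers overall probability at least $0.99$.

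The main obstacle is the first step: producing the even, $C^s$ robust interpolant with \emph{dataset-only} derivative bounds and then confirming that its RKHS norm enters only through an $\epsilon$-independent radius. The rest reuses machinery already developed in the paper, and for quadratic ReLU the residual control survives because $\sigma'$ is globally Lipschitz and $|\vw^\top\vx|$ is bounded on $\cS$ even though Assumption~\ref{asmpt:activation_additional} does not literally hold (the relevant boundedness is only needed on the compact set of evaluation points).
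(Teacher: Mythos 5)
Your proof is correct and follows precisely the route the paper intends: the paper only offers a one-sentence pointer (``plug Theorem~\ref{thm:qurelu_approx} and Lemma~\ref{lem:f_in_rkhs} into the proof of Theorem~\ref{thm:exist-robust} and combine with Theorem~\ref{thm:converge_two_layer_projless}''), and you fill in exactly those steps, including the bump-function construction of the even smooth interpolant that Assumption~\ref{asmpt:no_overlap} and Lemma~\ref{lem:f_in_rkhs} make possible, and the correct observation that Assumption~\ref{asmpt:activation_additional} is not needed because both Theorem~\ref{thm:qurelu_approx} and Theorem~\ref{thm:converge_two_layer_projless} only use Lipschitzness of $\sigma'$. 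Two minor bookkeeping notes: with $\bar\epsilon = \epsilon/(3L)$ you get $L_*(\vW_*) \le \epsilon$, not $\epsilon/2$ (pick $\bar\epsilon = \epsilon/(6L)$ to make the split clean), and the Taylor-residual bound $O(R_{\cD,\cB}^2/\sqrt m)$ must be re-derived from the $\ell_2$ bound $\sum_r\norm{c_r}_2^2 = O(C'_{\cD,\cB}/M)$ of Theorem~\ref{thm:qurelu_approx} rather than the pointwise bound $\norm{c_r}_2\le\rfnorm{g_2}/M$ used in the proof of Theorem~\ref{thm:robust_exist} — the calculation goes through but is not literally ``exactly as in'' that proof.
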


\section{Proof of Theorem~\ref{thm:vcd}}
\begin{proof}
We prove this theorem by an explicit construction of $\left[\frac{n}{2}\right]\times d$ data points that $\mathcal{F}$ is guaranteed to be able to shatter.
Consider the following data points
\begin{align*}
    \bfx_{i, j} = \mathbf{c}_{i} + \epsilon\mathbf{e}_{j} \textrm{ for } i \in \{1, \cdots, \left[\frac{n}{2}\right]\}, j \in [d],
\end{align*}
where $\mathbf{c}_{i} = (6i\delta, 0, \cdots, 0)^\top \in \mathbb{R}^{d}$, $\epsilon$ is a small constant, and $\mathbf{e}_{j} =  (0, \cdots, 1, \cdots, 0) \in \mathbb{R}^{d}$ is the $j$-th unit vector. For any labeling $y_{i, j} \in \{1, -1\}$, we let $P_i = \{j \in[d]:y_{i,j} = 1\}$, $N_i = \{j\in[d] :y_{i,j} = -1\}$, and let $\#P_i = k_i$. The idea is that for every cluster of points $\{\bfx_{i, j}\}_{j = 1}^{n}$, we use 2 disjoint balls with radius $\delta$ to separate the positive and negative data points. In fact, for every such cluster, if $P_i$ and $N_i$ are both non-empty, the hyperplane
\begin{align}\nonumber
\mathcal{M}_i = \{\bfx: (y_{i, 1}, \cdots, y_{i, d})\cdot(\bfx - \bfc_i) = 0\},
\end{align}
clearly separates the points into $\{\bfx_{i, j} :j \in P_i\}$ and $\{\bfx_{i, j} : j \in N_i\}$. Then we can see easily that there exists a $\gamma_{k_i} > 0$ such that for any $r > \gamma_{k_i}\epsilon$, there exist two Euclidean balls $\mathcal{B}_{r}(\bfx_{i,1}'), \mathcal{B}_{r}(\bfx_{i,2}')$ in $\mathbb{R}^{d}$ with radius $r$, such that they contain the set $\{\bfx_{i, j} :j \in P_i\}$ and $\{\bfx_{i, j} : j \in N_i\}$ respectively, and that $\mathcal{B}_{r}(\bfx_{i,1}')$ and $  \mathcal{B}_{r}(\bfx_{i,2}')$ are also separated by $\mathcal{M}_{i}$. Therefore, as long as we take
\begin{align}\nonumber
\epsilon < \delta\max\left(\frac{1}{\gamma_1}, \cdots, \frac{1}{\gamma_{d-1}}, 1\right),
\end{align}
we can always put $r = \delta$. This also holds in the case that $P_i$ or $N_i$ is empty, where we can simply put one ball centered at $\bfx_{i,1} = \bfc_i$ and put $\cB_r(\bfx_{i,2})$ anywhere far away so that it is disjoint from the other balls. Recall that we have chosen $\norm{\mathbf{c}_i - \mathbf{c}_{i'}}_2 \ge 6\delta$ for $i \neq i'$. Such balls $\cB_r(\bfx_{i,l}): i\in\{1, \cdots, \left[\frac{n}{2}\right]\}, l\in\{1,2\},$ are disjoint since $\epsilon \le \delta$, and $\norm{\bfx_{i,l}' - \mathbf{c}_i}_2 \le 2\delta$ for $l = 1, 2$. In this way, since $\mathcal{F}$ is an $n$-robust interpolation class, we can use the fact that there exists a function $f \in \mathcal{F}$ such that for any $i \in \{1, \cdots, \left[\frac{n}{2}\right]\}$, $f(\bfx) = 1$ for $\bfx \in \mathcal{B}_{r}(\bfx_{i,1})$ and $f(\bfx) = -1$ for $\bfx \in \mathcal{B}_{r}(\bfx_{i,2})$. In this way, $f(\bfx_{i, j}) = y_{i, j}$ holds for all $i, j$. Since the labels $y_{i, j}$ can be picked at will, by the definition of the VC-dimension, we know that the VC-dimension of $\mathcal{F}$ is always at least $\left[\frac{n}{2}\right]\times d$.
\end{proof}

\end{document}